\title{Geometric Kolmogorov-Arnold Superposition Theorem}
\author{%
Francesco Alesiani$^{*}$ \quad
Takashi Maruyama$^{*}$ \quad
Henrik Christiansen \quad
Viktor Zaverkin \\
NEC Laboratories Europe, \\ Heidelberg, Germany\\
}
\theoremstyle{plain}
\newtheorem{theorem}{Theorem}[section]
\newtheorem{lemma}[theorem]{Lemma}
\newtheorem{corollary}[theorem]{Corollary}
\theoremstyle{definition}
\theoremstyle{plain}
\newtheorem{remark}[theorem]{Remark}
\definecolor{captiongray}{HTML}{555555}
\newcolumntype{"}{@{\hskip\tabcolsep\vrule width 1pt\hskip\tabcolsep}}
\def\eqref#1{equation~\ref{#1}}
\def\1{\bm{1}}
\DeclareMathAlphabet{\mathsfit}{\encodingdefault}{\sfdefault}{m}{sl}
\SetMathAlphabet{\mathsfit}{bold}{\encodingdefault}{\sfdefault}{bx}{n}
\newcommand{\R}{\mathbb{R}}
\newcommand{\NN}{\mathbb{N}}
\newcommand{\RR}{\mathbb{R}}
\newcommand{\bb}{\boldsymbol{b}}
\newcommand{\bA}{\boldsymbol{A}}
\newcommand{\bx}{\boldsymbol{x}}
\newcommand{\bX}{\boldsymbol{X}}
\newcolumntype{Y}[1]{>{\Centering\hspace{0pt}\hsize=#1\hsize}X}
\definecolor{codegreen}{rgb}{0,0.6,0}
\definecolor{codegray}{rgb}{0.5,0.5,0.5}
\definecolor{codepurple}{rgb}{0.58,0,0.82}
\definecolor{backcolour}{rgb}{0.95,0.95,0.92}
\lstdefinestyle{mystyle}{
    backgroundcolor=\color{backcolour},   
    commentstyle=\color{codegreen},
    keywordstyle=\color{magenta},
    numberstyle=\tiny\color{codegray},
    stringstyle=\color{codepurple},
    basicstyle=\ttfamily\footnotesize,
    breakatwhitespace=false,         
    breaklines=true,                 
    captionpos=b,                    
    keepspaces=true,                 
    numbers=left,                    
    numbersep=5pt,                  
    showspaces=false,                
    showstringspaces=false,
    showtabs=false,                  
    tabsize=2
}
\begin{document}

\maketitle

\newcommand{\tm}[1]{{{\textcolor{red}{[TM: #1]}}}}
\newcommand{\fa}[1]{{{\textcolor{blue}{[FA: #1]}}}}

\newcommand{\method}{\textsc{GKSN}}

\begin{abstract}
The Kolmogorov-Arnold Theorem (KAT), or more generally, the Kolmogorov Superposition Theorem (KST), establishes that any non-linear multivariate function can be exactly represented as a finite superposition of non-linear univariate functions. Unlike the universal approximation theorem, which provides only an approximate representation without guaranteeing a fixed network size, KST offers a theoretically exact decomposition. The Kolmogorov-Arnold Network (KAN) was introduced as a trainable model to implement KAT, and recent advancements have adapted KAN using concepts from modern neural networks. However, KAN struggles to effectively model physical systems that require inherent equivariance or invariance geometric symmetries as 
$E(3)$ 
transformations, a key property for many scientific and engineering applications. In this work, we propose a novel extension of KAT and KAN to incorporate equivariance and invariance over 
various 
group actions,
including $O(n)$, $O(1,n)$, $S_n$ and general $GL$, 
enabling accurate and efficient modeling of these systems. Our approach provides a unified approach that bridges the gap between mathematical theory and practical architectures for physical systems, expanding the applicability of KAN to a broader class of problems. We provide experimental validation on molecular dynamical systems and particle physics. 
\end{abstract}

\section{Introduction}
Kolmogorov Arnold Networks (KANs) \cite{Liu_Wang_Vaidya_2024} 
have recently risen to the interest of the machine learning community as an alternative to the well-consolidated Multi-Layer Perceptrons (MLPs) \cite{hornik1989multilayer}. MLPs have transformed machine learning for their ability to approximate arbitrary functions and have demonstrated their expressive power, theoretically guaranteed by the universal approximation theorem \cite{hornik1989multilayer}, in countless applications.  
The Kolmogorov-Arnold Theorem (KAT), developed to solve Hilbert’s 13th problem \cite{kolmogorov1961representation}, is a powerful and foundational mathematical result.
While the universal approximation theorem states that any function can be approximated with an MLP function of bounded dimension, KAT provides a way to exactly and with a finite and known number of univariate functions to represent any multivariate function. KAT has found multiple applications in mathematics \cite{laczkovich2021superposition}, fuzzy logic  \cite{kreinovichNORMALFORMSFUZZY1996}, pattern recognition \cite{koppen2002training}, and neural networks \cite{kuurkova1992kolmogorov,liu2024kan}. 

We have recently witnessed the flourishing of extensions of the use of KAN as a substitute for MLP \cite{Ji_Hou_Zhang_2024}, either as a plug-in replacement of MLP \cite{xu2024kaneffectiveidentifyingtracking, decarlo2024kolmogorovarnoldgraphneuralnetworks}, 
as a surrogate function for solving or approximating partial differentiable equations (PDE)
\cite{abueidda2024deepokandeepoperatornetwork,wang2024kolmogorovarnoldinformedneural,shuai2024physicsinformedkolmogorovarnoldnetworkspower}. Further KANs have been extended by exploring alternative basis such as Chebychev polynomials
\cite{ss2024chebyshevpolynomialbasedkolmogorovarnoldnetworks,mostajeran2024epickanselastoplasticityinformedkolmogorovarnold}, wavelet functions \cite{bozorgasl2024wavkanwaveletkolmogorovarnoldnetworks}, Fourier series \cite{xu2024fourierkangcffourierkolmogorovarnoldnetwork}, or alternative representations \cite{Guilhoto_Perdikaris_2024}. 

In applications to scientific computing, key physical symmetries need to be modeled \cite{Finzi2021,goodman2009symmetry,noether}. For example, the invariance to translations, rotations, and reflections (i.e. $E(3)$ group) of energies, as relevant for interatomic potentials in chemistry
~\cite{Zaverkin_Alesiani_Maruyama_Errica_Christiansen_Takamoto_Weber_Niepert_2024}. 
Other notable examples include fluid dynamics, partial differentiable equations (PDEs), astrophysics, material science, and biology. 
While MLP-based architectures have been widely explored \cite{schutt2017schnet,Batatia_Kovács_Simm_Ortner_Csányi_2023,Satorras_Hoogeboom_Welling_2022,Liao_Smidt_2023,Zaverkin_Alesiani_Maruyama_Errica_Christiansen_Takamoto_Weber_Niepert_2024}, it is not clear how to model a physical system with KAN-based architectures, especially since KAN models have shown potential to overcome the curse of dimensionality \cite{lai2021kolmogorov,poggio2022deep}. 

\vspace{2mm}
Our contributions are : 
\begin{itemize*}
\item to extend KAN to include $O(n)$ symmetries, thus been able to represent $O(n)$ invariant and equivariant functions (\cref{sec:geo}). We further extend the results to include the permutation invariance with respect to input data, which reduces the parameter count of the network and improves generalization. 

\item After providing the theoretical justification, we present practical architectures (\cref{sec:arch}) and analyze their performances with scientifically inspired experiments. We analyze the learning capability of the proposed KAN model for an idealized model (\cref{sec:lj}), which allows us to simulate multiple particles in multiple dimensions. 

\item To further analyze the learning capability of the proposed model, we experiment on real datasets for material design, the MD17 (\cref{sec:md17}) and MD22 (\cref{sec:md22}); but also particle physics with Top-tagging (\cref{sec:top-tagging}) and Quark-gluon tagging (\cref{sec:quark-tagging}). 

\item Extensive formal theorems and proofs are provided in the supplementary material (\cref{annex:KAT}) to support our claims summarized in \cref{tab:kst}.
\end{itemize*}

\section{Related Works}
\paragraph{Symmetry preserving machine learning architecture} 
Machine learning interatomic potentials (MLIPs) 
have emerged as powerful tools for modeling interatomic interactions in molecular and materials systems, offering a computationally efficient alternative to traditional ab initio methods. Architectures like Schnet \cite{schutt2017schnet} use continuous-filter convolutional layers to capture local atomic environments and message passing, enabling accurate predictions of molecular properties. To further enhance physical expressivity, $E(3)$-equivariant architectures \cite{thomas2018tensor} have been developed, which respect the symmetries of Euclidean space (rotations, translations, and reflections) by design. 
These models
ensure that predictions of energies and forces are invariant, respectively equivariant, to group actions of E(3),
making them highly data-efficient for tasks like force field prediction in molecular dynamics. 
Equivariant or invariant architectures enhance data efficiency, accuracy, and physical consistency in tasks where input symmetries (e.g., rotation, reflection, translation) dictate output invariance or equivariance.
Symmetry-preserving architecture for the Lorentz group have been proposed architecture based on high-order tensor products as LoLa \cite{butter2018deep}, LBN \cite{erdmann2019lorentz} LGN \cite{bogatskiy2020lorentz}, and LorentzNet \cite{gong2022efficient}, which introduce Minkowski dot product attention. Finally, permutation preserving models have been proposed to model function over sets, as DeepSet and subsequent models \cite{zaheer2017deep,amir2023neural}.
The advantage of KAN architecture has not yet been explored, we thus take a fundamental step in this direction with our study. 
\vspace{-1mm}
\paragraph{KAN Architectures}
Kolmogorov-Arnold Networks (KANs) are inspired by the Kolmogorov-Arnold representation theorem, which provides a theoretical foundation for approximating multivariate functions using univariate functions and addition. Early work by Hecht-Nielsen (1987) \cite{hecht1987kolmogorov} introduced one of the first neural network architectures based on this theorem, demonstrating its potential for efficient function approximation. 
\cite{lai2021kolmogorov} study the approximation capability of KST-based models in high dimensions and how they could potentially break the curse of dimension \cite{poggio2022deep}. 
\cite{ferdausKANICEKolmogorovArnoldNetworks2024} propose to combine  Convolutional Neural Networks (CNNs) with Kolmogorov Arnold Network (KAN) principles.
Additionally, 
\cite{yangKolmogorovArnoldTransformer2024}
explored the integration of KAN principles into transformer models, achieving improvements in efficiency for sequence modeling tasks. 
\cite{huEKANEquivariantKolmogorovArnold2024a} propose EKAN, an approximation method for incorporating matrix group equivariance into KANs. While these studies highlight the versatility of KAN architectures in adapting to various neural network frameworks, the extension to physical and geometrical symmetries has not been fully considered.

\paragraph{Theoretical Work on KAN}
The theoretical foundations of Kolmogorov–Arnold Networks (KANs) are rooted in the Kolmogorov–Arnold representation theorem, established by Andrey Kolmogorov  \citet{kolmogorov1957representation} and later refined by Vladimir Arnold \citet{arnold1959functions}. 
Building upon this foundation, David Sprecher \citet{sprecher1965structure} and George Lorentz \citet{lorentz1976approximation} provided constructive algorithms to implement the theorem, enhancing its applicability in computational contexts. 
Recent theoretical advancements have addressed challenges in training KANs, such as non-smooth optimization landscapes. Researchers have proposed various techniques to improve the stability and convergence of KAN training, including regularization methods \cite{Braun2009constructive} like dropout and weight decay, as well as optimization strategies involving adaptive learning rates, while \cite{igelnik2003kolmogorov} have proposed using cubic spline as activation and internal function for efficient approximation.  
These contributions have been instrumental in bridging the gap between the mathematical foundations of KANs and their practical implementation in machine learning.
However, training with energies requires fitting highly non-linear functions. In this work, we demonstrate how extending the KAN architecture enhances the learning capacity of KAT-based models.

\section{Background}
\paragraph{Equivariance and invariance} We call a function $\phi: X \to Y$ {\it equivariant} or {\it invariant}, if given a set of transformation $T^X_g$ on $X$, the input space, for a given element $g$ of action group $G$, there exists an associated transformation $T^Y_g: Y \to Y$ on the output space $Y$, such that 
\begin{align}\label{eq:equivariance}
\underbrace{ \phi(T^X_g(\bm{x})) = T^Y_g(\phi(\bm{x}))}_{\text{equivariant}}, ~~~\text{or}~~~ \underbrace{ \phi(T^X_g(\bm{x})) = \phi(\bm{x})}_{\text{invariant}}.
\end{align}
An example of $\phi$ is a non-linear function of a multivariate variable $\bm{x}=(\bm{x}_1, \dots, \bm{x}_m) \in \mathbb{R}^{m \times n}$ representing a point cloud with $m$ points, where each point lives in an $n$-dimensional space $\bm{x}_i \in \mathbb{R}^n$, $\phi(\bm{x})=\bm{y} \in \mathbb{R}^{m \times n}$ the transformed points, with $T_g$ a translation of the input $T^X_g(\bm{y}) = \bm{x} + \bm{g}$ and $T^Y_g$ an associated translation in the output domain $T^Y_g(\bm{y}) = \bm{y} + \bm{g}$. When $\phi$ is equivariant with respect to the action of $G$, then first applying the translation in the input domain and then applying $\phi$, is equivalent to first applying $\phi$ and then translating for the same amount $g$, in the target domain. When $\phi$ is invariant with respect to $G$, then applying the translation or not, results in the same output $\phi(\bm{x}+\bm{g})=\phi(\bm{x})=\bm{y}$. In this work, we consider three types of symmetries, i.e. invariance and equivariance:
\begin{itemize}
    \item {\it translation symmetry}: $\phi(\bm{x}+\bm{g})=\phi(\bm{x})$ for the invariance and $\phi(\bm{x}+\bm{g})=\phi(\bm{x})+\bm{g}$ for equivariance, with $\bm{g} \in \mathbb{R}^n$ and where $\bm{x}+\bm{g}$ refers to the element-wise operation $(\bm{x}_1 +\bm{g}, \dots, \bm{x}_m + \bm{g})$;
    \item {\it rotation and reflection symmetry}: given an orthogonal matrix $\bm{Q} \in \mathbb{R}^{n \times n}$,  $\phi$ is invariant or equivariant if $\phi(\bm{Q}\bm{x})=\phi(\bm{x})$ or $\phi(\bm{Q}\bm{x})=\bm{Q}\phi(\bm{x})$, and where $\bm{Q}\bm{x}$ refers to the element-wise operation $(\bm{Q}\bm{x}_1 , \dots, \bm{Q}\bm{x}_m)$;  
    \item {\it permutation symmetry}: $\phi$ is invariant or equivariant, if $\phi(\bm{x}_1 , \dots, \bm{x}_m)=\phi(\bm{x}_{\pi_1} , \dots, \bm{x}_{\pi_m})$ and $\phi(\pi(\bm{x}))=\pi(\phi(\bm{x}))$, for any permutation $\pi: [m] \to [m]$, where $\pi(\bm{x})=\bm{x}_{\pi_1} , \dots, \bm{x}_{\pi_m}$ .
\end{itemize}

We extend KAT in \autoref{sec:geo} to functions that exhibit these symmetries.

\paragraph{First Fundamental Theorem (FFT) of $GL(V)$} 
According to the First Fundamental Theorem \cite{kraftCLASSICALINVARIANTTHEORYa}, the ring of invariant polynomial functions can be generated by the invariants of the symmetry group. \cite{villarScalarsAreUniversal2023} shows how the FFT can be used to represent, among others, $O(n)$ and $O(1,n)$ invariant functions and their use in MLP.  
The FFT states that the ring of invariants for the action of $\mathrm{GL}(V)$ on $V^p \oplus V^{*q}$ is generated by the invariants $(i \mid j)$:
$K[V^p \oplus V^{*q}]^{\mathrm{GL}(V)} = K[(i \mid j) \mid i = 1, \ldots, p,\, j = 1, \ldots, q].$ \cref{annex:KAT} contains additional information.

\paragraph{Kolmogorov superposition theorem (KST)}
The Kolmogorov-Arnold representation theorem (KAT), proposed by  \citet{kolmogorov1961representation}, provides a powerful theoretical tool to represent a multivariate function $f(x_1,\dots,x_m)$ as the composition of functions of a single variable. The original form of KAT states that a given  continuous function $f: [0,1]^m \to \mathbb{R}$ can be represented exactly as 
\begin{align}\label{eq:kst}
f(x_1,\dots,x_m) = \sum_{q=1}^{2m+1} \psi_q(\sum_{p=1}^{m} \phi_{qp}(x_p))
\end{align}
with $\psi_q: \mathbb{R} \to \mathbb{R}$ and $\phi_{qp}: [0,1] \to \mathbb{R}$ uni-variate continuous functions. 

\begin{table*}
\centering
\caption{
Kolmogorov superposition formulas \cite{Guilhoto_Perdikaris_2024} for a continuous function $f(x_1, \dots, x_d)$ or $f(\bm{x}_1, \dots, \bm{x}_m)$ and their complexity in terms of parameters. $\langle \bm{x}_i,\bm{y}_j \rangle$ is either Euclidean or Mikowski scalar product, while $( \bm{x}_i | \bm{y}_j )$ is the $GL(n)$ invariant as defined in \cref{annex:KAT}.
}

\label{tab:kst}
\centering
\resizebox{\linewidth}{!}{%
\begin{tabular}{cccccc}
\toprule
    \textbf{Version} & \textbf{Formula} & \textbf{\makecell{Inner \\ Functions}} & \textbf{\makecell{Outer \\ Functions}} & \textbf{\makecell{Other \\ Parameters}} & \textbf{\makecell{Symmetry \\ Group}}
    \\ \midrule
    
    \makecell{Kolmogorov \\ (1957)} & $\sum_{q=1}^{2m+1}\psi_q\left( \sum_{p=1}^m \phi_{q,p}(x_p) \right)$ & $(2m+1)m$ & $2m+1$  & N/A & - \\ \hline

    \makecell{Ostrand \\ (1965)} & $\sum_{q=1}^{2mn+1}\psi_q\left( \sum_{p=1}^d \phi_{q,p}(\bm{x}_p) \right)$ & $(2nm+1)m$ & $2mn+1$  & N/A & - \\ \hline

    \makecell{Lorentz \\ (1962)} & $\sum_{q=1}^{2m+1} \psi \left( \sum_{p=1}^m \lambda_p \phi_q(x_p) \right) $ & $2m+1$ & $1$  & $\lambda\in\mathbb{R}^{m} $ & -\\ \hline
    
    \makecell{Sprecher \\ (1965)} & $\sum_{q=1}^{2m+1} \psi_q\left( \sum_{p=1}^m \lambda_p \phi(x_p+qa) \right) $ & $1$ & $2m+1$  & $a\in\mathbb{R},\  \lambda\in\mathbb{R}^d$  & - \\ \hline
        
    \makecell{Kurkova \\ (1991)} & $\sum_{q=1}^{N} \psi \left( \sum_{p=1}^m w_{pq} \phi_q(x_p) \right) $ & $2m+1 \le N$ & $1$  & $w \in \mathbb{R}^{m\times N}$ & - \\ \hline

    \makecell{Laczkovich \\ (2021)} & $\sum_{q=1}^{N} \psi \left( \sum_{p=1}^d \lambda_{pq} \phi_q(x_p) \right) $ & $N$ & $1$  & $\lambda \in\mathbb{R}^{m\times N}$ & - \\ \hline

    \makecell{\textbf{This work} \\ \cref{th:on_v1}} & $
\sum_{q=1}^{2m^2+1} \psi_q \left(\sum_{i=1,j=1}^{m,m} \phi_{qij}(\langle \bm{x}_i,\bm{x}_j \rangle) \right)    
    $ &$ (2m^2+1) m^2$ & $2m^2+1$ & N/A & $O(n)$,$O(1,n)$\\ \hline

    \makecell{\textbf{This work} \\ \cref{th:on_v2}} & 
    \makecell{
    $ \sum_{q=1}^{2mn+1} \psi_q \left( \sum_{i=1,j=1}^{m,n} \phi_{qij}(\langle \bm{x}_i,\bm{y}_j \rangle)  \right. $ \\ 
    $ \left. +\sum_{i=1,j=1}^{n,n} \phi'_{qij} (\langle \bm{y}_i, \bm{y}_j\rangle)
\right)$ } &  \makecell{ $(2mn+1) \times $ \\ $(mn+n^2)$} & $2mn+1$ & N/A & $O(n)$,$O(1,n)$ \\ \hline

    \makecell{\textbf{This work} \\ \cref{th:on_v3}} & $
    \sum_{q=1}^{2mn+1} \psi_q \left( \sum_{i=1,j=1}^{m,n} \phi_{qij}(\langle \bm{x}_i,\bm{x}_j \rangle) \right)    
    $ &$(2mn+1)mn$ & $2mn+1$ & N/A & $O(n)$,$O(1,n)$ \\ \hline

    \makecell{\textbf{This work} \\ \cref{lm:prem-v2}} & $
    F \left( \sum_{i=1}^{m} \phi_{1}({x}_i), \dots, \sum_{i=1}^{m} \phi_{2m+1}({x}_i) \right)    
    $ &$M=(2m+1)$ & $(2M+1)M$ & N/A & $S_n$ \\ \hline

    \makecell{\textbf{This work} \\ \cref{th:gl_v1}} & $
\sum_{q=1}^{2m^2+1} \psi_q \left(\sum_{i=1,j=1}^{m,m} \phi_{qij}(  ( \bm{x}_i | \bm{x}_j ) ) \right)    
    $ &$ (2m^2+1) m^2$ & $2m^2+1$ & N/A &  
    \makecell{$GL(n)$ over \\  polynomial ring }
      \\     
    
    \bottomrule
\end{tabular}
}
\end{table*}

\paragraph{Ostrand superposition theorem (OST)
}
In 1965, \citet{ostrandDIMENSIONMETRICSPACESa} proposed an extension of the original KAT to input compact domains. The theorem states that, given $X^p$ compact metric spaces of finite dimension $d_p=|X^p|$, such that $\sum_{p=1}^m d_p=M$, a continuous function $f: \prod_{p=1}^m X^p \to \mathbb{R}$ is representable in the form 
\begin{align} \label{ostrand}
f(\bm{x}_1,\dots,\bm{x}_m) = \sum_{q=1}^{2M+1} \psi_q(\sum_{p=1}^{m} \phi_{qp}(\bm{x}_p))    
\end{align}
with $\bm{x}_p \in X^p$, and $\phi_{qp}: X^p \to \mathbb{R}$ continuous functions. When $d_p=n, \forall p$, then $M=nm$. The difference between KAT and OST, is that the building functions $\phi_{qp}$ in OST are not defined on scalars (not any more uni-variate), but defined over arbitrary compact spaces $X^p$ (thus multi-variate).

While the original formulation has been criticized \cite{Girosi1989}, other versions of the original superposition theorem have been proposed to counter-argument the smoothness and efficiency of the representation \cite{Kourkova1991}. \autoref{tab:kst} summarizes the various versions of the KAT 
\cite{kolmogorov1957representation,braun2009application,Kourkova1991,kuurkova1992kolmogorov,laczkovich2021superposition,sprecher1963dissertation,sprecher1996numerical}.

\section{Geometric Kolmogorov Superposition Theorem}
\label{sec:geo}
We want to extend the KST to invariant functions to action $g \in O(n)$ or $g \in O(1,n)$. 
While the original KST already tells us that we can represent the original function as the superposition of univariate functions \autoref{eq:kst}, which requires a total of $(mn+1)(2mn+1)$ univariate functions, we would like to have a better form of this representation. OST teaches us that we only need $(m+1)(2mn+1)$ functions to represent a multivariate function on $(\mathbb{R}^n)^m$ and these functions take values from $\mathbb{R}^n \to \mathbb{R}$, therefore they are not univariate.
However, we claim that we can represent a generic invariant function $f(\bm{x})$ using only univariate functions, as  
\begin{align} \label{eq:inv_v1}
f(\bm{x}_1,\dots,\bm{x}_m) = \sum_{q=1}^{2m^2+1} \psi_q \left(\sum_{i=1,j=1}^{m,m} \phi_{qij}(\langle \bm{x}_i,\bm{x}_j \rangle) \right),    
\end{align}
more formally stated and proved in \autoref{th:on_v1}, the results is intuitive given that 
$\langle \bm{x}_i,\bm{x}_j \rangle_{i,j=1}^{n}$, with $\langle \bm{x}_i,\bm{x}_j \rangle$ either the Euclidean or Mikowski scalar product,  
represent a complete set of invariant features \cite{villarScalarsAreUniversal2023}. Unfortunately, this form is $m^4$ in the number of nodes. In \autoref{th:on_v2}, we provided an improved version of the geometric KST that grows $m^2$ with the number of nodes, since it only uses a linear number of invariant features. Indeed, if we select $\bm{y}_j = \alpha_j (\bm{x}_1,\dots,\bm{x}_m)$ a linear combination of the inputs such that they span the full space $\mathbb{R}^n$:  
\begin{align} 
&f(\bm{x}_1,\dots, \bm{x}_m) = 
&\hspace{-1mm}\sum_{q=1}^{2mn+1} \psi_q \left( \sum_{\substack{1 \leq i \leq m, \\ 1 \leq j \leq n}} \phi_{qij}(\langle \bm{x}_i,\bm{y}_j \rangle) \right. \nonumber 
\left. +  \sum_{\substack{1 \leq i \leq n, \\ 1 \leq j \leq n}} \phi'_{qij} (\langle \bm{y}_i,\bm{y}_j\rangle)
\right),
\end{align}
in which $\langle \bm{x}_p,\bm{y}_j \rangle_{j=1}^{n}= \{ \langle \bm{x}_p,\bm{y}_1 \rangle \dots \langle \bm{x}_p,\bm{y}_n \rangle \}$. While the formal statement and proof are given in \autoref{th:on_v2}, the intuition is that we can project the input on the vectors $\bm{y}_j$. Since these vectors, built as linear combinations of the input, do not form an orthonormal basis, we need the information of their inner product $\langle \bm{y}_i,\bm{y}_j\rangle$ to reconstruct the invariant features $\langle \bm{x}_i,\bm{x}_j \rangle$. 
If we further restrict the vectors $\bm{y}_j$ to be a fixed subset of the input features, we have that \autoref{th:on_v3}, 
\begin{align} 
\label{eq:inv_v3}
f(\bm{x}_1,\dots,\bm{x}_m) = \sum_{q=1}^{2mn+1} \psi_q \left( \sum_{i=1,j=1}^{m,n} \phi_{qij}(\langle \bm{x}_i,\bm{x}_j \rangle)
\right),    
\end{align}
which further reduces the need for the additional $n^2$ invariant features. We formalize this last result, but a more extensive theoretical derivation is provided in \cref{sec:theory}:
\begin{tcolorbox}[title=$O(n)$ invariance - v3]
\begin{corollary}
\label{th:on_v3-main}
Suppose that  $\operatorname{span}(\{\bm{x}_j\}_{j=1}^n)=\mathbb{R}^n$. Then, a continuous function invariant to the action of $O(n)$ $f(\bm{x}_1,\dots,\bm{x}_m): X^m \to \mathbb{R}$, with $X \subset \mathbb{R}^n$ a compact space, can be represented as $f(\bm{x}_1,\dots,\bm{x}_m) = \sum_{q=1}^{2mn+1} \psi_q \left( \sum_{i=1,j=1}^{m,n} \phi_{qij}(\langle \bm{x}_i,\bm{x}_j \rangle)
\right).$
\end{corollary}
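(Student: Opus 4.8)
The plan is to obtain \cref{th:on_v3-main} as a specialization of \cref{th:on_v2}, exploiting the freedom in the choice of the spanning vectors $\bm{y}_j$. In \cref{th:on_v2} the vectors $\bm{y}_1,\dots,\bm{y}_n$ are allowed to be any linear combinations of the inputs that span $\mathbb{R}^n$; the hypothesis $\operatorname{span}(\{\bm{x}_j\}_{j=1}^n)=\mathbb{R}^n$ lets me simply take $\bm{y}_j=\bm{x}_j$ for $j=1,\dots,n$, which is an admissible (trivial) linear combination and spans $\mathbb{R}^n$ by assumption. With this choice every feature appearing in \cref{th:on_v2} collapses onto the single collection $\{\langle \bm{x}_i,\bm{x}_j\rangle : 1\le i\le m,\ 1\le j\le n\}$: the cross terms become $\langle \bm{x}_i,\bm{y}_j\rangle=\langle \bm{x}_i,\bm{x}_j\rangle$, while the auxiliary terms $\langle \bm{y}_i,\bm{y}_j\rangle=\langle \bm{x}_i,\bm{x}_j\rangle$ with $i,j\le n$ form a sub-collection of the former (as $[n]\subseteq[m]$).

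The second step is to merge the two inner sums into one. Since, for $i,j\le n$, both $\phi_{qij}$ and $\phi'_{qij}$ are univariate functions evaluated at the same scalar $\langle \bm{x}_i,\bm{x}_j\rangle$, I replace the pair by their pointwise sum $\phi_{qij}+\phi'_{qij}$, a single univariate continuous function of that argument, and leave the remaining $\phi_{qij}$ (with $i\in[m]\setminus[n]$) untouched. After relabeling, the argument of each outer function $\psi_q$ is exactly $\sum_{i=1,j=1}^{m,n}\phi_{qij}(\langle\bm{x}_i,\bm{x}_j\rangle)$, and the number of outer functions is unchanged at $2mn+1$, which is precisely the claimed form.

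Alternatively, the statement can be proved self-contained from the invariant-theory ingredients already in play. By the First Fundamental Theorem for $O(n)$ and the completeness of scalars of \cite{villarScalarsAreUniversal2023}, any continuous $O(n)$-invariant $f$ factors as $f(\bm{x})=g(\Gamma)$ through the full Gram matrix $\Gamma_{ik}=\langle\bm{x}_i,\bm{x}_k\rangle$, with $g$ continuous. The span hypothesis makes the principal block $B=[\langle\bm{x}_i,\bm{x}_j\rangle]_{i,j=1}^{n}$ invertible, so, writing $\bm{b}_i=(\langle\bm{x}_i,\bm{x}_1\rangle,\dots,\langle\bm{x}_i,\bm{x}_n\rangle)^\top$, one recovers every entry via $\langle\bm{x}_i,\bm{x}_k\rangle=\bm{b}_i^\top B^{-1}\bm{b}_k$. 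Hence $f$ equals a continuous function $h$ of the $mn$ reduced features alone, to which the classical KST in $mn$ scalar variables applies, producing the $2mn+1$ outer functions with $mn$ inner functions each.

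I expect the only genuine obstacle to lie in the self-contained route: the reconstruction map $\Gamma\mapsto B^{-1}$ is continuous only on the locus where $B$ is nonsingular, which is not compact, whereas KST requires a compact domain. Resolving this cleanly means either restricting to a compact sublevel set such as $\{\sigma_{\min}(B)\ge\epsilon\}$, on which $h$ is uniformly continuous, or extending $h$ by the Tietze theorem to the compact closure of the reduced-feature image before invoking KST. The reduction to \cref{th:on_v2} is attractive precisely because it inherits this compactness bookkeeping from the already-established theorem, so I would present that route as the main argument and relegate the invariant-theory derivation to a remark.
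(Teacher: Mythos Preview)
Your proposal is correct, and both of your routes align with how the paper handles it. The paper's one-line justification is to rerun the proof of \cref{th:on_v2} ``except applying \autoref{th:subset} instead of \autoref{th:corr}'': \autoref{th:subset} is precisely the Gram-reconstruction statement that when $\bm{y}_j=\bm{x}_j$ the reduced block $\{\langle\bm{x}_i,\bm{x}_j\rangle\}_{i\le m,\,j\le n}$ determines the full Gram matrix (proved via the pseudo-inverse identity $\bm{X}\bm{Y}^T(\bm{Y}\bm{Y}^T)^\dagger\bm{Y}\bm{X}^T=\bm{X}\bm{X}^T$ in \autoref{th:A14}), which is exactly your self-contained route~2 formula $\bm{b}_i^\top B^{-1}\bm{b}_k$.

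Your route~1 (specialize \cref{th:on_v2} at $\bm{y}_j=\bm{x}_j$ and absorb $\phi'_{qij}$ into $\phi_{qij}$ by pointwise addition) is a slightly cleaner packaging than the paper's, since it inherits the bookkeeping from \cref{th:on_v2} rather than re-invoking the invariant-theory chain; but it is the same argument in substance. The compactness/continuity caveat you flag for route~2 is real and is not addressed in the paper either; your suggestion to lean on route~1 for the formal statement and mention the direct reconstruction as a remark is sound.
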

\end{tcolorbox}
\paragraph {Equivariant $O(n)$ or $O(1,n)$ functions} While in the supplementary material (\autoref{annex:eqivariance}), we discuss the equivariant version of these results, we can build equivariant functions, from invariant functions \cite{villarScalarsAreUniversal2023}, as
\begin{align*}
f(\bm{x}_1,\dots,\bm{x}_m) = \sum_{l=1}^m f_l (\bm{x}_1,\dots,\bm{x}_m) \bm{x}_l   
\end{align*}
with $f_l (\bm{x}_1,\dots,\bm{x}_m)$ invariant functions. Further, we can use the gradient of a geometric invariant function to build equivariant representations
\begin{align*}
f(\bm{x}_1,\dots,\bm{x}_m) = \sum_{l=1}^m \nabla_{\bm{x}_l} f_l (\bm{x}_1,\dots,\bm{x}_m)
\end{align*}
\paragraph{Translation and permutation symmetry} Translation symmetry is obtained by removing the mean of the coordinate from the input, while the permutation invariant \autoref{annex:on_perm_in} is obtained by imposing the univariate function to not depend on the node index. 

\begin{figure*}[t!]
    \centering    
    \includegraphics[width=0.8\linewidth]{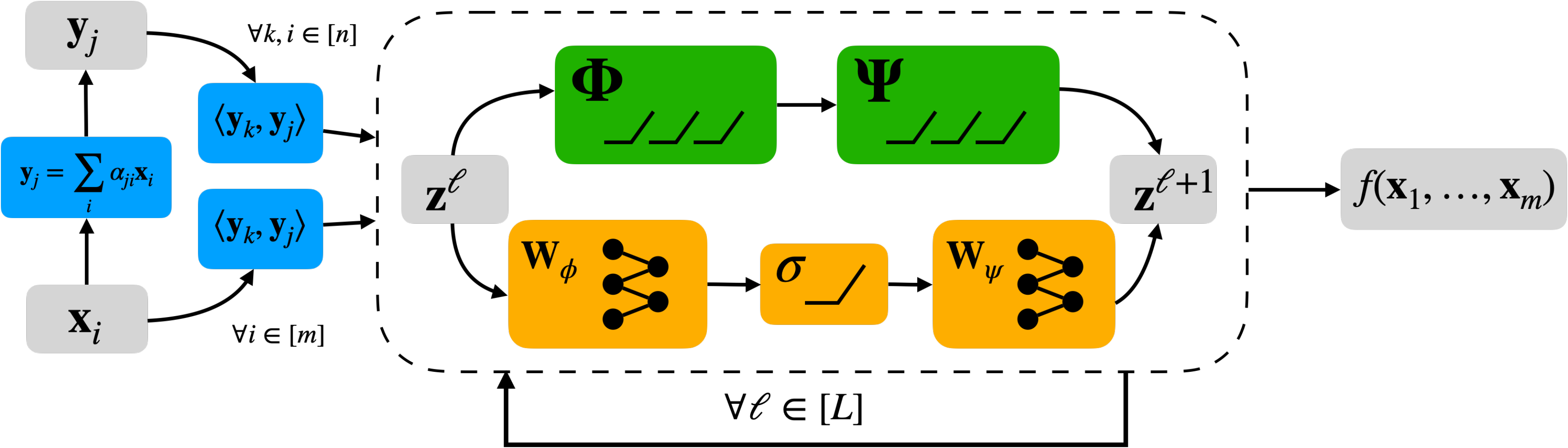} 
    \caption{The geometric Kolmogorov superposition network is composed of layers that comprise two terms. The first term is based on the classical KST function representation, while the second term, similar to a residual path, is an almost linear term that helps the training of the non-linear functions.}
    \label{fig:geo-kst}
\end{figure*}

\section{Geometric Kolmogorov Superposition Networks (\method{})} \label{sec:arch}
Finding the representation functions $\psi_q,\phi_{pq}$ is still a hard non-linear optimization problem. To reduce the training complexity, we consider a representation as a layer and allow the composition of multiple layers (\autoref{fig:geo-kst}). The fundamental result from \autoref{eq:inv_v3} is that we can use univariate functions on invariant features. We consider a single layer of the Geometric Kolmogorov Superposition Networks (\method{}) as the composition of the univariate functions $\phi^\ell_{pq}$ and the subsequent univariate functions $\psi^\ell_{q}$. With an abuse of notation and dropping $\ell$ dependence on the functions, we write 
\begin{align}\label{eq:geo-ksn}
\bm{z}_{\ell+1} = \overbrace{\bm{\Psi}}^{ l  \times k} \circ \overbrace{\bm{\Phi}^T}^{k \times m}(\bm{z}_{\ell}) + \overbrace{\bm{W}_\psi}^{l \times k'} \sigma ( \overbrace{\bm{W}_\phi^T}^{ k' \times m} \bm{z}_{\ell}),
\end{align}
or if we compute the $i$-th element, \\
\scalebox{0.95}{
\begin{minipage}{\linewidth}
\begin{align}
z^{\ell+1}_i = \underbrace{ \sum_{k} \psi_{ik} \left( \sum_j \phi_{jk} (z^{\ell}_j) \right) }_{\text{KST}}  
+  \underbrace{\sum_{k} \underbrace{w_{ik}^\psi \sigma }_{\psi_{ik}(.)} \left( \sum_{j} \underbrace{w_{ji}^\phi}_{\phi_{jk}(.)} z^{\ell}_j \right) }_{\text{Residue term}}, \nonumber
\end{align}
\end{minipage}
}
where $\circ$ is the function composition operator. 

The first term is the classical KST form, while the second is inspired by the newer forms (\autoref{tab:kst}), which contain linear terms, with a non-linear function $\sigma$ in the middle. We, therefore, assume that the original function can be represented as the sum of two functions, the first with smooth but non-linear univariate functions, the second with composition of a scaled non-linear function and the sum of linear functions. We further assume $\sigma$ to be a fix almost everywhere smooth, continuous and almost linear to improve the training of wide layers. The second path plays the role similar of the residual connection, which helps the training of the non-linear univariate functions. 

\section{Experimental Evaluation}
After presenting the experimental setup, we show the performance on representative datasets. To evaluate the representation power of the \method{} to model an invariant function to $E(3)$ symmetry action, we consider the task of training atomistic energy from atomic system configurations.  
We therefore considered the Lennard-Jones particle system (\cref{sec:lj}), Linear polymers (\cref{sec:linear-polymer}), the MD17 (\cref{sec:md17}), and MD22 (\cref{sec:md22}) datasets. 
We also experiment with the use of \method{} for Lorentz symmetries, in particular, we study 
on Top jets stream classification (\cref{sec:top-tagging}), Quark-Gluon tagging
(\cref{sec:quark-tagging}), and symmetry discovery
(\cref{sec:symmetry-discovery}).
\subsection{Experimental setup and baselines}
We compare different models to learn invariant functions from data, from both synthetic and real datasets. In the test, we normalize the output to the interval $[0,1]$.
\paragraph{Symmetries} We name $O(n)$ the models with rotation and reflection symmetry, while we use $\pi$ for the models that implement permutation symmetry. 
\paragraph{Networks}  We mainly compare against the use of two layers {\bf MLP} models.  
We implemented the {\bf KAN}  model of \autoref{eq:geo-ksn}, where we use ReLU \cite{glorot2011deep} both as the basis for the KAN non-linear functions ($\psi_q,\phi_{pq}$) and for the residual connection ($\sigma$). The name of the model contains two symbols $T$=True and $F$=False; the first boolean tells us if the node index is used as an additional $O(n)$ invariant feature. The effect of adding the index of the node is to emulate the non-permutation invariant function. The second boolean is used to show if the linear ($T$) (\autoref{eq:inv_v3}) or quadratic ($F$) (\autoref{eq:inv_v1}) feature is used. Therefore, $\pi ~ O(n)$ KAN($T,T$) is a permutation invariant model based on the KAN architecture, where node index is used as a feature, where the number of features is linear in the number of nodes $m$.  
\paragraph{Invariant Features} While \autoref{eq:inv_v3} tells us that we can represent any invariant function with the inner products, nevertheless, to improve expressivity, 
we extend the invariant feature to include: 
\begin{align}
\|\bm{x}_i \|,\|\bm{y}_j\|, \|\bm{x}_i-\bm{y}_j\|, \langle \bm{x}_i,\bm{y}_j \rangle, \sqrt{\|\bm{x}_i \|^2 \|\bm{y}_j\|^2-\langle \bm{x}_i,\bm{y}_j \rangle^2} \nonumber
\end{align}
As additional invariant features, we optionally include the node index (first flag), and when present (experiments with MD17 and MD22), we also include the atom type. We have not explored alternative ways to embed the node's additional information as input to the network. The last term is also equivalent to $\| x \otimes y\|$ in $n=3$ dimensions, with $\otimes$ the cross product. In \cref{sec:ablation}, we propose an ablation study on the effect of features on the representation power. 
\paragraph{Quadratic versus Linear features} 
A consequence of \autoref{eq:inv_v3}, with the associated theorem, is that the number of invariant features that we need is linear with the number of nodes. 
We nevertheless compare also with the quadratic version as in \autoref{eq:inv_v1}.

\begin{table}
\caption{Huber NLL ($\uparrow$, higher is better) for the LJ dataset
on different dimensions ($n \in [3,5]$) and different number of nodes $m \in [4,10,15]$. Standard deviation as superstript, mean computed over $3$ runs.}
\label{tab:lj1}
\begin{tabularx}{\columnwidth}{
@{} Y{1.5} Y{0.9} Y{0.9} Y{0.9} Y{0.9} @{}
}
\toprule
LJ  $m/n$ & $O(n)$ KAN & $O(n)$ MLP & $\pi~O(n)$ KAN  & $\pi~O(n)$ MLP \\
\midrule
4/3 & \textbf{8.41}$^{\pm 0.19}$ & 8.00$^{\pm 0.12}$ & 7.88$^{\pm 0.15}$ & 7.59$^{\pm 0.14}$ \\
10/3 & \textbf{7.10}$^{\pm 0.16}$ & 6.76$^{\pm 0.09}$ & \textbf{7.08}$^{\pm 0.28}$ & 5.33$^{\pm 0.18}$ \\
10/5 & \textbf{7.15}$^{\pm 0.37}$ & \textbf{6.71}$^{\pm 0.28}$ & \textbf{7.23}$^{\pm 0.41}$ & 3.72$^{\pm 0.60}$ \\
15/3 & \textbf{7.25$^{\pm 1.25}$} & \textbf{7.09}$^{\pm 1.10}$ & \textbf{7.28}$^{\pm 1.17}$ & 3.92$^{\pm 0.41}$ \\
15/5 & 6.73$^{\pm 0.18}$ & 6.56$^{\pm 0.13}$ & \textbf{6.96}$^{\pm 0.24}$ & 1.76$^{\pm 1.33}$ \\
\bottomrule
\end{tabularx}
\end{table}
\begin{figure*}
    \centering
    \subfigure[]{
        \includegraphics[width=0.4\linewidth]{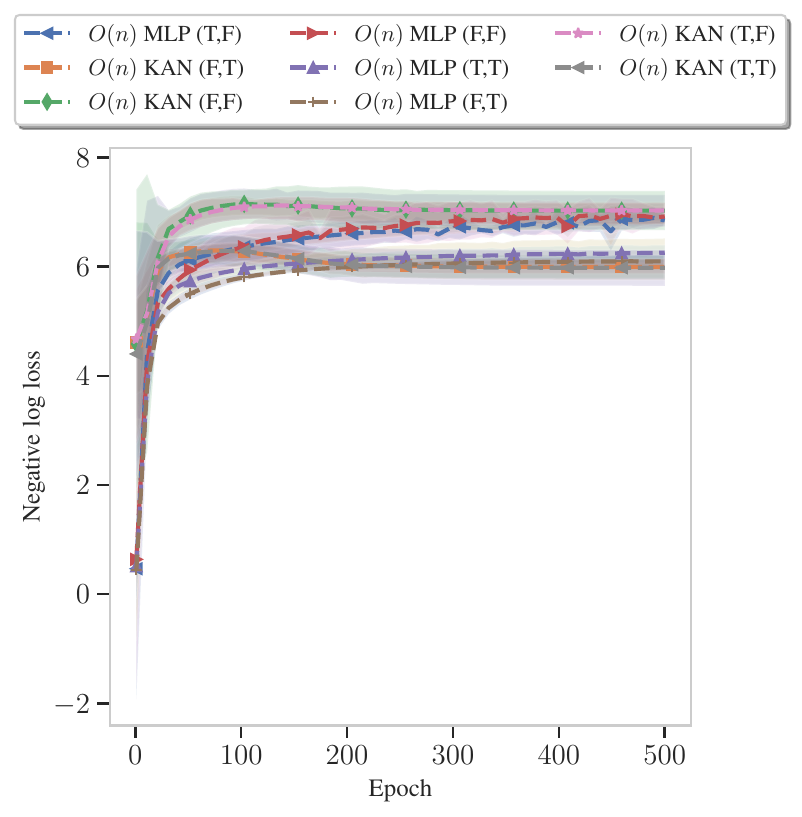}
    }%
    \subfigure[]{
        \includegraphics[width=0.43\linewidth]{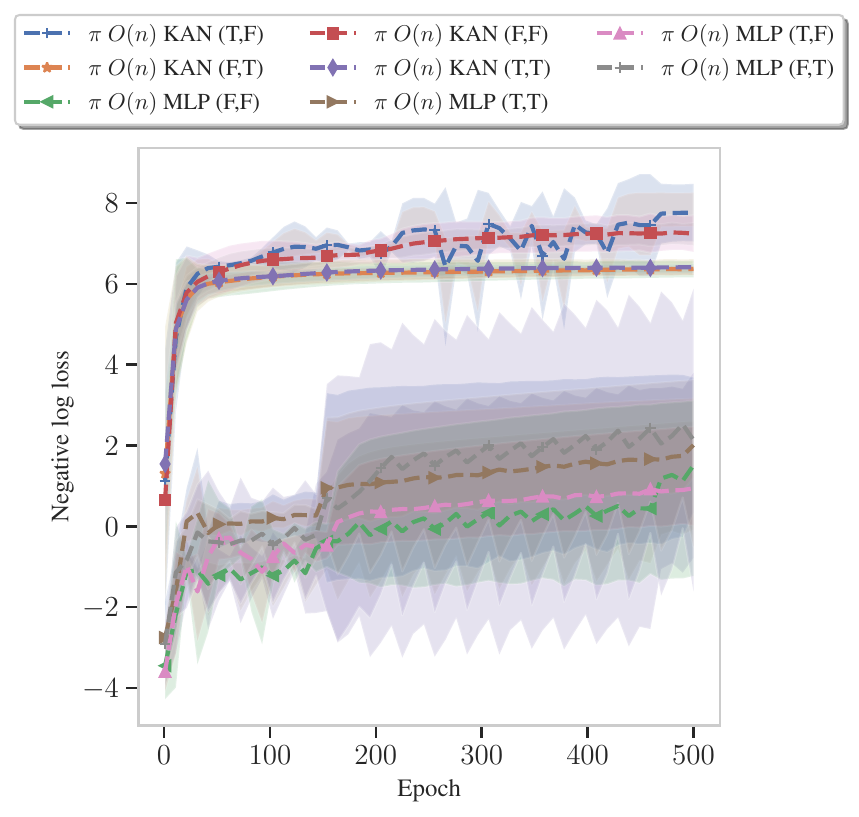}  
    }%

    \caption{
    a) Test performance (Negative log Huber Loss) of $O(n)$ invariant models for the LJ experiment with $n=5$ and $m=15$.  
    In parenthesis, the two flags indicate if the model includes the node index $(T,*)$ or not $(F,*)$; the second flag signals if the features are linear $(*,T)$ (according to \autoref{eq:inv_v3}) or quadratic $(*,F)$ (according to \autoref{eq:inv_v1}) in the number of nodes.
    b) Test performance (Negative log Huber Loss $\uparrow$) of $O(n)$ and permutation invariant models for the LJ experiment.
    }
    \label{fig:lj}
\end{figure*}

\subsection{Lennard-Jones experiments}
\label{sec:lj}
Lennard-Jones potential approximates inter-molecular pair interaction and models repulsive and attractive interactions. It captures key physical principles and it is widely used to model solid, fluid, and gas states. More details are in \autoref{annex:LJ}. 
\autoref{fig:lj} show the test regression loss during training for a system in $3$ dimensions and with $15$ nodes. The loss is plotted on a negative log scale. We use the Huber loss that is quadratic if the error is less than $1$, and linear if larger. The test loss for the $O(n)$ invariant model (\autoref{fig:lj}.(a)) is regular during training and all models seem to have similar results, while in \autoref{fig:lj}.(b) the performance of permutation invariant models have quite different behavior. The MLP-based models are more unstable, while KAN-based models have a much more regular performance. 
\autoref{tab:lj1} summarizes the regression accuracy at test time for all the models. The permutation invariance reduces the performances, but more remarkably on smaller systems.

\begin{table}
\caption{Huber NLL $\uparrow$ for the MD17 dataset (mean and standard deviation in parenthesis)}
\label{tab:md17}
\begin{tabularx}{\columnwidth}{
@{} Y{1.5} Y{0.9} Y{0.9} Y{0.9} Y{0.9} @{}
}
\toprule
Dataset (MD17) & $O(n)$ KAN & $O(n)$ MLP & $\pi~O(n)$ KAN & $\pi~O(n)$ MLP \\
\midrule
Aspirin & ${\bf 6.44}^{\pm 0.10}$ & $5.62^{\pm 0.01}$ & $5.69^{\pm 0.02}$ & $4.73^{\pm 0.27}$ \\
Benzene & ${\bf 7.66}^{\pm 0.08}$ & $5.93^{\pm 0.01}$ & $6.51^{\pm 0.17}$ & $5.64^{\pm 0.13}$ \\
Ethanol & ${\bf 7.57}^{\pm 0.04}$ & $5.44^{\pm 0.01}$ & $6.09^{\pm 0.13}$ & $5.49^{\pm 0.03} $\\
Malonaldehyde & ${\bf 7.50}^{\pm 0.05}$ & $5.39^{\pm 0.01}$ & $5.85^{\pm 0.04}$ & $5.38^{\pm 0.04}$\\
Naphthalene & ${\bf 6.85}^{\pm 0.07}$ & $5.35^{\pm 0.00}$ & $5.72^{\pm 0.09}$ & $4.65^{\pm 0.76}$ \\
Salicylic & ${\bf 6.96}^{\pm 0.09}$ & $5.62^{\pm 0.00} $& $5.83^{\pm 0.10}$ & $5.17^{\pm 0.24}$ \\
Toluene & ${\bf 7.05}^{\pm 0.13}$ & $5.68^{\pm 0.02}$ & $6.03^{\pm 0.10}$ & $5.40^{\pm 0.11}$ \\
Uracil & ${\bf 7.54}^{\pm 0.08} $& $5.65^{\pm 0.01} $ & $6.10^{\pm 0.11}$ &$ 5.52^{\pm 0.05}$ \\
\bottomrule
\end{tabularx}
\end{table}

\subsection{MD17}
\label{sec:md17}
MD17 dataset contains samples from a long molecular dynamics trajectory of a few small organic molecules \cite{chmiela2017machine}. For each molecule, we split into $8,000$ training and $200$ test configurations. 
In \autoref{tab:md17} we show the negative log of the Huber loss (negative log loss - NLL), aggregated over various model options, while in \cref{tab:md17_detailed} 
we provide the test loss for each model. 
\cref{fig:md22_md17}.(a-b)
show the Huber NLL at test time for the Toluene molecule for the two classes of models. 
The test loss in negative log scale at training for $O(n)$ invariant models in \autoref{fig:md22_md17}.(a)is stable, but reducing the number of features leads to lower performance, while KAN shows better accuracy. The training for the permutation invariant models in \autoref{fig:md22_md17}.(b) is less stable, and the overall performance reduces while keeping the model size smaller. 
\autoref{tab:md17} summarizes the performance of all models in the various atomic systems of MD17, the KAN-based models show consistently better performance, even with a smaller network size.

\begin{table}
\caption{Performance aggregated at the level of the model type for the MD22 dataset; the performance is the negative log of the Huber loss $\uparrow$ (mean and standard deviation in parenthesis); }
\label{tab:md22}
\begin{tabularx}{\columnwidth}{
@{} Y{1.5} Y{0.9} Y{0.9} Y{0.9} Y{0.9} @{}
}
\toprule
Dataset (MD22) & $O(n)$ KAN & $O(n)$ MLP & $\pi~O(n)$ KAN & $\pi~O(n)$ MLP \\
\midrule
AT-AT-CG-CG & ${\bf 8.02}^{\pm 0.14}$ & $7.61^{\pm 0.05} $&$ 7.73^{\pm 0.05}$ & $0.82^{\pm 0.32} $\\
AT-AT & ${\bf 7.32}^{\pm 0.21}$ & $6.56^{\pm 0.01}$ & $6.62^{\pm 0.03}$ & $0.82^{\pm 0.40}$ \\
Ac-Ala3-NHMe & ${\bf 5.77}^{\pm 0.07} $&$ 5.57^{\pm 0.00} $&$ 5.57^{\pm 0.01}$ &$ 1.48^{\pm 1.08} $\\
DHA & ${\bf 5.64}^{\pm 0.07} $&$ 5.52^{\pm 0.00} $&$ 5.50^{\pm 0.01} $&$ 0.04^{\pm 0.82} $\\
Buckyball-catcher & ${\bf 8.85}^{\pm 0.24} $&$ 7.27^{\pm 0.01} $&$ 7.41^{\pm 0.07} $&$ 0.21^{\pm 0.71} $\\
Stachyose & ${\bf 6.30}^{\pm 0.12} $&$ 5.70^{\pm 0.01} $&$ 5.73^{\pm 0.03} $&$ 1.36^{\pm 1.42} $\\
\bottomrule
\end{tabularx}
\end{table}

\subsection{MD22}
\label{sec:md22}
MD22 dataset \cite{chmiela2023accurate} contains samples from molecular dynamics trajectories of four major classes of biomolecules, as proteins, lipids, carbohydrates, nucleic acids, and supramolecules. In MD22, number of atoms ranges from $42$ to $370$. 
For each molecule, we split into $8,000$ training and $200$ test configurations. In \autoref{tab:md22} we show the NLL aggregated over various model options, while in \autoref{tab:md22_detailed} for more details information on the performance. 
\cref{fig:md22_md17}.(c-d)
show the Huber NLL at test time 
for the Ac-Ala3-NHMe  molecule, with and without permutation invariance. 

Similar to the MD17 dataset,  the test loss in negative log scale at training for the $O(n)$ invariant models reported in 
\cref{fig:md22_md17}.(c)
is stable for the KAN-based models, while MLP-based models show more unstable training and lower performance.  The training for the permutation invariant models in 
\cref{fig:md22_md17}.(d)
is even less stable for the MLPs, leading to low accuracy. 
\autoref{tab:md22} summarizes the performance of all models in the various atomic systems of MD22, the KAN-based models show consistently better performance, even with a smaller network size. 

\begin{table}
\caption{Top-tagging experimental results, including LGN \cite{bogatskiy2020lorentz}, 
LorentzNet \cite{gong2022efficient}, and other baselines 
\cite{komiske2019energy},
\cite{qu2020jet}, Results for EMLP-${\displaystyle SO(1,3)^+}$  and EKAN -${\displaystyle SO(1,3)^+}$ are from \cite{huEKANEquivariantKolmogorovArnold2024a}; 
$^*$ Train on $ 10^4$ samples 
}
\label{tab:top-tagging}
\begin{tabularx}{\columnwidth}{
X X X X X
}
\toprule
Model & Accuracy & AUC  & $1/\epsilon_B(0.5)$ & $1/\epsilon_B (0.3)$ \\
\midrule
ResNeXt      & $0.936$  & $0.9837$ & $302^{\pm 5}$  & $1147 \pm 58$    \\
P-CNN        & $0.930$  & $0.9803$ & $201^{\pm 4}$   & $759 ^{\pm 24}$    \\
PFN          & $0.932$      & $0.9819$ & $247^{ \pm 3}$    & $888^{\pm 17}$      \\
ParticleNet  & ${\bf0.940}$  & ${\bf0.9858}$ & ${\bf397^{\pm 7}}$   & ${\bf1615^{\pm 93}}$       \\
EGNN        & $0.922$  & $0.9760$ & $148^{\pm 8}$    & $540^{\pm 49}$    \\
LGN         & $0.929$  & $0.9640$ & $124^ {\pm 20}$  & $435^{\pm 95}$    \\ 
EMLP          & 0.771*  & -      &-&-  \\
EKAN       & 0.769*  & -      &-&-  \\ 
LorentzNet  & $\bm{0.942}$  & $\bm{0.9868}$ & $\bm{498 ^{\pm 18}}$   & $\bm{2195 ^{\pm 173}}$   \\ 
\midrule
\method{}       & ${\bf 0.940} $    & ${\bf0.9858}$& ${\bf445}^{\pm 28} $& ${\bf1634}^{\pm 328} $\\
\bottomrule
\end{tabularx}
\end{table}

\begin{figure*}
    \centering

    \subfigure[]{
        \includegraphics[width=0.4\linewidth]{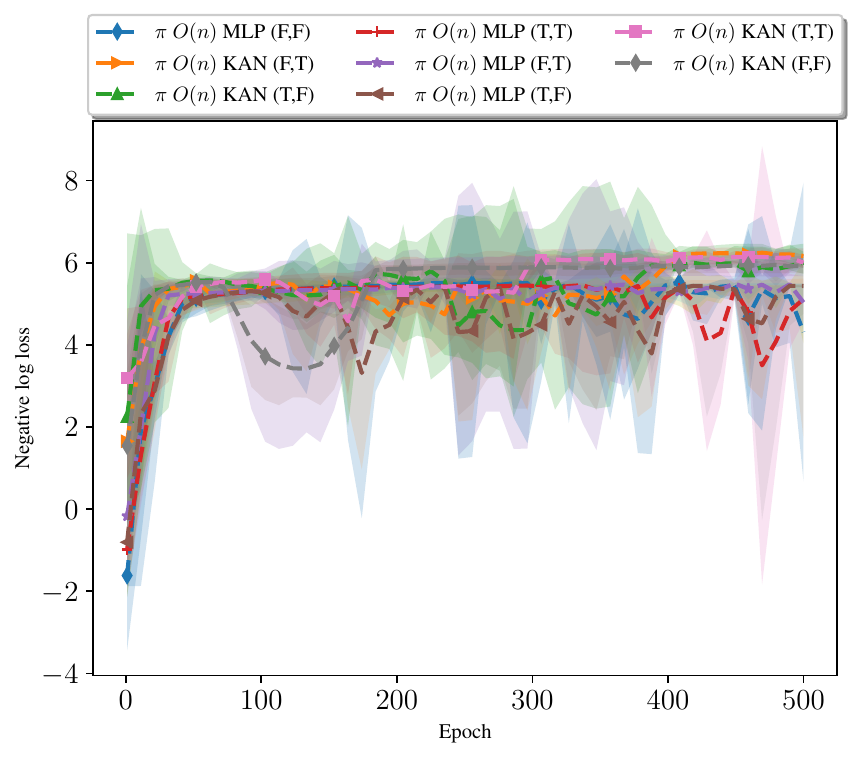} 
    }%
    \subfigure[]{
        \includegraphics[width=0.4\linewidth]{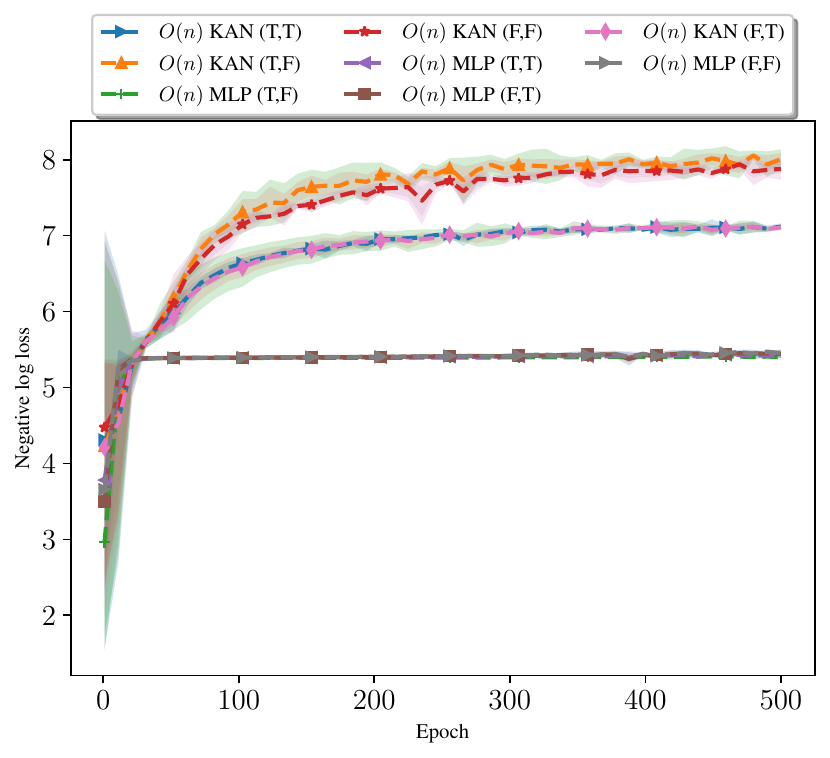} 
    }%
    \\
    \subfigure[]{
        \includegraphics[width=0.4\linewidth]{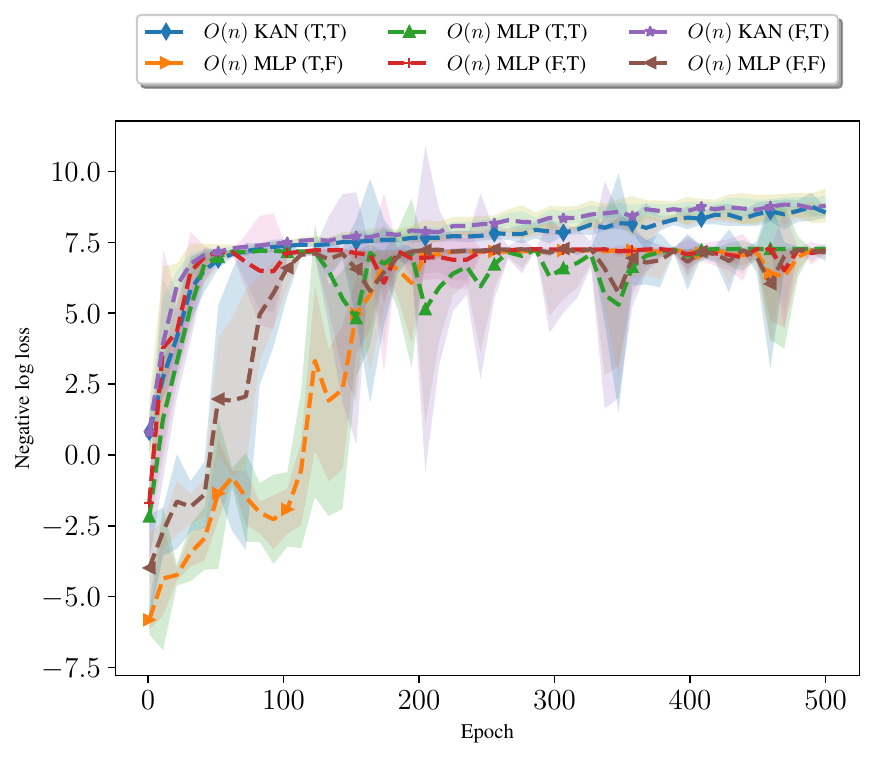}
    }%
    \subfigure[]{
        \includegraphics[width=.4\linewidth]{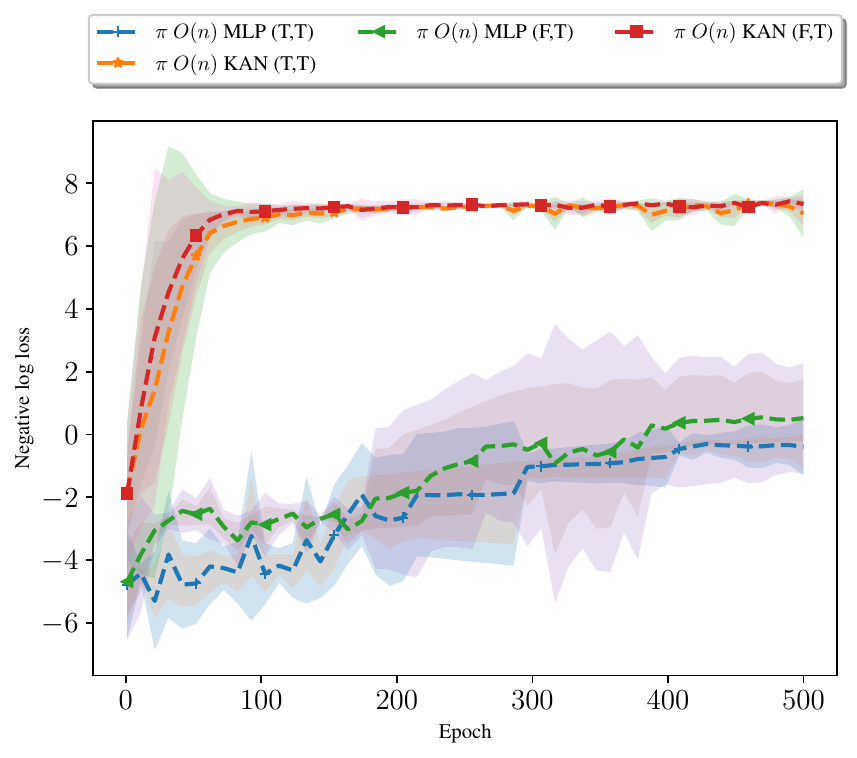}
    }%

    \caption{
    a) Test performance (Negative log Huber Loss $\uparrow$) of various models for the Ethanol dataset of MD17. $\pi~O(n)$ are the models that are invariant to rotation, reflection, and permutation.    
    b) Test performance (Negative log Huber Loss $\uparrow$) of various models for the Ethanol dataset of MD17. $O(n)$ is the model that is invariant to rotation and reflection on $\mathbb{R}^n$.    
    c) Test performance (Negative log Huber Loss $\uparrow$) of $O(n)$ invariant models for the Buckyball-Catcher dataset of MD22.
    d) Training performance (Negative log Huber Loss $\uparrow$) of  $O(n)$ and permutation invariant models for the Buckyball-Catcher dataset of MD22.
    }
    \label{fig:md22_md17}
\end{figure*}

\subsection{Top Tagging}
\label{sec:top-tagging}
Lorentz group $SO(1,3)^+$ is an important set of transformations in many physics problems. Top tagging dataset is an open benchmark dataset \cite{kasieczka2019machine} with the task of classifying between top quark jets and background jets.  
It consists of $2$M observations, each consisting four-dimensional momentum of up to $200$ particle jets. The classification task is Lorentz invariant, where the rotated or boosted input momentum belongs to the same category.

\begin{table}
\caption{Quark-gluon tagging experimental results. The LorentzNet, EGNN and LGN results are averaged over $6$ runs, \method{} over $3$. }
\label{tab:Quark-gluon-tagging}
\begin{tabularx}{\columnwidth}{
X X X X X
}
\toprule
Model     & Accuracy & AUC  & $1/\epsilon_B(0.5)$ & $1/\epsilon_B (0.3)$    \\
\midrule
ResNeXt   & $0.821$  & $0.8960$ & $30.9$  & $80.8$  \\
P-CNN    & $0.827$  & $0.9002$ & $34.7$  & $91.0$   \\
PFN    & - & $0.9005$ & $34.7^{\pm 0.4}$   & - \\
ParticleNet & ${\bf 0.840}$  & ${\bf 0.9116}$ & $39.8^{\pm 0.2}$  & $98.6^{\pm 1.3}$    \\
EGNN & $0.803$  & $0.8806$ & $26.3^{\pm0.3}$  & $76.6^{\pm 0.5}$       \\
LGN    & $0.803$  & $0.8324$ & $16.0$    & $44.3$  \\
LorentzNet  & ${\bf 0.844}$  & ${\bf 0.9156}$ & $42.4^{\pm0.4}$  & $110.2^{ \pm 1.3}$  \\
\midrule
\method{} & ${\bf 0.839}$  & ${\bf 0.9127}$ & $39^{\pm 0.4}$   & $101^{\pm3.5}$   \\
\bottomrule
\end{tabularx}
\end{table}

\subsection{Quark-gluon tagging}
\label{sec:quark-tagging}
In the Quark-gluon tagging dataset \cite{komiske2019energy}, the task consists of discriminating light-quark from gluon-initiated jets. The dataset consists of $2$  million jets in total, where half are gluon jets and half are background jets. The Quark-gluon tagging classification task is modelled with a Lorentz invariant function. \cite{bogatskiy2020lorentz}

\section{Conclusions}

We propose an extension of the KAN architecture for invariant and equivariant function representation, which is based on  
the theoretical results that provide us with a lower bound on the number of functions needed for approximating invariant functions.
The theoretical results in \autoref{sec:geo}, provide a considerable improvement with previous results \cite{villarScalarsAreUniversal2023}, reducing the complexity from quadratic to linear.
We further tested the performance and compared it with MLP-based architectures on an ideal physical system, the Lennard-Jones experiment, and on two real molecular datasets, the MD17 and the MD22 datasets. The performance of the proposed network architecture shows in our experiments improved performance with respect to MLP, and further investigation will show if this architecture can be extended to implement KAN-based machine learning interatomic potentials. 


\bibliographystyle{plainnat}
\bibliography{kolmogorov,r1references,mlip,scalar}


\newpage
\appendix
\onecolumn

\section*{Supplementary Material of Geometric Kolmogorov-Arnold Superposition Theorem}

\section{Main theorems for the Kolmogorov 
Superposition Theorem for invariant and equivariant functions }
\label{annex:KAT}
\label{sec:theory}

We first recall the Kolmogorov - Arnold
and Ostrand theorems.
\begin{theorem}{\cite{kolmogorov1961representation}} \label{th:kan}
For any integer $m \geq 2$ there are continuous real functions $\phi_{qp}(x)$ on the close unit interval $E = [0, 1]$ such that each continuous real function $f(x_{1}, \cdots, x_{m})$ on the $m$-dimensional unit cube $E^{m}$ is representable as 
\[f(x_1,\dots,x_m) = \sum_{q=1}^{2m+1} \psi_q(\sum_{p=1}^{m} \phi_{qp}(x_p)),\]
where $\psi_q$ are continuous functions.

\end{theorem}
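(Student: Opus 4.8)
The plan is to prove this by the functional-analytic iteration argument (following the Kahane--Lorentz treatment) rather than by reproducing Kolmogorov's original explicit construction. The crucial structural point to exploit is that the inner functions $\phi_{qp}$ may be chosen once and for all, independently of $f$; only the outer functions $\psi_q$ carry the dependence on $f$. So I would first fix a good family of inner functions, and then obtain each $\psi_q$ as the uniform limit of a rapidly converging series built by repeatedly correcting the approximation error on the residual.

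\textbf{Step 1 (inner functions).} I would construct continuous, strictly increasing $\phi_{qp}\colon [0,1]\to\mathbb{R}$ so that the maps $\Phi_q(x)=\sum_{p=1}^m \phi_{qp}(x_p)$, for $q=1,\dots,2m+1$, separate a sequence of progressively finer grids on $E^m$. For each fine grid I want two properties: (i) on each grid cube $\Phi_q$ varies by a negligible amount, so it is essentially constant there and its value encodes which cube one is in; and (ii) the $2m+1$ grids can be mutually offset so that every point of $E^m$ lies in the interior of a grid cube for at least $m+1$ of the $2m+1$ values of $q$. Property (ii) is a pigeonhole/covering statement --- $2m+1$ shifted grids leave at most $m$ ``bad'' indices at any point --- and this is precisely where the count $2m+1$ is paid for.

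\textbf{Step 2 (one-step approximation).} Granting the inner functions, I would prove the contraction lemma: for every $f\in C(E^m)$ there exist continuous $g_q$ with $\|g_q\|_\infty \le c\,\|f\|_\infty$ and $\left\| f-\sum_{q=1}^{2m+1} g_q\circ\Phi_q \right\|_\infty \le \theta\,\|f\|_\infty$ for a fixed $\theta<1$ independent of $f$. The $g_q$ are built directly: on each near-constant value interval of $\Phi_q$ set $g_q$ to roughly $\tfrac{1}{m+1}$ times the value of $f$ on the corresponding cube, interpolating continuously between intervals. Summing over the at-least-$(m+1)$ good indices recovers $f$ up to the controlled contribution of the at-most-$m$ bad indices plus the oscillation of $f$ on a fine grid; choosing the grid fine enough (uniform continuity of $f$) makes this total error a genuine fraction $\theta<1$ of $\|f\|_\infty$.

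\textbf{Step 3 (iteration).} With the contraction in hand I would iterate: set $f_0=f$, extract $g_q^{(0)}$ and residual $f_1=f_0-\sum_q g_q^{(0)}\circ\Phi_q$ with $\|f_1\|_\infty\le\theta\|f_0\|_\infty$, then repeat on $f_1$, and so on. Since $\|f_k\|_\infty\le\theta^k\|f\|_\infty$ and $\|g_q^{(k)}\|_\infty\le c\,\theta^k\|f\|_\infty$, the series $\psi_q:=\sum_{k\ge 0} g_q^{(k)}$ converges uniformly to a continuous function, and telescoping gives $\sum_q \psi_q\circ\Phi_q=\lim_k (f-f_k)=f$ with the same fixed inner functions for every $f$. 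The main obstacle is Step 1 together with the covering count in Step 2: arranging the $2m+1$ genuinely univariate, continuous inner maps so the grids are simultaneously near-constant-valued and offset enough to guarantee $\ge m+1$ good indices everywhere. In the Baire-category phrasing this difficulty is isolated into a single statement --- that the inner functions with the required separation form a comeager set in the relevant complete metric space of candidates, hence at least one good choice exists --- after which Steps 2 and 3 reduce to geometric-series estimates.
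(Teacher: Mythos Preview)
Your proposal is a correct and standard outline of the Kahane--Lorentz functional-analytic proof of the Kolmogorov--Arnold superposition theorem: fix universal inner functions via a Baire-category or explicit grid-separation construction, prove a one-step contraction $\|f-\sum_q g_q\circ\Phi_q\|_\infty\le\theta\|f\|_\infty$ with $\theta<1$, and iterate to a geometric series. The three steps are accurately described and the role of the count $2m+1$ (pigeonhole: at most $m$ bad indices out of $2m+1$ shifted grids) is correctly identified.

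However, there is nothing to compare against: the paper does not prove this theorem. It merely \emph{recalls} it with a citation to \cite{kolmogorov1961representation} and then uses it as a black box in the proofs of its own results (Theorems~\ref{th:gl_v1}, \ref{th:on_v1}, \ref{th:on_v2}, and Lemma~\ref{lm:prem-v2}). So your sketch is not an alternative to the paper's argument; it is a proof of a background result the paper takes for granted.
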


\begin{theorem}{\cite{ostrandDIMENSIONMETRICSPACESa}}
\label{thm:ostrand}
For $p = 1, 2, \cdots, m$, let $X^p$ be a compat metric space of finite dimension $d_p$, and let $n = \sum_{p=1}^{m} d_{p}$. There exist continuous functions $\phi_{qp}: X^{p} \rightarrow [0, 1]$, for $p = 1, \cdots, m$ and $q = 1, 2, \cdots, 2n + 1$, such that every continuous real function $f$ defined on $\prod_{p=1}^{m} X^{p}$ is representable in the form 
\[f(x_1,\dots,x_m) = \sum_{q=1}^{2n+1} \psi_q(\sum_{p=1}^{m} \phi_{qp}(x_p)),\]
where the functions $\psi_q$ are real and continuous.
\end{theorem}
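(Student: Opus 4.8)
The plan is to extend Kolmogorov's construction (Theorem~\ref{th:kan}) from products of unit intervals to products of finite-dimensional compact metric spaces, the only genuinely new ingredient being dimension theory. I would separate the two roles played by the functions: the inner functions $\phi_{qp}$ must be \emph{universal} (independent of $f$) and should encode the geometry of each factor $X^p$, whereas the outer functions $\psi_q$ are $f$-dependent and are built afterwards by an iterative approximation. The target is to produce, for each scale, a product-form feature map $\Phi_q(x)=\sum_{p=1}^m \phi_{qp}(x_p)$, $q=1,\dots,2n+1$, with a \emph{majority-separation} property: for every $x\in\prod_p X^p$, at least $n+1$ of the indices $q$ place $\Phi_q(x)$ in a value-interval that is reserved for, and disjoint across, the cover-cells at that scale.

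First I would set up the dimension-theoretic input. Since each $X^p$ is compact metric of covering dimension $d_p$, for every $\varepsilon>0$ it admits a finite closed cover of mesh $<\varepsilon$ and order $\le d_p+1$ (every point lies in at most $d_p+1$ cells). I would take a sequence of such covers at geometrically decreasing scales and, on each, define $\phi_{qp}$ so that distinct cells of the $p$-th factor receive essentially disjoint ranges of values; using $2n+1$ suitably shifted families together with the product bound $\dim\big(\prod_p X^p\big)\le\sum_p d_p=n$, the order-$(d_p+1)$ condition forces each point to be ``bad'' (near a cell boundary) in at most $n$ of the $2n+1$ families, which yields the majority-separation property with the sharp count $n+1$. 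This step is the analogue of Kolmogorov's shifted grids on $[0,1]^m$, where $2m+1$ translates guarantee that $m+1$ of them isolate any given point.

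With the universal inner functions fixed, I would construct the outer functions by a contraction argument. Given continuous $f$, the majority-separation property lets me define each $\psi_q$ on the one-dimensional range of $\Phi_q$ (as a step or averaged function) so that $\sum_q\psi_q(\Phi_q(\cdot))$ reproduces $f$ up to a uniform error $\le\lambda\|f\|_\infty$ for a fixed $\lambda<1$. Iterating on the residual $f-\sum_q\psi_q\circ\Phi_q$ produces a geometric series of corrections whose sums converge uniformly to continuous limits $\psi_q$, giving the exact representation; a Baire-category argument can replace the explicit iteration if one prefers to show that a residual set of inner systems works simultaneously for all $f$.

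The main obstacle I anticipate is establishing the majority-separation property with the exact count $2n+1$ rather than the wasteful count produced by naively embedding each $X^p$ into $\mathbb{R}^{2d_p+1}$ and invoking Kolmogorov on the ambient cube. Achieving the sharp constant requires the dimension-theoretic accounting of the previous paragraph---controlling the total number of ``bad'' families across the product by $n=\sum_p d_p$ while keeping the inner functions of product form---and this is exactly where Ostrand's argument improves on a black-box reduction to Theorem~\ref{th:kan}.
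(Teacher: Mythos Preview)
The paper does not supply a proof of Theorem~\ref{thm:ostrand}: it is quoted verbatim as a classical result from Ostrand's 1965 paper and is used only as a black-box tool (alongside Theorem~\ref{th:kan} and Theorem~\ref{thm:villar}) in the proofs of the paper's own results such as Theorems~\ref{th:on_v1}--\ref{th:on_v3}. So there is no ``paper's own proof'' to compare against.

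That said, your sketch is a faithful outline of the classical argument behind Ostrand's theorem. The two key ideas you isolate---(i) using covering-dimension covers of order $\le d_p+1$ in each factor to get that any point is ``bad'' in at most $n=\sum_p d_p$ of $2n+1$ shifted systems, hence ``good'' in a majority $n+1$; and (ii) using that majority-separation to build the outer functions by a geometric-series/Baire-category iteration---are exactly the ingredients in Ostrand's original proof and in later refinements (Sternfeld, Levin). You also correctly identify why the sharp constant $2n+1$ is not obtained by the naive embedding of each $X^p$ into a cube followed by Kolmogorov's Theorem~\ref{th:kan}: that would give $2\sum_p(2d_p+1)+1$ rather than $2n+1$, and the saving comes from doing the dimension-theoretic accounting directly on the factors. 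The one place where your sketch is underspecified is how the inner functions at \emph{all} scales are assembled into a single continuous $\phi_{qp}$ with disjoint-range behaviour simultaneously at every scale; in the standard proof this is handled either by a careful nested construction (values at finer scales are small perturbations living in subintervals of the coarser-scale values) or by the Baire-category route you mention, and you should commit to one of these and verify that the resulting $\phi_{qp}$ lands in $[0,1]$ as the statement requires.
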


We also summarize the invariance representation from "Lemma 1",
and "Proposition 8" of \cite{villarScalarsAreUniversal2023}. 
\begin{theorem}{\cite{villarScalarsAreUniversal2023}} (First Fundamental Theorem of $O(n)$ and $O(1,n-1)$)
\label{thm:villar}
Suppose a function $f(x_1,\dots,x_m): (\mathbb{R}^{n})^m \to \mathbb{R}$ is an $O(n)$ or $O(1,n-1)$ continuous invariant scalar function. Then, $f$ can be represented as a continuous function of only scalar product of the input $x_i$. That is, there is a continuous function $g$ such that 
$f(x_1,\dots,x_m) = g(X^TX) = g((\langle x_i,x_j \rangle)_{i,j=1}^m)$, with $\langle x_i,x_j \rangle = x^T \Lambda x$ the invariant inner scalar product with metrics $\Lambda=1$ for $O(n)$ and $\Lambda=\text{diag}(1,-1,\dots,-1)$  for $O(1,n-1)$. 
\end{theorem}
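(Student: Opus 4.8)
The plan is to realize $f$ as a function of the \emph{Gram map} and then to argue that the factoring function is continuous. First I would define $\Pi:(\mathbb{R}^n)^m \to \mathrm{Sym}(m)$, the space of real symmetric $m\times m$ matrices, by $\Pi(x_1,\dots,x_m) = (\langle x_i,x_j\rangle)_{i,j=1}^m = X^\top \Lambda X$, with $\Lambda = I$ in the $O(n)$ case and $\Lambda = \mathrm{diag}(1,-1,\dots,-1)$ in the $O(1,n-1)$ case. This map is manifestly continuous, and it is invariant under the group action because any $Q$ in the relevant group satisfies $Q^\top \Lambda Q = \Lambda$, so $(Qx_i)^\top \Lambda (Qx_j) = x_i^\top \Lambda x_j$. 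Since $f$ is assumed invariant, the first goal is to factor $f$ set-theoretically as $f = g\circ\Pi$ for some function $g$ defined on the image $\mathrm{Im}(\Pi)$.

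The key algebraic step is \emph{orbit separation}: I would show $\Pi(X)=\Pi(X')$ holds if and only if $X'=QX$ for some $Q$ in the group. The ``if'' direction is the invariance just noted. For ``only if'', equality of Gram matrices means the linear map sending $x_i\mapsto x_i'$ on the spanned subspace preserves the bilinear form $\langle\cdot,\cdot\rangle$; by Witt's extension theorem this isometry of subspaces extends to an isometry $Q$ of the full form on $\mathbb{R}^n$, i.e.\ $Q\in O(n)$ (respectively $Q\in O(1,n-1)$), with $QX=X'$. Because $f$ is constant on group orbits, this yields a well-defined $g$ on $\mathrm{Im}(\Pi)$ with $f=g\circ\Pi$. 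It then remains to upgrade $g$ to a \emph{continuous} function, which is precisely what the statement demands.

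The continuity argument is where the two groups diverge. For $O(n)$ the group is compact, so $\Pi$ is \emph{proper}: since $\mathrm{tr}\,\Pi(X)=\sum_i\|x_i\|^2=\|X\|_F^2$, a bounded Gram matrix forces a bounded configuration, hence preimages of compact sets are compact and $\mathrm{Im}(\Pi)$ is closed. The induced map on the orbit space $(\mathbb{R}^n)^m/O(n)$, which is Hausdorff as a quotient by a compact group, is then a continuous proper bijection onto $\mathrm{Im}(\Pi)$, hence a homeomorphism. Composing its inverse with the continuous descent $\bar f$ of $f$ to the quotient produces a continuous $g$ on $\mathrm{Im}(\Pi)$, which I would extend to all of $\mathrm{Sym}(m)$ by the Tietze extension theorem, $\mathrm{Im}(\Pi)$ being closed in the normal space $\mathrm{Sym}(m)$.

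The main obstacle is the $O(1,n-1)$ case, because the Lorentz group is non-compact: properness of $\Pi$ fails, orbits of null configurations need not be closed, and the naive quotient can fail to be Hausdorff, so the clean homeomorphism argument above does not transfer. Here I would either restrict to the regime in which the orbit map remains well-behaved or invoke the continuity construction of \cite{villarScalarsAreUniversal2023} (their Lemma 1 and Proposition 8) directly, controlling the descent of $g$ by expressing the $\langle\cdot,\cdot\rangle$-invariants in a way stable under the limiting behavior of boosts. Closing this non-compact continuity gap cleanly is the principal difficulty, and I would expect to lean on the cited results rather than re-derive it from scratch.
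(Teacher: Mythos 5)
First, a point of comparison: the paper never proves this statement --- Theorem~\ref{thm:villar} is imported verbatim from \cite{villarScalarsAreUniversal2023} (their Lemma~1 and Proposition~8) and used as a black box in the proofs of \autoref{th:on_v1} and its corollaries. So your attempt must stand on its own merits. On those merits, your $O(n)$ half is correct and complete: positive-definiteness makes the map $x_i \mapsto x_i'$ well defined (a vector of zero norm is zero), so Gram-matrix equality really does give orbit separation, and the properness--quotient--Tietze chain is a clean way to obtain continuity of $g$. That part is a legitimate self-contained proof of something the paper only cites.

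The genuine gap is in the Lorentz case, and it occurs earlier than the continuity step you flag as the ``principal difficulty.'' Your key algebraic step --- orbit separation, $\Pi(X)=\Pi(X')$ iff $X'=QX$ --- is \emph{false} for $O(1,n-1)$, and the Witt argument breaks before Witt's theorem can be invoked: to apply it you need $x_i \mapsto x_i'$ to be a well-defined linear isometry of spans, i.e.\ $\sum_i c_i x_i = 0$ must force $\sum_i c_i x_i' = 0$. With an indefinite form, equality of Gram matrices only yields $\langle \sum_i c_i x_i', \sum_i c_i x_i'\rangle = 0$, which does not make the vector vanish. Concretely, with $n=2$, $\Lambda=\mathrm{diag}(1,-1)$ and $m=1$, the configurations $x_1=(1,1)^{T}$ and $x_1'=(0,0)^{T}$ have the same Gram matrix $(0)$, yet no $Q\in O(1,1)$ maps one to the other, since linear maps fix the origin. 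Hence even the set-theoretic factorization $f=g\circ\Pi$ is not established by your argument in this case; it is irreducibly a statement about \emph{continuous} invariants (here $f(1,1)=f(0,0)$ is forced because boosts $e^{\theta}(1,1)^{T}\to 0$ as $\theta\to-\infty$ put the origin in the orbit closure), so the factorization must come from constancy of $f$ on orbit closures together with separation of orbit closures by the Gram map --- not from orbit separation, which fails. Since you then also delegate the continuity of $g$ for this case back to \cite{villarScalarsAreUniversal2023}, which is exactly the result being proved, the Lorentz half of your proposal is in the end a citation rather than a proof --- which, to be fair, is no less than what the paper itself does.
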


The propositions in \cite{villarScalarsAreUniversal2023} are based on the First Fundamental Theorem of $GL(V, n)$, the generalized linear group over a finite-dimensional vector space $V$ of dimension $n$ \cite{kraftCLASSICALINVARIANTTHEORYa}. 

\begin{theorem}{\cite{kraftCLASSICALINVARIANTTHEORYa}} (First Fundamental Theorem for $\mathrm{GL}(V, n)$) \label{thm:fftgl}
The ring of invariants for the action of $\mathrm{GL}(V, n)$ on $V^p \oplus V^{*q}$ is generated by the invariants $(i \mid j)$:
\[
K[V^p \oplus V^{*q}]^{\mathrm{GL}(V, n)} = K[(i \mid j) \mid i = 1, \ldots, p,\, j = 1, \ldots, q].
\]
\end{theorem}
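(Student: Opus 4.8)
The plan is to reduce the statement to a concrete matrix picture and then invoke the theory of quotients by linearly reductive groups. Fix a basis so that $V \cong K^n$ and realize a point of $V^p \oplus V^{*q}$ as a pair $(X,Y)$, where $X \in \mathrm{Mat}_{n \times p}(K)$ has the vectors $v_1,\dots,v_p$ as its columns and $Y \in \mathrm{Mat}_{q \times n}(K)$ has the covectors $\xi_1,\dots,\xi_q$ as its rows; the action of $g \in \mathrm{GL}(V,n)$ is then $(X,Y) \mapsto (gX,\, Yg^{-1})$. Under this identification the basic invariants $(i \mid j)$ are exactly the entries of the product $YX \in \mathrm{Mat}_{q \times p}(K)$, since $(YX)_{ji} = \xi_j(v_i)$, and these are manifestly invariant because $(Yg^{-1})(gX) = YX$. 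The theorem is therefore equivalent to the assertion that the comorphism of the map $\pi \colon (X,Y) \mapsto YX$ induces an isomorphism $K[\,\overline{\mathrm{im}\,\pi}\,] \xrightarrow{\sim} K[X,Y]^{\mathrm{GL}(V,n)}$.

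First I would treat the \emph{stable} range $p,q \ge n$. The image of $\pi$ is the determinantal variety $D_n$ of $q \times p$ matrices of rank at most $n$, because $\mathrm{rank}(YX) \le n$ always and every such matrix factors as $YX$; here I would use the classical fact that $D_n$ is a normal (indeed Cohen--Macaulay) variety. The key geometric computation is that $\pi$ separates generic orbits: over a matrix $M$ of full rank $n$, any two factorizations $YX = Y_0X_0 = M$ with $X,Y$ of rank $n$ are related by a unique $g$ via $(X_0,Y_0) = (gX,\, Yg^{-1})$ — one produces $g$ from the one-sided inverses of $X$ and $Y$ and checks it is invertible — so the fiber $\pi^{-1}(M)$ is a single $\mathrm{GL}(V,n)$-orbit. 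With these ingredients I would apply Igusa's criterion: since $\mathrm{GL}(V,n)$ is linearly reductive in characteristic zero, and $\pi$ is dominant onto the normal variety $D_n$, constant on orbits, and has a dense orbit in the generic fiber, $\pi$ realizes the categorical quotient and hence $\pi^*$ is an isomorphism onto the invariant ring. This shows that every invariant is a polynomial in the $(i \mid j)$.

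To remove the restriction $p,q \ge n$, I would use polarization and restitution: a homogeneous invariant of arbitrary multidegree is recovered from multilinear invariants obtained by introducing auxiliary copies of $V$ and $V^*$, which places us back in the stable range, and restituting then re-expresses the original invariant as a polynomial in the pairings $(i \mid j)$. An alternative, purely tensorial route reaches the same conclusion: by Schur--Weyl duality the multilinear $\mathrm{GL}(V,n)$-invariants of $V^{\otimes p} \otimes (V^*)^{\otimes q}$ are nonzero only when $p = q$ and are then spanned by the complete contractions $\prod_k (\,k \mid \sigma(k)\,)$ over bijections $\sigma$, which are products of the basic pairings; polarization and restitution again upgrade this to the polynomial statement.

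The hard part will be the surjectivity of $\pi^*$ onto the \emph{full} invariant ring, i.e.\ ruling out invariants that are not polynomials in the $(i \mid j)$. In the geometric approach this is precisely where the two nontrivial inputs enter — the normality of the determinantal variety $D_n$ and the single-orbit structure of the generic fiber — together with the reductivity of $\mathrm{GL}$ that licenses Igusa's criterion. In the tensorial approach the analogous difficulty is the double-centralizer / Schur--Weyl computation identifying all multilinear mixed-tensor invariants with contractions; everything else (invariance of the $(i \mid j)$ and the polarization bookkeeping) is routine.
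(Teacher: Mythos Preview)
Your proposal is a correct and standard outline of the classical proof of the First Fundamental Theorem for $\mathrm{GL}(V)$ --- the matrix identification $(X,Y)\mapsto YX$, normality of the determinantal variety, the single-orbit structure of the generic fiber, and Igusa's criterion (with polarization/restitution or Schur--Weyl to handle the unstable range) are exactly the ingredients one finds in Kraft--Procesi and similar references.

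However, note that the paper does \emph{not} prove this theorem at all: it is quoted as a known result from the classical invariant theory literature (the citation to Kraft) and then used as a black box in the proof of the subsequent $GL(n)$-invariance result. The surrounding paragraph ``Invariants of vectors and covectors'' merely sets up the notation for the contractions $(i\mid j)$ and verifies they are invariant; no argument for the generation statement is offered. So there is nothing in the paper to compare your approach against --- you have supplied a genuine proof sketch where the paper simply invokes the theorem.
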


\paragraph{Invariants of vectors and covectors} 
Here, we breifly recall the idea behind the theorem. The theorem is based on the concept of invariants of vectors and covectors. Let $V$ be a finite-dimensional $K$-vector space, for example $K=\mathbb{C}$ and $V = \mathbb{C}^n$. Consider the representation of $\mathrm{GL}(V)$ on the vector space
\[
W := \underbrace{V \oplus \cdots \oplus V}_{p \text{ times}} \oplus \underbrace{V^* \oplus \cdots \oplus V^*}_{q \text{ times}} =: V^p \oplus V^{*q},
\]
consisting of $p$ copies of $V$ and $q$ copies of its dual space $V^*$, given by
\[
g(v_1, \ldots, v_p, \varphi_1, \ldots, \varphi_q) := (gv_1, \ldots, gv_p, g\varphi_1, \ldots, g\varphi_q)
\]
where $g\varphi_i$ is defined by $(g\varphi_i)(v) := \varphi_i(g^{-1}v)$ and $g \in GL(V, n)$. This representation on $V^*$ is the \textit{dual} representation of $\mathrm{GL}(V)$ on $V$, where the elements of $V$ are called \textit{vectors}, while elements of the dual space $V^*$ are called \textit{covectors}. We want to describe the invariants of $V^p \oplus V^{*q}$ under this action. 

For every pair $(i, j)$, $i = 1, \ldots, p$, $j = 1, \ldots, q$, we define the bilinear function $(i \mid j)$ on $V^p \oplus V^{*q}$ by
\[(i \mid j) : (v_1, \ldots, v_p, \varphi_1, \ldots, \varphi_q) \mapsto (v_i \mid \varphi_j) := \varphi_j(v_i).\]
These functions are called \textit{contractions}, and they are invariant to the actions $g \in GL(V, n)$:
\[
(i \mid j)(g(v, \varphi)) = (g\varphi_j)(gv_i) = \varphi_j(g^{-1}gv_i) = (i \mid j)(v, \varphi).
\]
The First Fundamental Theorem (sometimes referred as FFT) states that these functions generate the ring of invariants, i.e. polynomial functions on $V$. 
We first present a result about the universal representation theorem for $GL(n)$-invariant polynomial functions.
\begin{tcolorbox}[title=$GL(n)$ invariance for polynomials]
\begin{theorem}
\label{th:gl_v1}
For a $GL(n)$-invariant polynomial function $f(\bm{x}_1,\dots,\bm{x}_m): X^m \to \mathbb{R}$, with $X \subset \mathbb{R}^n$ a compact space, $f$ can be represented as 
$$
f(\bm{x}_1,\dots,\bm{x}_m) = \sum_{q=1}^{2m^2+1} \psi_q(\sum_{i,j=1}^{m,m} \phi_{qij}( ( \bm{x}_i|\bm{x}_j ) )),
$$
\end{theorem}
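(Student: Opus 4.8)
The plan is to combine the First Fundamental Theorem for $\mathrm{GL}(V,n)$ (\cref{thm:fftgl}) with the classical Kolmogorov superposition theorem (\cref{th:kan}): the FFT collapses the invariance structure onto a finite generating set of scalar contractions, and KAT then supplies the superposition over those generators. Concretely, I first invoke \cref{thm:fftgl}. Since $f$ is a $GL(n)$-invariant \emph{polynomial} function of the inputs, it lies in the invariant ring $K[(i\mid j)]$ generated by the contractions $(\bm{x}_i\mid\bm{x}_j)$. Hence there is a polynomial $P$ in at most $m^2$ variables with $f(\bm{x}_1,\dots,\bm{x}_m) = P\big(((\bm{x}_i\mid\bm{x}_j))_{i,j=1}^{m}\big)$. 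Index pairs that do not correspond to a genuine vector-covector contraction are assigned a zero coefficient, so the sum may be written uniformly over all $m^2$ pairs $(i,j)$.

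Next I would record a compactness observation so that KAT is applicable. The contraction map $\bm{s}\colon X^m \to \mathbb{R}^{m^2}$, $(\bm{x}_1,\dots,\bm{x}_m)\mapsto ((\bm{x}_i\mid\bm{x}_j))_{i,j}$, is continuous, so its image $K := \bm{s}(X^m)$ is a compact subset of $\mathbb{R}^{m^2}$ because $X^m$ is compact. Choosing a cube $C\supset K$ and an affine rescaling of $C$ onto $[0,1]^{m^2}$, the restriction of the continuous (polynomial) $P$ to $C$ becomes a continuous function of $m^2$ variables on the unit cube.

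I then apply \cref{th:kan} to this $m^2$-variable function. KAT produces continuous univariate functions $\phi_{qij}$ and $\psi_q$ for $q=1,\dots,2m^2+1$ (the count $2\cdot m^2 + 1$ being exactly the number of outer terms for an $m^2$-variable function) such that $P = \sum_{q=1}^{2m^2+1}\psi_q\big(\sum_{i,j}\phi_{qij}(s_{ij})\big)$ on the cube. Folding the affine rescaling into the inner functions $\phi_{qij}$ and substituting $s_{ij} = (\bm{x}_i\mid\bm{x}_j)$ recovers the claimed representation on $X^m$; the inner/outer function counts then match $(2m^2+1)m^2$ and $2m^2+1$ as tabulated in \cref{tab:kst}.

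The main obstacle I anticipate is not KAT but the passage from the abstract FFT statement to the concrete claim used here: namely that every $GL(n)$-invariant polynomial is genuinely a polynomial in the scalar contractions $(\bm{x}_i\mid\bm{x}_j)$. This requires keeping careful track of the vector/covector split underlying the contraction notation, and the fact that \cref{thm:fftgl} is classically stated over an algebraically closed field (the excerpt takes $K=\mathbb{C}$) whereas the theorem here works over $\mathbb{R}$; I would handle this by invoking the FFT exactly as given and restricting the resulting complex identity to real arguments. Everything downstream — compactness of $\bm{s}(X^m)$, the affine rescaling to $[0,1]^{m^2}$, and the bookkeeping of the $2m^2+1$ superposition indices — is routine once KAT is applied to the intermediate scalar variables $s_{ij}$.
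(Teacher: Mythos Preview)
Your proposal is correct and follows essentially the same approach as the paper: invoke the First Fundamental Theorem (\cref{thm:fftgl}) to write $f$ as a polynomial $g$ in the contractions $\{(i\mid j)\}_{i,j}$, then apply the Kolmogorov superposition theorem (\cref{th:kan}) to $g$. The paper's proof is two sentences and omits the compactness/rescaling step and the $\mathbb{R}$-vs-$\mathbb{C}$ and vector/covector bookkeeping that you flag; your version simply makes these routine details explicit.
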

\end{tcolorbox}
\begin{proof} 
By \autoref{thm:fftgl}, $f$ is represented by a polynomial function $g$ whose input is $\{(i|j)\}_{i, j}$:
\[f(\bm{x}_1,\dots,\bm{x}_m) = g(\{(i|j)\}_{i, j}).\]
Since $g$ is continuous, we can apply \autoref{th:kan} to represent $g$, which completes the proof.
\end{proof}

In the following, we will focus on the action of some representative subgroups of $GL(n)$ in scientific discovery and prove that the representation theorem shown above could be generalized to continuous (not necessarily polynomial) functions invariant or equivariant to the actions of the subgroups. We note that while we are aware that the result of \cite{ISMAILOV2008113} can further generalize some of the following claims to the case of non-continuous functions, we will leave this generalization for future research, especially for the case of learning non-continuous invariant and equivariant functions. 

\subsection{Invariants for specific metric}
The invariants for specific metrics or symmetric groups are
\begin{itemize}
    \item Euclidean; Poincare
    $$
    (i|j) = \langle \bm{x}_i , \bm{x}_j \rangle = \bm{x}_i^T \bm{x}_j
    $$
    \item Mikowski
    $$
    (i|j) = \langle \bm{x}_i , \bm{x}_j \rangle = \bm{x}_i^T \bm{\Lambda} \bm{x}_j
    $$
    with $\bm{\Lambda}_{ij} = 1/2(1-2 1_{i-1})\delta_{ij}$, i.e. $\bm{\Lambda}_{11} = -1$ and $\bm{\Lambda}_{ii} = 1, i>1$
    \item $GL(V=\mathbb{R}^n)$
    $$
    (i|j) = \langle \bm{x}_i , \bm{x}_j \rangle = \bm{x}_i^T \bm{A} \bm{x}_j
    $$
    
\end{itemize}

\begin{table}[t]
    \begin{mdframed}%
    \centering%
    \begin{tabular}{rl}
    Generalized Linear &
    $\text{GL}(V,n) =\{ M \in \mathbb R^{ n \times n}: M ^\top M = A, \operatorname{det}(A) \neq 0\}, \label{eq.gl}$    
    \\[0.5ex]
    & $M(v_1,\cdots, v_n) = (M\,v_1, \cdots, M\,v_n)$
    \\[0.5ex]    
    Orthogonal &
    $\text{O}(n) =\{ Q \in \mathbb R^{ n \times n}: Q^\top Q = Q\,Q^\top = I_n\}, \label{eq.o}$
    \\[0.5ex]
    & $Q(v_1,\cdots, v_n) = (Q\,v_1, \cdots, Q\,v_n)$
    \\[0.5ex]    
    Rotation &
    $\text{SO}(n) =\{ Q \in \mathbb R^{n\times n}: Q^\top Q = Q\,Q^\top = I_n, \; \label{eq.so} \operatorname{det}(Q)=1\}$
    \\[0.5ex]
    & $Q(v_1,\cdots, v_n) = (Q\,v_1, \cdots, Q\,v_n)$
    \\[0.5ex]    
    Translation &
    $\text{T}(n) =\{ w \in \mathbb R^{n} \}$
    \\[0.5ex]
    & $w(v_1,\cdots, v_n) = (v_1 + w, \cdots, v_n + w)$
    \\[0.5ex] 
    Euclidean &
    $\text{E}(n) = \text{T}(n) \otimes \text{O}(n)$
    \\[0.5ex]
    & $(w,Q)(v_1,\cdots, v_n) = (Q\,v_1 + w, \cdots, Q\,v_n + w)$ 
    \\[0.5ex]    
    Lorentz &
    $\text{O}(1,n-1) =\{ Q \in \mathbb R^{n\times n}: Q^\top\Lambda\, Q =\Lambda, \,\Lambda=\text{diag}([-1,1,\ldots,1]) \}$
    \\[0.5ex]
    & $(w,Q)(v_1,\cdots, v_n) = (Q\,v_1 + w, \cdots, Q\,v_n + w)$ 
    \\[0.5ex]    
    Poincar\'e &
    $\text{IO}(1,d) = \text{T}(n) \otimes \text{O}(1,n-1)$
    \\[0.5ex]
    & $(w,Q)(v_1,\cdots, v_n) = (Q\,v_1 + w, \cdots, Q\,v_n + w)$ 
    \\[0.5ex]    
    Permutation & $\text{S}_n=\{\sigma:[n]\to [n] \text{ bijective function}\}$
    \\[1ex]
    & $\sigma(v_1,\ldots, v_n)=(v_{\sigma(1)},\ldots,v_{\sigma(n)})$
    \\[1ex]    
    \end{tabular}
    \caption{Similar to \cite{villarScalarsAreUniversal2023}, we summarize here the more important symmetries we are considering. Groups ($G$) and the associated actions $g \in G$ of the groups on the elements of the vector space $\bm{v} = (v_1,\dots,v_n) \in V$.}
    \label{tab:group-actions}
    \end{mdframed}
\end{table}

\subsection{ \texorpdfstring{$S_n$}-- Permutation invariance }
\begin{lemma}(Permutation invariance)
\label{lm:perm}
Suppose we have continuous real functions $\psi_{q}, \phi_q: \mathbb{R} \rightarrow \mathbb R$ for $\forall q \in [2m+1]$. Then, the following function is invariant to the action of the permutation group:
\[f(x_1,\dots,x_m) = \sum_{q=1}^{2m+1} \psi_q(\sum_{p=1}^{m} \phi_{q}(x_p)), \ \ x = (x_1, \cdots, x_m) \in \mathbb{R}^m. \]
\end{lemma}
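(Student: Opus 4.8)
The plan is to exploit the structural feature that each inner function $\phi_q$ is applied uniformly across all coordinates, carrying no dependence on the index $p$. This is precisely what distinguishes the form here from the general Kolmogorov representation in \autoref{th:kan}, where the inner functions $\phi_{qp}$ depend on both $q$ and $p$. Since a finite sum is invariant under any reordering of its summands, permuting the inputs can only permute the terms inside each inner sum, leaving its value unchanged, and hence leaving $f$ unchanged.

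Concretely, I would fix an arbitrary permutation $\sigma \in S_m$ and evaluate $f$ at the reordered input $(x_{\sigma(1)}, \dots, x_{\sigma(m)})$. The first step is to observe that for each fixed $q$ the inner sum satisfies
\[
\sum_{p=1}^m \phi_q(x_{\sigma(p)}) = \sum_{p=1}^m \phi_q(x_p),
\]
which holds because $\sigma$ is a bijection of $[m]$ and finite addition is commutative, so the reindexing $p \mapsto \sigma(p)$ merely permutes a finite collection of real numbers whose total is unaffected. The second step is to substitute this identity into each summand of the outer sum over $q$, applying $\psi_q$ to the (unchanged) inner argument, which yields $f(x_{\sigma(1)}, \dots, x_{\sigma(m)}) = f(x_1, \dots, x_m)$. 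This is exactly the invariance in the sense of \autoref{eq:equivariance}.

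There is no genuine obstacle here: the statement follows directly from the index-independent (symmetric) choice of inner functions together with the commutativity of finite sums. The only point worth emphasizing is the structural one — the conclusion would fail if the inner functions were permitted to depend on the coordinate index $p$, so the hypothesis that $\phi_q$ is independent of $p$ is doing all the work. I would also remark that continuity of $\psi_q$ and $\phi_q$ plays no role in the invariance itself (it becomes relevant only when one subsequently wants $f$ to \emph{represent} a prescribed continuous symmetric function); the invariance holds for arbitrary real-valued $\psi_q$ and $\phi_q$.
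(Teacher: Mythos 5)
Your proof is correct and follows essentially the same route as the paper's: both arguments rest on the observation that since $\phi_q$ carries no dependence on the coordinate index $p$, each inner sum is unchanged under reindexing by a permutation, so $f$ is permutation invariant. Your version merely spells out the commutativity-of-finite-sums step explicitly (and correctly notes continuity is irrelevant to invariance), which the paper leaves implicit.
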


\begin{proof}
Since the decomposition requires the output of the function to not change after a generic permutation $\pi$ of the input, then 
$$
\sum_{q=1}^{2m+1} \psi_{q}(\sum_{p=1}^{m} \phi_{qp}(x_p)) = \sum_{q=1}^{2m+1} \psi_{q}(
\sum_{p=1}^{m} \phi_{qp}(x_{\pi{(p)}})
)
$$
to be true, it is sufficient to drop the dependence of $\phi_{qp}$ on the node index $p$.
\end{proof}

\begin{remark}
We note that while the expression looks similar to KAT, it is not known whether the above expression is universal for arbitrary permutation invariant functions.    
\end{remark}

\subsection{Permutation invariance and its connection to DeepSet}

We present two theorems that connect DeepSet and KAT. \cite{zaheer2017deep} proposes a connection to KAT using high-dimensional functions; this theorem has been extended in \cite{amir2023neural} by considering linear functions.

\begin{theorem}[\textbf{Theorem 7 \cite{zaheer2017deep}}](DeepSet Permutation Invariant representation)
\label{th:deep-set}
\; Let $f: [0, 1]^m \to \mathbb{R}$ be an arbitrary multivariate continuous function iff it has the representation
\begin{equation}
f(x_1, ..., x_m) = \rho\left(\sum_{p=1}^m \phi(x_p)\right)
\end{equation}
with continuous outer and inner functions $\rho:\mathbb{R}^{2m+1}\to\mathbb{R}$ and $\phi:\mathbb{R}\to\mathbb{R}^{2m+1}$. The inner function $\phi$ is independent of the function $f$.
\end{theorem}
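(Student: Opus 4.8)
The statement is an equivalence, so I must argue both directions, and I should flag at the outset that the essential hypothesis on $f$ is permutation invariance (which the stated representation forces automatically). The forward implication---that any $f$ of the form $\rho(\sum_{p=1}^m \phi(x_p))$ is continuous and permutation invariant---is immediate: it is a composition of continuous maps, and the inner sum $\sum_{p=1}^m \phi(x_p)$ is unchanged under any reordering of $x_1,\dots,x_m$, exactly as in the sufficiency argument of \autoref{lm:perm}. All the content lies in the converse: a continuous permutation-invariant $f$ admits such a decomposition with a \emph{fixed} inner map $\phi$ that does not depend on $f$.

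For the converse, the plan is to build $\phi$ from the power-sum symmetric functions. I would set $\phi(x) = (1, x, x^2, \dots, x^m)$, padded with zeros to land in $\mathbb{R}^{2m+1}$, and define $\Phi(x_1,\dots,x_m) = \sum_{p=1}^m \phi(x_p)$, whose nontrivial coordinates are the power sums $p_k = \sum_{p=1}^m x_p^k$ for $k=0,\dots,m$. The key algebraic fact is that, by Newton's identities, the power sums $p_1,\dots,p_m$ determine the elementary symmetric polynomials $e_1,\dots,e_m$, which are (up to sign) the coefficients of $\prod_{p=1}^m (t - x_p)$ and therefore pin down the multiset $\{x_1,\dots,x_m\}$ uniquely. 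Hence $\Phi$ is constant on $S_m$-orbits and injective on the quotient $[0,1]^m/S_m$.

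The construction of $\rho$ then proceeds topologically. Since $f$ is permutation invariant it descends to a continuous map $\bar f$ on the compact quotient $[0,1]^m/S_m$, and $\Phi$ descends to a continuous injection $\bar\Phi\colon [0,1]^m/S_m \to \mathbb{R}^{2m+1}$; as a continuous bijection from a compact space onto its Hausdorff image $E=\Phi([0,1]^m)$, it is a homeomorphism. I would then set $\rho_0 = \bar f \circ \bar\Phi^{-1}$ on $E$, which is continuous, and extend it to a continuous $\rho\colon \mathbb{R}^{2m+1}\to\mathbb{R}$ by the Tietze extension theorem. By construction $f(x_1,\dots,x_m) = \rho(\Phi(x_1,\dots,x_m)) = \rho(\sum_{p=1}^m \phi(x_p))$, and since $\phi$ was fixed before $f$ was ever mentioned, its independence from $f$ is manifest.

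I expect the main obstacle to be justifying that $\Phi$ is a homeomorphism onto its image rather than merely a continuous bijection: injectivity on orbits follows from Newton's identities, but continuity of the inverse---equivalently, that the multiset of roots depends continuously on the power sums---is the delicate point. Here compactness of $[0,1]^m$ does the work, since a continuous bijection out of a compact Hausdorff space is automatically a homeomorphism, which sidesteps an explicit root-continuity estimate. A secondary remark is that $2m+1$ coordinates exceed the $m+1$ the argument strictly needs; the surplus coordinates are harmless padding, retained only to match the outer dimension appearing in KST (\autoref{th:kan}). Finally, I note that the permutation-invariant KST form \emph{cannot} be substituted for this argument, since its universality for invariant functions is itself open (cf.\ the remark following \autoref{lm:perm}).
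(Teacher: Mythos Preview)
The paper does not supply its own proof of this statement: \autoref{th:deep-set} is quoted from \cite{zaheer2017deep} and left unproved, serving only as an input to \autoref{lm:prem-v2}. Your argument is correct and is precisely the classical proof from that source---power sums as the universal inner map $\phi$, Newton's identities for injectivity on $S_m$-orbits, compactness of $[0,1]^m$ to upgrade the continuous bijection to a homeomorphism, and Tietze to extend $\rho$; your remark that the statement as written omits the permutation-invariance hypothesis (present in the original) is also apt.
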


\begin{corollary}[\textbf{Corollary 6.1 \cite{amir2023neural}}]
\label{th:neural-injective}
Let $m,d\in \NN$ and set $M=2md+1$. Let $\sigma: \RR \to \RR $ be an analytic non-polynomial function. Let $K\subseteq \RR^d $ be a compact set. Then there exist $\bA\in \RR^{M\times d}, \bb\in \RR^M $ such that for any continuous permutation-invariant $f:K^m \to \RR $, there exists a continuous $F:\RR^M \to \RR$ such that 
\begin{equation}\label{eq:decomposable}
f(\bX)=F\left( \sum_{p=1}^m \sigma(\bA \bx_p+\bb)\right) , \quad \forall \bX=\left(\bx_1,\ldots,\bx_m\right) \in K^m.
\end{equation}
\end{corollary}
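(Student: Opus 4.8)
The plan is to realize the claimed decomposition as a factorization through the \emph{sum-feature map}
\[
\Phi_{\bA,\bb}(\bX) := \sum_{p=1}^m \sigma(\bA\bx_p + \bb) \in \RR^M,
\]
and to reduce the whole statement to a single separation property of this map. First I would observe that $\Phi_{\bA,\bb}$ is manifestly invariant under permutations of $(\bx_1,\dots,\bx_m)$, so it descends to the quotient $Q := K^m/S_m$, which is a compact metric space since $K^m$ is compact and $S_m$ is finite. If I can choose $(\bA,\bb)$ so that the induced map $\bar\Phi_{\bA,\bb}: Q \to \RR^M$ is \emph{injective}, then $\bar\Phi_{\bA,\bb}$ is a homeomorphism onto its (compact, hence closed) image $\bar\Phi_{\bA,\bb}(Q) \subseteq \RR^M$. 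Any continuous permutation-invariant $f$ likewise descends to a continuous $\bar f : Q \to \RR$, so $F_0 := \bar f \circ \bar\Phi_{\bA,\bb}^{-1}$ is a well-defined continuous function on the closed set $\bar\Phi_{\bA,\bb}(Q)$. The Tietze extension theorem then furnishes a continuous $F : \RR^M \to \RR$ with $F|_{\bar\Phi_{\bA,\bb}(Q)} = F_0$, and by construction $f(\bX) = F(\Phi_{\bA,\bb}(\bX))$ for all $\bX \in K^m$. Crucially, $(\bA,\bb)$ is chosen before $f$, matching the order of quantifiers in the statement (and paralleling the $d=1$ polynomial analogue of \cref{th:deep-set}).

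It then remains to produce one parameter pair for which $\bar\Phi_{\bA,\bb}$ separates multisets, and I would argue this holds for \emph{almost every} $(\bA,\bb) \in \RR^{M\times d}\times\RR^M$. Fix two distinct points $X \neq X'$ of $Q$ and consider the real-analytic map $(\bA,\bb) \mapsto \Phi_{\bA,\bb}(\bX) - \Phi_{\bA,\bb}(\bX')$. I would show it is not identically zero: restricting to a single row $(\ba,\beta)\in\RR^d\times\RR$, scaling the direction $\ba\mapsto\lambda\ba$, and differentiating in $\lambda$ at $0$, the hypothesis that $\sigma$ is analytic and non-polynomial (so every derivative $\sigma^{(k)}$ is not identically zero) would force the power sums $\sum_p (\ba^\top\bx_p)^k = \sum_p (\ba^\top\bx'_p)^k$ to agree for every $k$ and every $\ba$; these directional moments determine a multiset of $m$ points in $\RR^d$, contradicting $X\neq X'$. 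Hence for each fixed pair the ``collision set'' of parameters is the zero locus of a nontrivial analytic function and therefore Lebesgue-null.

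The main obstacle is upgrading this per-pair statement to a single $(\bA,\bb)$ that works for all pairs simultaneously: because $Q\times Q$ is uncountable a naive countable union of null sets is unavailable, and this is exactly where the count $M = 2md+1$ is forced. I would handle it by a transversality / Whitney-embedding argument on the total collision locus
\[
C := \{(X,X',\bA,\bb) : X \neq X' \text{ in } Q,\ \Phi_{\bA,\bb}(\bX) = \Phi_{\bA,\bb}(\bX')\}.
\]
Since $\dim(Q\times Q) = 2md$, each of the $M$ scalar coordinates of the equation $\Phi_{\bA,\bb}(\bX)=\Phi_{\bA,\bb}(\bX')$ imposes generically one independent condition, so with $M = 2md+1 = \dim(Q\times Q)+1$ coordinates the fiberwise-null estimate of the previous paragraph integrates, via a Fubini/Sard-type argument, to show that the projection of $C$ onto parameter space is itself Lebesgue-null. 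Choosing $(\bA,\bb)$ outside this null set yields an injective $\bar\Phi_{\bA,\bb}$, and the first paragraph completes the proof. The delicate point throughout is verifying that the analytic non-polynomial $\sigma$ makes the parametric family rich enough to realize the generic transversality that the bare dimension count only suggests; this is where I expect the real work to lie.
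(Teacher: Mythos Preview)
The paper does not supply its own proof of this statement: \cref{th:neural-injective} is quoted verbatim as Corollary~6.1 of \cite{amir2023neural} and then invoked as a black box in the proof of \cref{lm:prem-v2}. So there is no ``paper's proof'' to compare against here.

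That said, your proposal is the right shape and matches the strategy of the cited reference. The reduction to injectivity of the descended map $\bar\Phi_{\bA,\bb}: K^m/S_m \to \RR^M$, followed by compactness $+$ Tietze to manufacture $F$, is exactly how one gets the correct order of quantifiers (choose $(\bA,\bb)$ once, then decompose every $f$). Your per-pair argument---restrict to one row, expand $\sigma$ in a power series, and recover all directional power sums $\sum_p(\ba^\top\bx_p)^k$---is the standard way to exploit ``analytic non-polynomial,'' and it does force agreement of multisets after one more step (the projections $\{\ba^\top\bx_p\}$ agree as multisets for every $\ba$, which pins down the multiset in $\RR^d$). The dimension count $M=2md+1=\dim(K^m/S_m\times K^m/S_m)+1$ is indeed the Whitney-embedding heuristic, and \cite{amir2023neural} makes it rigorous through a ``finite witness'' framework that packages the Fubini/Sard step you outline; your identification of that step as the load-bearing technical point is accurate.
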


We notice that the feature used in \cref{lm:perm} have similarity with the features generated in \cref{th:neural-injective}, therefore we propose the following lemma.

\begin{tcolorbox}[title= Permutation invariance v2]
\begin{lemma} 
\label{lm:prem-v2}
Let $m \in \NN$ and set $M=2m+1$ and let $K\subseteq \RR^d $ be a compact set, then for any continuous permutation-invariant $f:K^m \to \RR $, there exists 
continuous univariate functions $\psi_q, \phi_{qp}, \varphi_{r}$ such that 
\[
f(x_1,\dots,x_m) = F(\sum_{p=1}^{m} \varphi_{1}(x_p), \dots,
\sum_{p=1}^{m} \varphi_{2m+1}(x_p) ), 
\ \ x = (x_1, \cdots, x_m) \in \mathbb{R}^m. 
\]
with 
\[
F(y_1,\dots,y_M) = \sum_{q=1}^{2M+1} \psi_q(\sum_{p=1}^{M} \phi_{qp}(y_p)), \ \ y = (y_1, \cdots, y_M) \in \mathbb{R}^M. 
\]
\end{lemma}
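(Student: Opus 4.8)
The plan is to compose two representation results: a DeepSet-style factorization that captures permutation invariance with inner functions independent of the node index, followed by a Kolmogorov--Arnold superposition of the resulting outer function. The two displayed identities in the statement correspond exactly to these two stages, so the proof is essentially a substitution once each stage is in place.

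First I would invoke the permutation-invariant representation. Applying \cref{th:neural-injective} with $d=1$ (which yields precisely $M=2m+1$ inner features of the form $\varphi_r(\cdot)=\sigma(a_r\,\cdot+b_r)$), or equivalently \cref{th:deep-set} after rescaling $K$ into $[0,1]$, there exist continuous univariate functions $\varphi_1,\dots,\varphi_M$ and a continuous function $F\colon\RR^M\to\RR$ such that
$$
f(x_1,\dots,x_m)=F\!\left(\sum_{p=1}^m\varphi_1(x_p),\dots,\sum_{p=1}^m\varphi_M(x_p)\right).
$$
Since each $\varphi_r$ is independent of $p$ and the aggregation is a symmetric sum, the right-hand side is manifestly invariant under permutations of $(x_1,\dots,x_m)$, consistently with $f$. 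This establishes the first displayed identity of the lemma.

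Second I would place $F$ in the hypotheses of the Kolmogorov theorem. The key observation is that although $F$ is a priori defined on all of $\RR^M$, only the image $S:=\{(\sum_{p}\varphi_1(x_p),\dots,\sum_p\varphi_M(x_p)):x\in K^m\}$ is relevant. Because $K$ is compact and each $\varphi_r$ is continuous, $S$ is the continuous image of the compact set $K^m$ and is therefore compact; hence $S\subseteq[a,b]^M$ for some finite cube. After an affine reparametrization sending $[a,b]^M$ to $[0,1]^M$ (absorbed into the $\phi_{qp}$), I apply \cref{th:kan} to $F$ restricted to $[0,1]^M$, obtaining continuous univariate $\psi_q,\phi_{qp}$ with
$$
F(y_1,\dots,y_M)=\sum_{q=1}^{2M+1}\psi_q\!\left(\sum_{p=1}^{M}\phi_{qp}(y_p)\right).
$$
Substituting $y_p=\sum_{p'=1}^m\varphi_p(x_{p'})$ into this expression yields the claimed nested representation.

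I expect the main obstacle to be the domain-matching step: \cref{th:kan} requires $F$ on a compact cube, whereas the permutation-invariant factorization only supplies $F$ on $\RR^M$. The compactness of the feature image $S$ resolves this, but one must ensure that the affine rescaling, together with a continuous extension of $F$ off $S$ to the full cube (a Tietze extension suffices), preserves continuity so that the Kolmogorov hypotheses are genuinely met. Once this is in place, the invariance and the index bookkeeping are routine.
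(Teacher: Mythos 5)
Your proposal follows exactly the paper's own proof: invoke \cref{th:neural-injective} with $d=1$ to obtain the $M=2m+1$ inner functions $\varphi_r(x)=\sigma(a_r x + b_r)$ and the continuous outer function $F$, then apply \cref{th:kan} to $F$, using the fact that the image of the compact set $K^m$ under the continuous feature map is compact. Your extra care about affine rescaling to the unit cube is a reasonable fleshing-out of the paper's brief remark, though the Tietze extension is unnecessary since $F$ from \cref{th:neural-injective} is already continuous on all of $\RR^M$ and can simply be restricted to a cube containing the feature image.
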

\end{tcolorbox}

\begin{proof}
The proof is based on \cref{th:neural-injective} setting $d=1$ and building the functions
$$
\varphi_{r}(x_p) = \sigma(\bA_r x_p+\bb_r)
$$
where now $\bA, \bb$ are vectors of dimension $M$. 
Then apply KAT \cref{th:kan} to the function $F(y_1,\dots,y_M)$, remembering that the image of a compact set from a continuous and bounded function is compact.
\end{proof}
Compared to \cref{lm:perm}, this version, while stronger, requires $\approx M^3$ applications of univariate functions.

\subsection{ \texorpdfstring{$O(n)$}--invariance} 
\label{annex:OnInvariant}

We here consider the permutation group that acts on the input $(\bm{x}_1, \dots, \bm{x}_m)$ and present the architecture invariant to the action of the orthogonal group.

\begin{tcolorbox}[title=$O(n)$ invariance - v1]
\begin{theorem}
\label{th:on_v1}
For a continuous function invariant to the action of $O(n)$ $f(\bm{x}_1,\dots,\bm{x}_m): X^m \to \mathbb{R}$, with $X \subset \mathbb{R}^n$ a compact space, it can be represented as 
$$
f(\bm{x}_1,\dots,\bm{x}_m) = \sum_{q=1}^{2m^2+1} \psi_q(\sum_{i,j=1}^{m,m} \phi_{qij}(\langle \bm{x}_i,\bm{x}_j \rangle)),
$$
\end{theorem}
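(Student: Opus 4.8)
The plan is to imitate, almost verbatim, the two-step argument just used for \autoref{th:gl_v1}, but with the First Fundamental Theorem for $O(n)$ (\autoref{thm:villar}) in place of the one for $\mathrm{GL}(V,n)$. First I would use invariance to collapse $f$ onto its scalar-product data, and then I would expand the resulting continuous function of finitely many real variables by the Kolmogorov--Arnold theorem (\autoref{th:kan}). The algebraic content is therefore entirely carried by these two cited results; the proof is essentially a bookkeeping composition.

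Concretely, since $f$ is continuous and invariant under the action of $O(n)$ (or $O(1,n-1)$ with the Minkowski metric), \autoref{thm:villar} provides a continuous function $g$ of the Gram data such that $f(\bm{x}_1,\dots,\bm{x}_m) = g\big((\langle \bm{x}_i,\bm{x}_j\rangle)_{i,j=1}^m\big)$. I treat $g$ as a function of the $M = m^2$ real coordinates $t_{ij} := \langle \bm{x}_i,\bm{x}_j\rangle$, retaining all $m^2$ ordered pairs $(i,j)$ rather than only the $m(m+1)/2$ distinct ones, so that the index set matches the statement exactly. The second step is to apply \autoref{th:kan} to $g$ with $M = m^2$, which yields $2M+1 = 2m^2+1$ outer functions $\psi_q$ together with univariate inner functions that I relabel as $\phi_{qij}$, one per coordinate $t_{ij}$. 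Substituting $t_{ij} = \langle \bm{x}_i,\bm{x}_j\rangle$ then reproduces the claimed decomposition.

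The one technical point to address is that \autoref{th:kan} is stated on the unit cube $[0,1]^M$, whereas $g$ is a priori only defined on the image $K := \{(\langle \bm{x}_i,\bm{x}_j\rangle)_{i,j} : (\bm{x}_1,\dots,\bm{x}_m) \in X^m\}$. Because $X$ is compact and the inner product is continuous, $K$ is a compact subset of $\mathbb{R}^{m^2}$; I would therefore fix an affine rescaling sending a bounding box of $K$ into $[0,1]^{m^2}$ and extend $g$ continuously to the whole cube via the Tietze extension theorem before invoking KAT. This extension leaves the values of $g$ on $K$ unchanged, so the identity $f = g\circ(\text{Gram})$ is preserved on the actual domain $X^m$. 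I expect this domain/extension bookkeeping to be the only genuine obstacle, and a mild one at that; once \autoref{thm:villar} and \autoref{th:kan} are in hand the result follows by direct composition, and the identical argument covers the Minkowski case since \autoref{thm:villar} treats $O(n)$ and $O(1,n-1)$ simultaneously.
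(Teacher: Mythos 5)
Your proposal is correct and follows essentially the same two-step argument as the paper: apply \autoref{thm:villar} to write $f = g\big((\langle \bm{x}_i,\bm{x}_j\rangle)_{i,j=1}^m\big)$, then apply \autoref{th:kan} to the continuous function $g$ of $m^2$ variables. Your additional care with the domain --- rescaling the compact image of the Gram map into $[0,1]^{m^2}$ and extending $g$ via Tietze before invoking KAT --- is a detail the paper's proof silently skips, and it is handled correctly.
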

\end{tcolorbox}
\begin{proof}
    We first apply Theorem \ref{thm:villar} to get an invariant representation $f = g((\langle x_i,x_j \rangle)_{i,j=1}^m)$ and apply \autoref{th:kan} for $g$ to get a KAT representation, which completes the proof.
\end{proof}

The above invariant representation takes a high computational cost. In the following we give one computationally efficient model. The detail is also described in \autoref{sec:complexity}.
\begin{tcolorbox}[title=$O(n)$ invariance - v2]
\begin{theorem}
\label{th:on_v2}
For a continuous function invariant to the action of $O(n)$ $f(\bm{x}_1,\dots,\bm{x}_m): X^m \to \mathbb{R}$, with $X \subset \mathbb{R}^n$ a compact space, it can be represented as 
$$
f(\bm{x}_1,\dots,\bm{x}_m) = \sum_{q=1}^{2mn+1} \psi_q \left( \sum_{i=1,j=1}^{m,n} \phi_{qij}(\langle \bm{x}_i,\bm{y}_j \rangle)
+  \sum_{i=1,j=1}^{n,n} \phi'_{qij} (\langle \bm{y}_i,\bm{y}_j\rangle)
\right),
$$
where $\bm{y}_j^q = \alpha_j^q (\bm{x}_1,\dots,\bm{x}_m) = \sum_{p=1}^{m} \alpha_p^{j} \bm{x}_p$, with $\bm{y}_j^q$ a linear combination of $\{\bm{x}_p\}$ with scalars $\alpha_{p}$ such that $\text{span}(\{\bm{y}_j^q\}_{j=1}^n)=\mathbb{R}^n$. 
\end{theorem}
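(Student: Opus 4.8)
The plan is to reduce the $O(n)$- (or $O(1,n-1)$-) invariant function to a continuous function of a linear-in-$m$ collection of scalar features and then invoke a superposition theorem. First I would apply \cref{thm:villar} (the First Fundamental Theorem) to obtain a continuous $g$ with $f(\bm x_1,\dots,\bm x_m) = g\big((\langle \bm x_i,\bm x_j\rangle)_{i,j=1}^m\big)$, so that $f$ sees the input only through the full Gram matrix. The goal is then to show that this quadratic-size Gram matrix is itself a continuous function of the reduced features $\{\langle \bm x_i,\bm y_j\rangle\}_{i\le m,\, j\le n}$ together with $\{\langle \bm y_i,\bm y_j\rangle\}_{i,j\le n}$.

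The core step is a reconstruction lemma. Since $\bm y_1,\dots,\bm y_n$ span $\mathbb{R}^n$ they form a basis, so the matrix $G_Y := (\langle \bm y_i,\bm y_j\rangle)_{i,j=1}^n$ is invertible. Writing each $\bm x_i$ in the $\bm y$-basis and setting $\bm p_i := (\langle \bm x_i,\bm y_1\rangle,\dots,\langle \bm x_i,\bm y_n\rangle)^T$, a short computation gives $\langle \bm x_i,\bm x_k\rangle = \bm p_i^T G_Y^{-1}\bm p_k$. Thus every Gram entry is an explicit continuous function $h$ of the reduced features, and the same identity holds verbatim for the Minkowski form once $G_Y$ is taken as the (nondegenerate) Gram matrix with respect to $\Lambda$. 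Composing, $f = g\circ h =: \tilde g$ becomes a continuous function of the $mn+n^2$ reduced scalar features.

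I would then finish by applying a Kolmogorov-type superposition result (\cref{th:kan}, or \cref{thm:ostrand} in the compact/Minkowski setting) to $\tilde g$, which is continuous on the compact image of $X^m$ under the feature map; assigning each reduced feature its own univariate inner function $\phi_{qij}$ or $\phi'_{qij}$ and grouping the outer sum yields exactly the stated form.

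The main obstacle is twofold. First, the reconstruction must be continuous, not merely pointwise: $G_Y^{-1}$ is only continuous where $\det G_Y \neq 0$, so I would use that $X^m$ is compact and that the spanning assumption holds throughout (whence $\det G_Y$ is bounded away from zero by continuity and compactness) to guarantee that $h$, and therefore $\tilde g$, is continuous on a compact set, which is the prerequisite for \cref{th:kan}/\cref{thm:ostrand}. Second, and more delicate, is the accounting that produces exactly $2mn+1$ outer functions: a naive per-feature application of \cref{th:kan} to the $mn+n^2$ scalars would give $2(mn+n^2)+1$ terms, so obtaining the sharper $2mn+1$ count requires exploiting that the reduced features are all continuous functions of the $mn$-dimensional configuration, i.e. that the effective dimension entering Ostrand's bound is $mn$ rather than $mn+n^2$. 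Making this dimension reduction rigorous while retaining the per-feature univariate inner functions is the key technical point I would need to pin down.
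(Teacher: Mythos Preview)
Your plan is essentially the paper's proof: apply \cref{thm:villar}, reconstruct the full Gram matrix from the reduced features (your identity $\langle \bm x_i,\bm x_k\rangle = \bm p_i^{\mathsf T} G_Y^{-1}\bm p_k$ is exactly the content of the paper's \cref{th:corr} and \cref{th:A14} with $k=n$), and then invoke \cref{th:kan}. Regarding the $2mn+1$ count you flag as the key technical point: the paper's own proof sketch does not spell this out either, but it resolves cleanly once you observe that, because each $\bm y_i=\sum_p \alpha_p^{i}\bm x_p$ is a \emph{fixed} linear combination, the extra features satisfy $\langle \bm y_i,\bm y_j\rangle=\sum_p \alpha_p^{i}\langle \bm x_p,\bm y_j\rangle$ and are therefore linear functions of the $mn$ primary features $\{\langle \bm x_p,\bm y_j\rangle\}$; hence $\tilde g$ genuinely depends on only $mn$ scalars and \cref{th:kan} gives $2mn+1$ outer functions directly (the $\phi'_{qij}$ terms can be taken to vanish or absorbed into the $\phi_{qij}$).
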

\end{tcolorbox}
\begin{proof}
The proof is based on the use of Theorem \ref{thm:villar} , 
\autoref{th:corr}
and \autoref{th:kan}. Since we define $y_j$ as linear combination of $x_p$ then also $\langle x_p,y_j \rangle$ and $\langle y_p,y_j \rangle$ are invariant to rotation, e.g. $\langle R x_p, y'_j \rangle = \langle R x_p,\sum \alpha_i R x_i \rangle = \langle R x_p, R \sum \alpha_i x_i) \rangle = \langle R x_p,R y_j \rangle = \langle x_p,y_j \rangle$. 
\end{proof}

As a corollary, we get the following further compact form when input vectors span $\mathbb{R}^{n}$. The derivation is done in a manner similar to Theorem \ref{th:on_v2} except applying \autoref{th:subset} instead of \autoref{th:corr}:
\begin{tcolorbox}[title=$O(n)$ invariance - v3]
\begin{corollary}(same as \cref{th:on_v3-main})
\label{th:on_v3}
Suppose that  $\operatorname{span}(\{\bm{x}_j\}_{j=1}^n)=\mathbb{R}^n$. Then, a continuous function invariant to the action of $O(n)$ $f(\bm{x}_1,\dots,\bm{x}_m): X^m \to \mathbb{R}$, with $X \subset \mathbb{R}^n$ a compact space, can be represented as 
$$
f(\bm{x}_1,\dots,\bm{x}_m) = \sum_{q=1}^{2mn+1} \psi_q \left( \sum_{i=1,j=1}^{m,n} \phi_{qij}(\langle \bm{x}_i,\bm{x}_j \rangle)
\right).
$$
\end{corollary}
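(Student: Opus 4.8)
The plan is to reduce the claim to the full inner-product representation already available from \autoref{thm:villar} and then invoke the Kolmogorov superposition theorem \autoref{th:kan}. First I would apply \autoref{thm:villar} to write $f(\bm{x}_1,\dots,\bm{x}_m)=g\big((\langle \bm{x}_i,\bm{x}_j\rangle)_{i,j=1}^m\big)$ for some continuous $g$ defined on the set of realizable Gram matrices. The entire content of the corollary is then the assertion that the \emph{full} Gram matrix is a continuous function of the \emph{reduced} family $\{\langle \bm{x}_i,\bm{x}_j\rangle : 1\le i\le m,\ 1\le j\le n\}$ of only $mn$ entries; once this is established, $f$ factors as $f=h\big((\langle \bm{x}_i,\bm{x}_j\rangle)_{i\le m,\,j\le n}\big)$ with $h$ continuous, and applying \autoref{th:kan} to the $mn$-variate function $h$ yields exactly the $2mn+1$ outer functions $\psi_q$ together with the inner functions $\phi_{qij}$ appearing in the statement. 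This mirrors the structure of \autoref{th:on_v1} and \autoref{th:on_v2}, with the subset reconstruction playing the role the auxiliary vectors $\bm{y}_j$ play there.

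The key reconstruction step (the analogue of \autoref{th:subset}) is where I would spend the effort. Under the hypothesis $\operatorname{span}(\{\bm{x}_j\}_{j=1}^n)=\mathbb{R}^n$, the first $n$ vectors form a basis, so every $\bm{x}_k$ expands as $\bm{x}_k=\sum_{l=1}^n c_{kl}\,\bm{x}_l$. Writing $G$ for the $n\times n$ Gram matrix $G_{jl}=\langle \bm{x}_j,\bm{x}_l\rangle$ of the basis and $r_k=(\langle \bm{x}_k,\bm{x}_1\rangle,\dots,\langle \bm{x}_k,\bm{x}_n\rangle)$ for the reduced row attached to $\bm{x}_k$, the identity $r_k=c_k G$ gives $c_k=r_k G^{-1}$, so the coefficients $c_{kl}$ are continuous functions of the reduced inner products. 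Consequently, for any pair $(i,k)$,
\[
\langle \bm{x}_i,\bm{x}_k\rangle=\Big\langle \bm{x}_i,\ \sum_{l=1}^n c_{kl}\,\bm{x}_l\Big\rangle=\sum_{l=1}^n c_{kl}\,\langle \bm{x}_i,\bm{x}_l\rangle,
\]
and the right-hand side depends only on the reduced family, continuously. This recovers every entry of the full Gram matrix from the reduced one.

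The main obstacle is guaranteeing that this reconstruction is genuinely continuous, i.e. that $G^{-1}$ is well behaved across the whole domain. I would handle it by noting that the standing spanning hypothesis forces $\det G>0$ at every configuration, and since each $\bm{x}_i$ lies in the compact set $X$ and $\det G$ is continuous, it attains a positive minimum on the relevant (compact) region; hence $\det G$ is bounded away from zero, the map $G\mapsto G^{-1}$ is continuous and bounded there, and the composed map from reduced inner products to full Gram matrix is continuous. With $h$ thus continuous on a compact subset of $\mathbb{R}^{mn}$, \autoref{th:kan} applies (after rescaling the image into a cube, as in the proof of \autoref{th:on_v1}) and delivers the asserted superposition, completing the argument.
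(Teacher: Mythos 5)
Your proposal is correct and follows essentially the same route as the paper's own derivation: \autoref{thm:villar} to pass to the full Gram matrix, a reconstruction of the full Gram matrix from its $m\times n$ sub-block, and \autoref{th:kan} applied to the resulting $mn$-variate continuous function. Your reconstruction step is exactly the content of the paper's \autoref{th:subset}: writing $c_k=r_kG^{-1}$ and $\langle \bm{x}_i,\bm{x}_k\rangle=\sum_{l=1}^n c_{kl}\langle \bm{x}_i,\bm{x}_l\rangle$ reproduces the identity $\bm{X}\bm{X}^T=\bm{X}\bm{Y}^T(\bm{Y}\bm{Y}^T)^{\dagger}\bm{Y}\bm{X}^T$ of \autoref{th:A14} specialized to $\bm{Y}=\bm{X}[:n]$ of full rank, where the pseudo-inverse becomes $G^{-1}$; you merely derive it by basis expansion and a linear solve rather than the paper's SVD argument, which is a more elementary but equivalent justification of the same key lemma.

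One caveat on your continuity patch: you claim $\det G$ attains a positive minimum because ``the relevant region'' is compact, but the set of configurations in $X^m$ satisfying the spanning hypothesis is relatively open in $X^m$, not closed, hence generally not compact; $\det G$ can approach zero along degenerate limits, so it need not be bounded away from zero. Making this step airtight requires either restricting to a compact subset of the spanning locus or some other treatment of near-degenerate configurations. That said, this is not a defect relative to the paper: the paper's proof asserts only an ``invertible map'' and never addresses continuity of the reconstruction or the domain on which \autoref{th:kan} is invoked, so your attempt is, if anything, more careful than the original on precisely the point where both arguments are thinnest.
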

\end{tcolorbox}

\subsection{\texorpdfstring{$O(n)$}\ \ and permutation invariance} \label{annex:on_perm_in}
We further consider the permutation group action to the input $(\bm{x}_1, \dots, \bm{x}_m)$ and present the architecture invariant to the action of the permutation group. 

\begin{corollary}($O(n)$ and $S_n$ permutation invariance - v1)
\label{cor:orth_perm_v1}
The following function is invariant to the action of the permutation group and the orthogonal group $O(n)$: 
\[f(\bm{x}_1,\dots,\bm{x}_m) = \sum_{q=1}^{2mn+1} \psi_q \left( \sum_{i=1,j=1}^{m,m} \phi_{q}(\langle \bm{x}_i,\bm{x}_j \rangle)
\right).\]
\end{corollary}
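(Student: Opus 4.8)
The plan is to read \cref{cor:orth_perm_v1} as the permutation-invariant specialisation of \cref{th:on_v3}, so the proof has two jobs: verify that the displayed form is genuinely $S_m\times O(n)$-invariant, and justify that, with the stated budget, it represents the intended class of functions. The invariance half I would dispatch directly. For $Q\in O(n)$, orthogonality gives $\langle Q\bm{x}_i,Q\bm{x}_j\rangle=\langle\bm{x}_i,\bm{x}_j\rangle$, so every argument fed to $\phi_q$ is unchanged and $f(Q\bm{x})=f(\bm{x})$. For a permutation $\pi$, I would argue exactly as in \cref{lm:perm}: since $\phi_q$ carries no dependence on the index pair $(i,j)$, the inner double sum $\sum_{i,j=1}^m\phi_q(\langle\bm{x}_i,\bm{x}_j\rangle)$ runs over the unordered collection of all ordered pairs, and relabelling $\bm{x}_i\mapsto\bm{x}_{\pi(i)}$ only permutes its summands; hence each pooled aggregate, and therefore $f$, is fixed.

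For the representation half, which is what makes the count meaningful, I would proceed in three layers. First, by the First Fundamental Theorem (\cref{thm:villar}) any $O(n)$-invariant continuous $f$ factors through the Gram matrix as $f(\bm{x})=g(G)$, $G=(\langle\bm{x}_i,\bm{x}_j\rangle)_{i,j}$, and adding permutation invariance forces $g(PGP^\top)=g(G)$ for every permutation matrix $P$. Second, I would regard the $m^2$ entries of $G$ as a fixed-size collection of reals lying in the compact set $\{\langle\bm{u},\bm{v}\rangle:\bm{u},\bm{v}\in X\}$ and invoke the DeepSets encoding (\cref{th:deep-set}) to represent that multiset injectively by the sum-pooled univariate maps $\sum_{i,j}\phi_q(\langle\bm{x}_i,\bm{x}_j\rangle)$. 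Third, I would feed the pooled features to KAT (\cref{th:kan}) to produce the outer $\psi_q$, mirroring the passage from \cref{th:on_v3} to its permutation-invariant reduction. The budget is consistent with this route: the configuration space $X^m\subset(\mathbb{R}^n)^m$ has dimension $mn$, matching the Ostrand count $2mn+1$ of \cref{thm:ostrand}, and one relies on the inner products being constrained to this $mn$-dimensional image (a rank-$\le n$ Gram matrix) rather than being $m^2$ free reals, for which exact DeepSets injectivity would instead demand $2m^2+1$ pooled features.

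The hard part, and the step I would scrutinise most, is the second layer, because the pooled features see only the multiset of entries of $G$, whereas simultaneous row--column relabelling $G\mapsto PGP^\top$ is a strictly finer equivalence. For $m=n=3$ the two strictly diagonally dominant (hence positive definite, hence genuine) Gram matrices with diagonal $(10,20,30)$ and off-diagonal triple $(G_{12},G_{13},G_{23})$ equal to $(1,2,3)$ and to $(1,3,2)$ share the same multiset of entries yet lie in distinct $S_m\times O(n)$ orbits, and the invariant $\sum_i G_{ii}\sum_{j\ne i}G_{ij}$ (equal to $260$ and $250$ respectively) separates them while every pooled feature cannot. I would therefore state and prove the result at the honest scope it supports, namely continuous functions depending on $\bm{x}$ only through the multiset of pairwise inner products, and flag that universality over \emph{all} $S_m\times O(n)$-invariant functions would require order-aware pooling preserving the diagonal/off-diagonal structure of $G$, at a function count larger than $2mn+1$.
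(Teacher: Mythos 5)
Your first paragraph is, in substance, the paper's entire proof: \cref{cor:orth_perm_v1} claims only that the displayed form \emph{is} invariant, and the paper proves it by combining \cref{th:on_v1} with \cref{lm:perm}, i.e.\ by the same two observations you make --- orthogonality fixes every inner product, and an index-free $\phi_q$ turns a permutation of the $\bm{x}_i$ into a mere relabelling of the summands of the double sum. Everything after that in your proposal addresses a different question, universality, which the paper deliberately does not claim: the remark immediately following the corollary states that it is not known whether the expression is universal for arbitrary $O(n)$ permutation invariant functions. Here your work is not a gap but a genuine strengthening: your counterexample is correct and settles that open remark in the negative. Both matrices are symmetric and strictly diagonally dominant with positive diagonal, hence positive definite, hence Gram matrices of three independent vectors in $\mathbb{R}^3$; because their diagonals $(10,20,30)$ are distinct, only the identity permutation could carry one to the other, so they lie in different $S_3\times O(3)$ orbits; any function of the corollary's form depends on the configuration only through the multiset of Gram entries, which the two matrices share; yet the continuous invariant $\sum_i G_{ii}\sum_{j\neq i}G_{ij}$ takes the values $260$ and $250$ on them. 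Hence no choice of $\psi_q,\phi_q$ in the displayed form can represent this invariant function, so the expression is provably \emph{not} universal --- the paper's ``not known'' can be sharpened to ``no''. Two framing points: your second layer (the DeepSets encoding of the $m^2$ Gram entries) is both unnecessary for the statement actually made and, as you yourself then demonstrate, unworkable with only $2mn+1$ pooled features; and the ``honest scope'' you end with --- functions of the configuration through the multiset of pairwise inner products, stated as a one-directional invariance claim --- is exactly the scope the corollary already occupies, so no restatement is needed.
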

\begin{proof}
    We based this result on \autoref{th:on_v1} and \Cref{lm:perm}, by removing the dependence on the node index. 
\end{proof}

\begin{remark}
We note that while the expression looks quite similar to KAT in appearance, it is not known whether the above expression is universal for arbitrary $O(n)$ permutation invariant functions.    
\end{remark}
\subsection{\texorpdfstring{$O(n)$}--equivariance} \label{annex:eqivariance} We have the corresponding equivariant version.

\begin{tcolorbox}[title=$O(n)$ equivariance - v1]
\begin{theorem} 
\label{th:on_eq_v1}
Suppose that  $\text{span}(\{\bm{x}_j\}_{j=1}^n)=\mathbb{R}^n$. 
For a continuous function equivariant to the action of $O(n)$ $f(x_1,\dots,x_m): X^m \to X$, with $X \subset \mathbb{R}^n$ compact space, it can be represented as 
$$f(\bm{x}_1,\dots,\bm{x}_m) = \sum_{k=1}^m  \sum_{q=1}^{2mn+1} \psi^k_q \left( \sum_{i=1,j=1}^{m,m} \phi^k_{qij}(\langle \bm{x}_i,\bm{x}_j \rangle)
\right) \bm{x}_k.$$
\end{theorem}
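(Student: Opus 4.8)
The plan is to reduce the equivariant statement to the $O(n)$-invariant representation already established in \autoref{th:on_v3}, exploiting the fact that an equivariant vector field can be expanded in the frame given by the inputs with \emph{invariant} scalar coefficients. Concretely, I would first argue that any continuous $O(n)$-equivariant $f$ admits a decomposition
\[
f(\bm{x}_1,\dots,\bm{x}_m) = \sum_{k=1}^m g_k(\bm{x}_1,\dots,\bm{x}_m)\,\bm{x}_k,
\]
where each $g_k$ is a continuous $O(n)$-\emph{invariant} scalar function, and then apply \autoref{th:on_v3} to each $g_k$ separately. This is exactly the equivariant-from-invariant recipe sketched in \autoref{annex:eqivariance}, threaded through the linear-feature invariant theorem.

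For the first step, the key is to produce the coefficients $g_k$ canonically so that they are both invariant and continuous. I would first observe that equivariance forces $f(\bm{x})\in\operatorname{span}\{\bm{x}_i\}_{i=1}^m$: picking any $Q\in O(n)$ acting as the identity on $\operatorname{span}\{\bm{x}_i\}$ and as $-\mathrm{id}$ on its orthogonal complement gives $Q\bm{x}_i=\bm{x}_i$, hence $f(\bm{x})=f(Q\bm{x})=Qf(\bm{x})$, so the orthogonal-complement component of $f(\bm{x})$ must vanish. Collecting the inputs into $\bm{X}=[\bm{x}_1,\dots,\bm{x}_m]$, both the Gram matrix $\bm{G}=\bm{X}^\top\bm{X}=(\langle\bm{x}_i,\bm{x}_j\rangle)_{i,j}$ and the vector $\bm{b}=(\langle\bm{x}_i,f(\bm{x})\rangle)_i$ are $O(n)$-invariant, the entries of $\bm{b}$ because $\bm{x}_i$ and $f$ transform the same way. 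Setting $(g_1,\dots,g_m)^\top=\bm{G}^{+}\bm{b}$ (the minimum-norm solution) then yields $\bm{X}\,\bm{G}^{+}\bm{b}=P\,f(\bm{x})=f(\bm{x})$, where $P=\bm{X}\bm{G}^{+}\bm{X}^\top$ is the orthogonal projector onto $\operatorname{span}\{\bm{x}_i\}$ and the last equality uses the containment just established. The minimum-norm choice resolves the non-uniqueness that arises when $m>n$, and under the hypothesis $\operatorname{span}\{\bm{x}_j\}_{j=1}^n=\mathbb{R}^n$ the rank of $\bm{G}$ is constantly $n$, so $\bm{G}^{+}$, and hence each $g_k$, depends continuously on the invariants $\langle\bm{x}_i,\bm{x}_j\rangle$.

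Once the invariant coefficients are in hand, the second step is immediate: each $g_k$ is a continuous $O(n)$-invariant scalar function on $X^m$, so \autoref{th:on_v3} supplies univariate functions with
\[
g_k(\bm{x}_1,\dots,\bm{x}_m)=\sum_{q=1}^{2mn+1}\psi_q^k\Big(\sum_{i,j}\phi_{qij}^k(\langle\bm{x}_i,\bm{x}_j\rangle)\Big),
\]
and substituting this into $f=\sum_k g_k\,\bm{x}_k$ produces exactly the claimed form, with the outer and inner univariate functions carrying the extra index $k$.

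I expect the main obstacle to be the first step—guaranteeing that the coefficient functions $g_k$ can be chosen simultaneously invariant and continuous. The containment $f(\bm{x})\in\operatorname{span}\{\bm{x}_i\}$ and the invariance of $\bm{G}$ and $\bm{b}$ are clean, but some care is needed for continuity of the pseudoinverse; this is precisely where the spanning hypothesis is used, since it pins the rank of $\bm{G}$ and keeps $\bm{G}^{+}$ continuous, whereas at a rank drop the canonical decomposition could jump. This reduction is essentially the equivariant construction of \cite{villarScalarsAreUniversal2023}, which could alternatively be invoked wholesale; the only genuinely new content is combining it with the linear-feature invariant representation of \autoref{th:on_v3} rather than the quadratic one.
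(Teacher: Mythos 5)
Your proposal is correct, and it shares the paper's overall two-step strategy (write the equivariant $f$ as $\sum_k g_k\,\bm{x}_k$ with $O(n)$-invariant coefficients, then expand each $g_k$ by an invariant superposition theorem), but it differs from the paper's proof in both steps, in ways worth recording. For step one, the paper simply cites Proposition~4 of \cite{villarScalarsAreUniversal2023}; you instead re-derive the decomposition from scratch via the Gram pseudoinverse, setting $g=\bm{G}^{+}\bm{b}$ with $\bm{b}=\bm{X}^{\top}f(\bm{x})$, using the reflection trick to force $f(\bm{x})\in\operatorname{span}\{\bm{x}_i\}$ and the identity that $\bm{X}\bm{G}^{+}\bm{X}^{\top}$ is the orthogonal projector onto $\operatorname{range}(\bm{X})$ --- which is exactly the projection identity the paper proves as \autoref{th:A14}, so your argument stays within the paper's own toolkit and handles invariance and continuity of the $g_k$ explicitly, something the citation leaves implicit. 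For step two, the paper applies \autoref{th:on_v1} (quadratic features) to each $g_k$, which would yield $2m^2+1$ outer functions and never uses the spanning hypothesis; you apply \autoref{th:on_v3} (linear features), which is what actually produces the stated count of $2mn+1$ outer functions and is the only place (together with continuity of $\bm{G}^{+}$ at constant rank) where the hypothesis $\operatorname{span}(\{\bm{x}_j\}_{j=1}^n)=\mathbb{R}^n$ is needed; in this respect your route matches the theorem as stated better than the paper's own one-line proof. Two minor points: \autoref{th:on_v3} gives inner sums over $i\in[m]$, $j\in[n]$, while the claimed formula sums over $i,j\in[m]$, so you should state explicitly that the remaining $\phi^k_{qij}$ with $j>n$ are taken identically zero; and under the spanning hypothesis the reflection argument is vacuous (the inputs already span $\mathbb{R}^n$), so that part of your step one, while correct, only earns its keep if one wants the decomposition without the spanning assumption.
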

\end{tcolorbox}
\begin{proof}
    We use (Proposition 4) from \cite{villarScalarsAreUniversal2023} to get an equivariant form multiplied with $O(n)$-invariant functions and apply \autoref{th:on_v1} to the respective invariant functions. 
\end{proof}

Similar results can be obtained for the representation from  \autoref{th:on_v2} or \autoref{th:on_v3}. 

It is also possible to show that we can use the gradients of invariant functions to build a generic equivariant function, in particular, if 
$f(\bm{x},\dots,\bm{x}_m)$ is invariant, then 
$
\nabla_{\bm{x}_i} f(\bm{x},\dots,\bm{x}_m)
$
is equivariant, as it is 
$
\sum_{i=1}^m \alpha_i \nabla_{\bm{x}_i} f(\bm{x},\dots,\bm{x}_m).
$
Extending the previous results with these forms is easy when $f$ is decomposed according to \autoref{th:on_v1}, \autoref{th:on_v2} or \autoref{th:on_v3}. 

\subsection{ \texorpdfstring{$O(n)$}-- equivariance and permutation invariance} 
We have the corresponding equivariant and permutation invariant versions.

\begin{corollary} ($O(n)$ equivariance and permutation invariance - v1)
\label{th:on_perm_eq_v1}
The following function is invariant to the action of the permutation group and the orthogonal group $O(n)$: 
$$
f(\bm{x}_1,\dots,\bm{x}_m) = \sum_{i=1}^m  \sum_{q=1}^{2mn+1} \psi_q \left( \sum_{j=1}^{m} \phi_{q}(\langle \bm{x}_i,\bm{x}_j \rangle)
\right) \bm{x}_i.
$$

\end{corollary}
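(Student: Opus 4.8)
The plan is to prove the claim in two independent pieces, exactly mirroring how \cref{cor:orth_perm_v1} is obtained from \autoref{th:on_v1} and \cref{lm:perm}: the equivariant skeleton comes from \autoref{th:on_eq_v1} (itself built on Proposition 4 of \cite{villarScalarsAreUniversal2023}, which writes an equivariant map as an invariant-scalar-weighted combination of the inputs), while permutation invariance is secured by the index-dropping device of \cref{lm:perm}. Writing $c_i(\bm{x}) := \sum_{q=1}^{2mn+1} \psi_q\bigl(\sum_{j=1}^{m}\phi_q(\langle \bm{x}_i,\bm{x}_j\rangle)\bigr)$ for the scalar coefficient of $\bm{x}_i$, the target function is $f(\bm{x}) = \sum_{i=1}^m c_i(\bm{x})\,\bm{x}_i$, and I would verify $O(n)$-equivariance and $S_n$-invariance of this single output vector separately.

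For the $O(n)$ part, I would first observe that every argument $\langle \bm{x}_i,\bm{x}_j\rangle$ is $O(n)$-invariant, since $\langle Q\bm{x}_i, Q\bm{x}_j\rangle = \bm{x}_i^\top Q^\top Q \bm{x}_j = \bm{x}_i^\top \bm{x}_j$ for $Q \in O(n)$; consequently each scalar $c_i$ is $O(n)$-invariant. Then $f(Q\bm{x}) = \sum_i c_i(Q\bm{x})\,Q\bm{x}_i = \sum_i c_i(\bm{x})\,Q\bm{x}_i = Q\bigl(\sum_i c_i(\bm{x})\bm{x}_i\bigr) = Q f(\bm{x})$, which is equivariance. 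The same argument covers $O(1,n-1)$ once $\langle\cdot,\cdot\rangle$ is read as the Minkowski product, using $Q^\top \Lambda Q = \Lambda$.

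For the $S_n$ part, I would apply a permutation $\pi$ and reindex carefully. Under $\pi$ the inner sum becomes $\sum_{j=1}^m \phi_q(\langle \bm{x}_{\pi(i)},\bm{x}_{\pi(j)}\rangle)$, and because $\pi$ is a bijection and $\phi_q$ carries no dependence on the index $j$, substituting $j' = \pi(j)$ leaves this sum equal to $\sum_{j'=1}^m \phi_q(\langle \bm{x}_{\pi(i)},\bm{x}_{j'}\rangle)$; hence the coefficient produced for $\bm{x}_{\pi(i)}$ equals $c_{\pi(i)}(\bm{x})$. The permuted function is therefore $\sum_{i=1}^m c_{\pi(i)}(\bm{x})\,\bm{x}_{\pi(i)}$, and a second reindexing $k=\pi(i)$ collapses it back to $\sum_{k=1}^m c_k(\bm{x})\,\bm{x}_k = f(\bm{x})$. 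The fact that the weight and the vector both carry the same index $\pi(i)$ is precisely what makes the two reindexings cancel, yielding invariance rather than equivariance of the output vector.

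The statement is a sufficiency (construction) claim rather than a universality claim, so I expect no hard analytic obstacle; the only real care needed is bookkeeping — keeping the outer summation index aligned with its attached vector through the two reindexings, and checking that the index-independence of $\phi_q$, together with $\psi_q$ acting on the already-symmetrized inner sum, is exactly what permits dropping the node labels. As with \cref{lm:perm} and \cref{cor:orth_perm_v1}, I would close by remarking that this construction is not claimed to exhaust all $O(n)$-equivariant, permutation-invariant functions.
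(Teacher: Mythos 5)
Your proposal is correct and follows essentially the same route as the paper, which simply cites \autoref{th:on_eq_v1} (the invariant-scalar-weighted combination of inputs) together with the index-dropping device of \cref{lm:perm}. You merely carry out explicitly the two verifications (factoring $Q$ out of the sum via invariance of the coefficients $c_i$, and the double reindexing under $\pi$) that the paper's one-line citation leaves implicit, and you correctly read the claim as $O(n)$-equivariance plus $S_n$-invariance of a sufficiency construction, not a universality statement.
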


\begin{proof} 
    We based this result on \autoref{th:on_eq_v1} and \Cref{lm:perm}. 
\end{proof}

\subsection{Mapping invariant features}
\begin{lemma} \label{th:A14}
Suppose that we have $\bm{X} \in \R^{m \times n}$ and $\bm{Y} \in \R^{k \times n}$ with $\rho(\bm{Y}) = n, n \le k$ 
then
 $$
 \bm{X}\bm{Y}^T (\bm{Y}\bm{Y}^T)^{\dagger} \bm{Y}\bm{X}^T = \bm{X}\bm{X}^T,  
 $$
where $\rho(X)$ is the matrix rank and $^\dagger$ is the pseudo-inverse.
\end{lemma}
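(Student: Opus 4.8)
The plan is to reduce the claim to a statement about the single $n\times n$ matrix $\bm{P} := \bm{Y}^T (\bm{Y}\bm{Y}^T)^\dagger \bm{Y}$. If I can show $\bm{P} = I_n$, then sandwiching gives $\bm{X}\bm{Y}^T(\bm{Y}\bm{Y}^T)^\dagger\bm{Y}\bm{X}^T = \bm{X}\bm{P}\bm{X}^T = \bm{X}\bm{X}^T$ immediately, with no assumption needed on $\bm{X}$. The point to keep in mind throughout is that $\bm{Y}\bm{Y}^T \in \R^{k\times k}$ has rank $\rho(\bm{Y}) = n$, so it is singular whenever $k > n$; this is precisely why the pseudo-inverse appears and why the hypothesis $\rho(\bm{Y}) = n$ (full column rank) is indispensable.

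First I would introduce a thin singular value decomposition $\bm{Y} = \bm{U}\bm{\Sigma}\bm{V}^T$ with $\bm{U}\in\R^{k\times n}$ having orthonormal columns ($\bm{U}^T\bm{U} = I_n$), $\bm{V}\in\R^{n\times n}$ orthogonal, and $\bm{\Sigma}\in\R^{n\times n}$ diagonal. Because $\rho(\bm{Y}) = n$, every diagonal entry of $\bm{\Sigma}$ is strictly positive, so $\bm{\Sigma}$ is invertible. I then compute $\bm{Y}\bm{Y}^T = \bm{U}\bm{\Sigma}^2\bm{U}^T$ and identify its Moore--Penrose inverse as $(\bm{Y}\bm{Y}^T)^\dagger = \bm{U}\bm{\Sigma}^{-2}\bm{U}^T$, which is valid precisely because $\bm{U}$ has orthonormal columns and $\bm{\Sigma}^2$ is invertible. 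Substituting and using $\bm{U}^T\bm{U} = I_n$ together with $\bm{V}^T\bm{V} = I_n$ collapses the product to $\bm{P} = \bm{V}\bm{V}^T = I_n$, completing the argument.

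A coordinate-free alternative, useful as a sanity check, is to observe that $\bm{P}$ is symmetric and idempotent by the defining Moore--Penrose identities, hence an orthogonal projector; one then shows $\bm{P}\bm{Y}^T = \bm{Y}^T$, using that $(\bm{Y}\bm{Y}^T)^\dagger\,\bm{Y}\bm{Y}^T$ is the orthogonal projector onto the column space of $\bm{Y}$ and therefore fixes the columns of $\bm{Y}$. It follows that the range of $\bm{P}$ equals the row space of $\bm{Y}$, which is all of $\R^n$ by full column rank, and an orthogonal projector onto $\R^n$ is $I_n$.

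The main obstacle is purely the careful handling of the pseudo-inverse: establishing the identity $(\bm{Y}\bm{Y}^T)^\dagger = \bm{U}\bm{\Sigma}^{-2}\bm{U}^T$ for the rank-deficient matrix $\bm{Y}\bm{Y}^T$ and verifying that the cancellation of the $\bm{\Sigma}$-factors is legitimate. This is exactly the place where $\rho(\bm{Y}) = n$ enters, since without full column rank $\bm{P}$ would be a proper projection rather than the identity and the equality would fail. Everything downstream is routine matrix algebra.
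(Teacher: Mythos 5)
Your proof is correct and follows essentially the same route as the paper's: both diagonalize $\bm{Y}$ via an SVD, identify $(\bm{Y}\bm{Y}^T)^{\dagger}$ through that decomposition, and cancel to obtain $\bm{Y}^T(\bm{Y}\bm{Y}^T)^{\dagger}\bm{Y} = \bm{I}_n$ before sandwiching with $\bm{X}$. Your thin-SVD formulation (square, invertible $\bm{\Sigma}$) makes the cancellation and the role of the full-rank hypothesis slightly more transparent than the paper's full SVD with rectangular $\bm{\Lambda}$, but the underlying argument is the same.
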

\begin{proof}
The equality follows from these properties:
$$
\bm{Y} = \bm{V} \bm{\Lambda} \bm{U}, ~~~ \bm{V}^T\bm{V} = \bm{I}_k, 
    ~~~ \bm{U}^T\bm{U} = \bm{I}_n = \bm{U}\bm{U}^T,
$$
$$
(\bm{Y}\bm{Y}^T)^{\dagger} = (\bm{V} \bm{\Lambda}  \bm{\Lambda}^T \bm{V}^T)^{\dagger} = \bm{V} ( \bm{\Lambda}  \bm{\Lambda}^T )^{\dagger} \bm{V}^T, \, \, \bm{Y}^T = \bm{U}^T  \bm{\Lambda}^T \bm{V}^T,
$$
$$
\bm{Y}^T (\bm{Y}\bm{Y}^T)^{\dagger} \bm{Y} = \bm{U}^T  \bm{\Lambda}^T \bm{V}^T  \bm{V} ( \bm{\Lambda}  \bm{\Lambda}^T )^{\dagger} \bm{V}^T  \bm{V} \bm{\Lambda} \bm{U} = \bm{U}^T  \bm{\Lambda}^T  ( \bm{\Lambda}  \bm{\Lambda}^T )^{\dagger} \bm{\Lambda} \bm{U} = \bm{I}_n.$$ 
\end{proof}
\vspace{-3mm}
\begin{theorem} (Correlation matrix representation) 
\label{th:corr}
Given $\bm{x}_1,\dots,\bm{x}_m \in \mathbb{R}^n$ and a set of points $\bm{y}_1,\dots,\bm{y}_k \in \R^n$, such that $\rho(\bm{y}_1,\dots,\bm{y}_k) = n$, there is an invertible map between these two sets:
\begin{itemize}
    \item $\{ \langle \bm{x}_i,\bm{x}_j\rangle \}_{i,j=1}^{m,m}$, with a total number of variable equal to $m^2$
    \item $\{ \langle \bm{x}_i,\bm{y}_j\rangle \}_{i,j=1}^{m,k}$, $\{ \langle \bm{y}_i,\bm{y}_j\rangle \}_{i,j=1}^{k,k}$ with a total number of variable equal to $mk+k^2$
\end{itemize}
\end{theorem}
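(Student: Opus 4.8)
The plan is to pass to matrix notation and reduce the claimed bijection to the pseudo-inverse identity already established in \autoref{th:A14}. Collect the vectors into $\bm{X} \in \R^{m\times n}$ with rows $\bm{x}_i^T$ and $\bm{Y}\in\R^{k\times n}$ with rows $\bm{y}_j^T$, so that the three families of inner products are exactly the Gram matrices $\bm{X}\bm{X}^T$, $\bm{X}\bm{Y}^T$, and $\bm{Y}\bm{Y}^T$. Since in the intended setting (and as used in \autoref{th:on_v2}) each $\bm{y}_j=\sum_{p=1}^m\alpha_p^j\bm{x}_p$ is a fixed linear combination of the inputs, we may write $\bm{Y}=\bm{A}\bm{X}$ for a known coefficient matrix $\bm{A}\in\R^{k\times m}$. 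The rank hypothesis $\rho(\bm{y}_1,\dots,\bm{y}_k)=n$ is precisely $\rho(\bm{Y})=n$, i.e.\ $\bm{Y}$ has full column rank, which is exactly what \autoref{th:A14} requires.

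First I would establish the forward map $\Phi:\bm{X}\bm{X}^T\mapsto(\bm{X}\bm{Y}^T,\bm{Y}\bm{Y}^T)$. Substituting $\bm{Y}=\bm{A}\bm{X}$ gives $\bm{X}\bm{Y}^T=(\bm{X}\bm{X}^T)\bm{A}^T$ and $\bm{Y}\bm{Y}^T=\bm{A}(\bm{X}\bm{X}^T)\bm{A}^T$, so both blocks of the second family are explicit functions of the first Gram matrix and of the fixed $\bm{A}$; this shows $\Phi$ is well defined.

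Next I would supply the inverse map $\Psi$ directly from \autoref{th:A14}: because $\rho(\bm{Y})=n$, that lemma yields
\[
\bm{X}\bm{X}^T=\bm{X}\bm{Y}^T(\bm{Y}\bm{Y}^T)^{\dagger}\bm{Y}\bm{X}^T=(\bm{X}\bm{Y}^T)(\bm{Y}\bm{Y}^T)^{\dagger}(\bm{X}\bm{Y}^T)^T,
\]
so the first family is recovered from the second by a closed-form expression in $\bm{X}\bm{Y}^T$ and $(\bm{Y}\bm{Y}^T)^{\dagger}$. This identity is exactly $\Psi\circ\Phi=\mathrm{id}$, and substituting the forward formulas into $\Psi$ confirms $\Phi\circ\Psi=\mathrm{id}$ on the image of $\Phi$, establishing the bijection and hence the claimed invertibility.

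The one point that needs care---and the natural main obstacle---is the apparent dimension mismatch: the second family carries $mk+k^2$ scalars against only $m^2$ in the first, so $\Phi$ cannot be surjective onto all of $\R^{m\times k}\times\R^{k\times k}$. I would resolve this by emphasizing that invertibility is asserted between the Gram data of a genuine configuration and its image under $\Phi$, not between the ambient coordinate spaces: the extra coordinates are constrained to the image variety cut out by $\bm{X}\bm{Y}^T=(\bm{X}\bm{X}^T)\bm{A}^T$ and $\bm{Y}\bm{Y}^T=\bm{A}(\bm{X}\bm{X}^T)\bm{A}^T$, and on that image $\Psi$ is a genuine two-sided inverse. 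It is also worth checking explicitly that the full-rank assumption on $\bm{Y}$ is both necessary and sufficient for $(\bm{Y}\bm{Y}^T)^{\dagger}$ to act as the required inverse on the column space spanned by the $\bm{y}_j$, which is precisely where the rank-$n$ hypothesis enters.
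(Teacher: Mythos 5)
Your proposal is correct and follows essentially the same route as the paper: both reduce the statement to matrix notation, identify the three families of inner products with the Gram matrices $\bm{X}\bm{X}^T$, $\bm{X}\bm{Y}^T$, $\bm{Y}\bm{Y}^T$, and invoke the pseudo-inverse identity of \autoref{th:A14} to recover $\bm{X}\bm{X}^T = (\bm{X}\bm{Y}^T)(\bm{Y}\bm{Y}^T)^{\dagger}(\bm{X}\bm{Y}^T)^T$. You are in fact somewhat more thorough than the paper, which only states the recovery direction: your explicit forward map via $\bm{Y}=\bm{A}\bm{X}$ and your remark that invertibility holds onto the image variety (not onto all of $\R^{m\times k}\times\R^{k\times k}$) make precise what the paper leaves implicit.
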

\begin{proof}
    Define $\bm{X} =(\bm{x}_1,\dots,\bm{x}_m)^T \in \R^{m \times n}$ and $\bm{Y} =(\bm{y}_1,\dots,\bm{y}_k)^T \in \R^{k \times n}$
    then 
    $$
    \bm{X}\bm{X}^T = \{ \langle \bm{x}_i,\bm{x}_j\rangle \}_{i,j=1}^{m,m}, \, \, 
    \bm{X}\bm{Y}^T = \{ \langle \bm{x}_i,\bm{y}_j\rangle \}_{i,j=1}^{m,k},\, \text{and}\,
    \bm{Y}\bm{Y}^T = \{ \langle \bm{y}_i,\bm{y}_j\rangle \}_{i,j=1}^{k,k}.
    $$
    We then apply \autoref{th:A14} to yield
     $
     \bm{X}\bm{X}^T  = \bm{X}\bm{Y}^T (\bm{Y}\bm{Y}^T)^{\dagger} \bm{Y}\bm{X}^T.
     $     
    Notice that $\bm{Y}\bm{X}^T = (\bm{X}\bm{Y}^T)^T$, and   
    therefore we have the result.
\end{proof}

We define $\bm{Y}$ as a subset of $\bm{X} \in \R^{m \times n}$ of size $k$, then it is a matrix of dimension $k \times n$, which we ask to have rank $n$. We then can say,
\begin{corollary} (Special case - Subset) \label{th:subset}
If $\bm{Y}=\bm{X}[:n]$, with $n \le k$, $\rho(\bm{Y})=n$, $\bm{X} \in \R^{ m \times n}$, $m \le k \le n$, then there is an invertible map between these two sets:
\begin{itemize}
    \item $\{ \langle \bm{x}_i,\bm{x}_j\rangle \}_{i,j=1}^{m,m}$, with a total number of variable equal to $m^2$
    \item $\{ \langle \bm{x}_i,\bm{x}_j\rangle \}_{i,j=1}^{m,k}$, if $\bm{y}_j = \bm{x}_j$, with a total number of variable equal to $mk$,     
\end{itemize}    
\end{corollary}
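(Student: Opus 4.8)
The plan is to obtain this as an immediate specialization of Theorem \ref{th:corr}, with the single new ingredient being that choosing $\bm{Y}$ to be a row-subset of $\bm{X}$ makes the block $\{\langle \bm{y}_i,\bm{y}_j\rangle\}$ redundant.

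First I would take $\bm{Y}$ to be the submatrix of $\bm{X}$ formed by the retained points, i.e. set $\bm{y}_j=\bm{x}_j$ so that $\bm{Y}\in\R^{k\times n}$ is a block of rows of $\bm{X}$. The hypothesis $\rho(\bm{Y})=n$ says precisely that these retained points span $\R^n$ (which forces $k\ge n$), and this is exactly the condition under which Lemma \ref{th:A14} and Theorem \ref{th:corr} apply.

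The key observation---and the only thing that distinguishes this corollary from Theorem \ref{th:corr}---is that when $\bm{Y}$ is a subset of the rows of $\bm{X}$, the Gram block $\bm{Y}\bm{Y}^T=\{\langle\bm{x}_i,\bm{x}_j\rangle\}_{i,j=1}^{k,k}$ is literally the top $k\times k$ submatrix of the cross block $\bm{X}\bm{Y}^T=\{\langle\bm{x}_i,\bm{x}_j\rangle\}_{i=1,j=1}^{m,k}$. Hence the $k^2$ variables $\{\langle\bm{y}_i,\bm{y}_j\rangle\}$ required in the general statement are already contained among the $mk$ variables $\{\langle\bm{x}_i,\bm{x}_j\rangle\}_{i=1,j=1}^{m,k}$, and no separate $k^2$ quantities are needed. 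I would then invoke Lemma \ref{th:A14} to write $\bm{X}\bm{X}^T=\bm{X}\bm{Y}^T(\bm{Y}\bm{Y}^T)^{\dagger}\bm{Y}\bm{X}^T$, where $\bm{Y}\bm{X}^T=(\bm{X}\bm{Y}^T)^T$. Since both $\bm{X}\bm{Y}^T$ and its submatrix $\bm{Y}\bm{Y}^T$ are functions of the $mk$ retained inner products, the right-hand side is a fixed (pseudo-inverse) map of those $mk$ variables, so the full Gram matrix $\{\langle\bm{x}_i,\bm{x}_j\rangle\}_{i,j=1}^{m,m}$ is reconstructed. The converse map is trivial, being the coordinate projection that restricts the second index to $1,\dots,k$; composing the two directions gives the claimed invertible correspondence.

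The main obstacle is conceptual rather than computational: one must check that the rank hypothesis $\rho(\bm{Y})=n$ (the retained points spanning $\R^n$) is what licenses the pseudo-inverse identity of Lemma \ref{th:A14}, and carefully track the index bookkeeping so that $\bm{Y}\bm{Y}^T$ is correctly identified as a submatrix of $\bm{X}\bm{Y}^T$. The essential requirement is $n\le k$ together with the retained points forming a spanning set; with this condition in hand the result is otherwise a direct corollary.
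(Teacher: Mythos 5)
Your proposal is correct and follows essentially the same route as the paper: specialize Theorem~\ref{th:corr} (equivalently, Lemma~\ref{th:A14}) and observe that when $\bm{Y}$ is a row-subset of $\bm{X}$, the block $\bm{Y}\bm{Y}^T$ is already a submatrix of the retained features $\{\langle \bm{x}_i,\bm{x}_j\rangle\}_{i=1,j=1}^{m,k}$, so no extra $k^2$ variables are needed. Your write-up is in fact somewhat more careful than the paper's one-line proof, since you spell out the pseudo-inverse reconstruction and the trivial projection giving the inverse direction.
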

\begin{proof}
    We use \autoref{th:corr} and notice that 
    $\bm{Y}\bm{Y}^{\operatorname{T}}$ 
    can be derived from 
    $\bm{Y}\bm{Y}^\mathrm{T} = \bm{X}[:n]\bm{X}[:n]^\mathrm{T}$, 
    which are included in the previous features. 
\end{proof}

\subsection{Computational validation of the main theorem}
There is one step in our theorem that creates concern. This step is as follows: once we change the basis for our data, we build the basis from the data itself. 
We now prove with a simple Python code that this is the case. 

\begin{lstlisting}[language=Python, caption=
% Python based informal proof
Python code to validate the contribution.
]
# some help functions
rot_gen = lambda n: np.linalg.svd(np.random.randn(n,n))[0]
basis = lambda X: X[:n,:]
corr = lambda X: X @ X.T
inv = lambda X,Y: X @ Y.T
rot = lambda X,R: X @ R
#set the seed; it can be removed or changes
np.random.seed(42)
# the problem's dimension, can be changed, but m>=n
m,n = 5,3
# this is my data
X = np.random.randn(m,n)
# the correlation matrix of the data, which is an invariant feature
C1 = corr(X)
# we build a basis that depends on the input
Y = basis(X)
# compujte invariant features
Z1 = inv(X,Y)
# compute the correlation of the new features
D1 = corr(Z1)
# some rotation
R = rot_gen(n)
# apply the rotation to the input
X = rot(X, R)
# rebuild the basis
Y = basis(X)
# compute the invariant features
Z2 = inv(X,Y)
# compute the correlation with the new invariant features
D2 = corr(Z2)
# Question: is the correlation matrix before and after the same (we know is the same):
print(np.linalg.norm(C1 - C2))
# Result: 1.934545700657722e-15 (yes, numerically the same)
# Question: is the correlation matrix with the invariant feature the same before and after (they should)
print(np.linalg.norm(D1 - D2))
# Result: 9.407543438562363e-15 (yes, numerically the same)
# Question: are the invariant feature the same, before and after the rotation (they better be)?
print(np.linalg.norm(Z1 - Z2))
# Result: 1.4220500840710913e-15 (yes, numerically the same)
\end{lstlisting}

\subsection{Computational validation of \autoref{th:A14}}
\begin{lstlisting}[language=Python, caption=Python code to validate the contribution.]
import numpy as np
from numpy.linalg import norm
np.random.seed(42)
# the problem's dimension, can be changed, but m>=n
m,n = 15,3
k = n+2
# create the two matricies
X = np.random.randn(m,n)
Y = np.random.randn(k,n)
# Verify Theorem A.14
print(norm(X @ Y.T @ np.linalg.pinv(Y @ Y.T) @ Y @ X.T - X@X.T))
# Result: 1.2816111681783468e-14
\end{lstlisting}

\section{Complexity} \label{sec:complexity}
The representation complexity of \autoref{eq:inv_v1} is $O(m^4)$, which is quite larger than the complexity we have if we apply KAT directly to the coordinates of the nodes, i.e. $O(m^2n^2)$, which ignores the symmetries of the problem. However, in \autoref{eq:inv_v3}, we show that we can represent the invariant function $f$ with complexity $O(m^2 n^2)$, thus similar to the non-invariant KAT.

\section{Additional Experiments}

\begin{figure}[ht]
    \centering
    \includegraphics[width=0.55\linewidth]{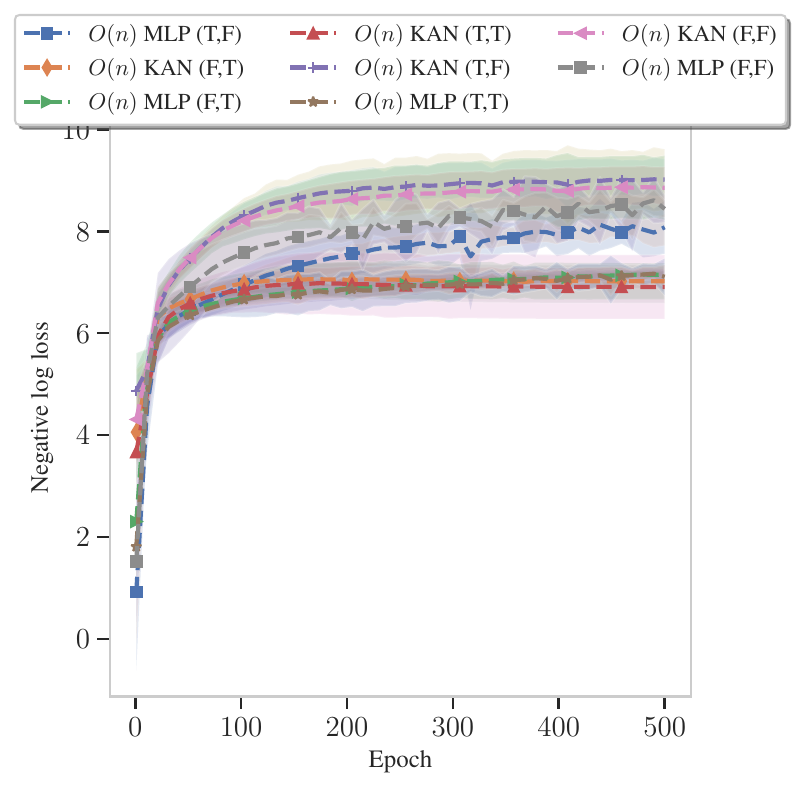}
    \caption{Test performance (Negative log Huber Loss) of various models for the linear polymers. $O(n)$ is the model that is invariant to rotation and reflection on $\mathbb{R}^n$, while $\pi$ is the permutation invariant model. In parenthesis, the two flags indicate if the model includes the node index and the second if the features are linear or quadratic in the number of nodes. 
    }
    
    \label{fig:lp_te_false}
\end{figure}

\begin{figure}[ht]
    \centering
    \includegraphics[width=0.55\linewidth]{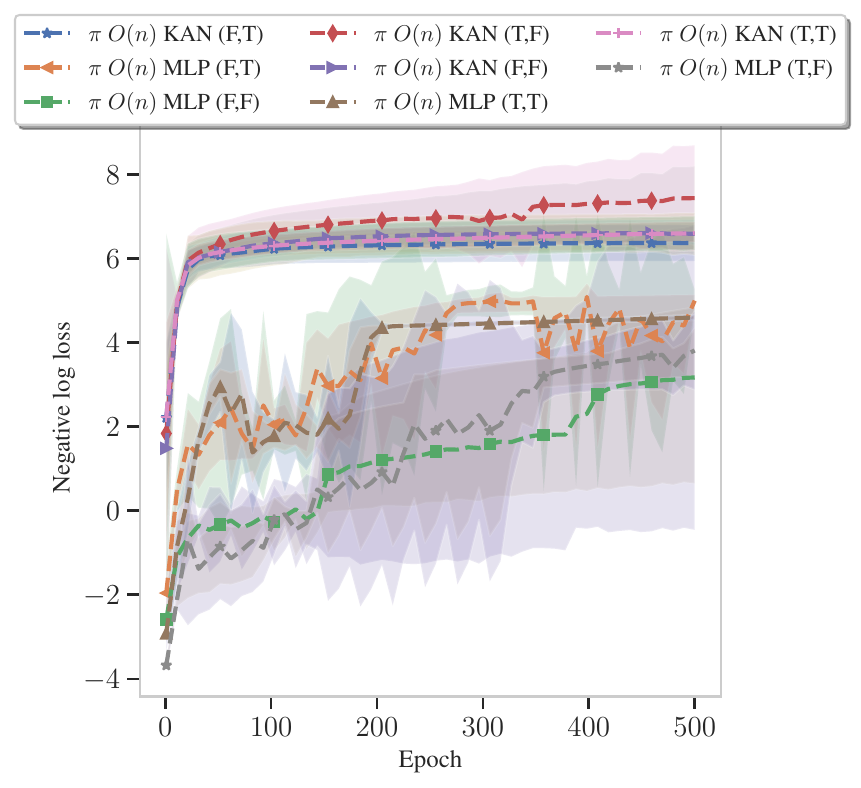}     \caption{Test performance (Negative log Huber Loss) of various models for the linear polymers. $O(n)$ is the model that is invariant to rotation and reflection on $\mathbb{R}^n$, while $\pi$ is the permutation invariant model. In parentheses, the two flags indicate if the model includes the node index and the second if the features are linear or quadratic in the number of nodes. 
    }
    \label{fig:lp_te_true}
\end{figure}

\begin{table}
\caption{Huber NLL for the Linear Polymer dataset, with $a_i=0$ on different dimensions ($3,5$) and different number of nodes $4,10,15$. }
\label{tab:lp1}
\begin{tabularx}{\columnwidth}{
@{} Y{1.5} Y{0.9} Y{0.9} Y{0.9} Y{0.9} @{}
}
\toprule
LinPoly-1 & $O(n)$ KAN & $O(n)$ MLP & $\pi~O(n)$ KAN & $\pi~O(n)$ MLP \\
\midrule
m4/n3 & 10.85 & 11.74 & 9.07 & 9.29 \\
 & 0.52 & 0.18 & 0.40 & 0.33 \\
m10/n3 & 8.93 & 9.08 & 7.36 & 6.40 \\
 & 0.33 & 0.36 & 0.32 & 0.25 \\
m10/n5 & 9.22 & 9.06 & 7.41 & 5.97 \\
 & 0.10 & 0.16 & 0.14 & 0.12 \\
m15/n3 & 7.99 & 7.98 & 6.91 & 4.82 \\
 & 0.34 & 0.24 & 0.38 & 0.51 \\
m15/n5 & 7.99 & 7.81 & 6.76 & 4.18 \\
 & 0.33 & 0.16 & 0.39 & 0.78 \\
\bottomrule
\end{tabularx}
\end{table}

\subsection{Linear Polymer experiments}
\label{sec:linear-polymer}
Linear polymers are chain molecules composed of repeating structural units (monomers) linked together sequentially. Linear polymers exhibit flexibility and thermoplastic behavior. Examples include polyethylene (PE), polyvinyl chloride (PVC), and polystyrene (PS), and find applications in packaging, textiles, and plastic films due to their ease of processing, recyclability, and ability to be melted and reshaped.
\autoref{fig:lp_te_false} and \autoref{fig:lp_te_true} show the performance with $O(n)$ symmetry and with additionally permutation symmetry. 
Additional details in \autoref{annex:polymers}.

\begin{figure*}
    \centering
    \subfigure[]{
        \includegraphics[width=.2\textwidth]{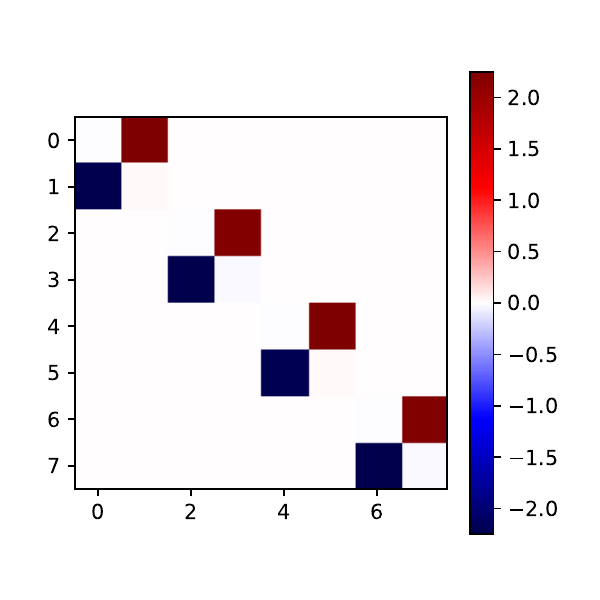} 
    }%
    \hspace{1cm}
    \subfigure[]{
        \includegraphics[width=0.2\textwidth]{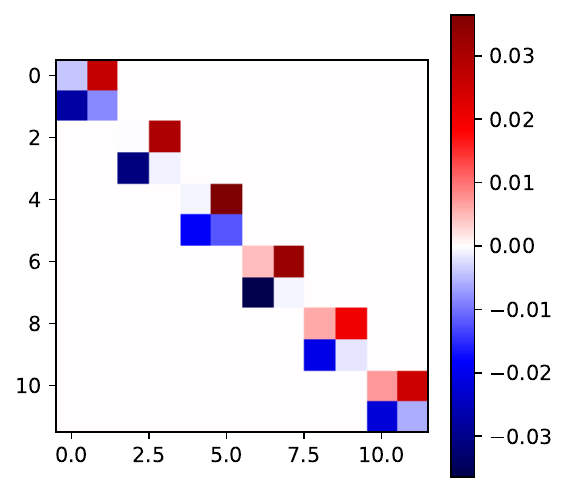} 
    }%
    \hspace{1cm}
    \subfigure[]{
        \includegraphics[width=0.2\textwidth]{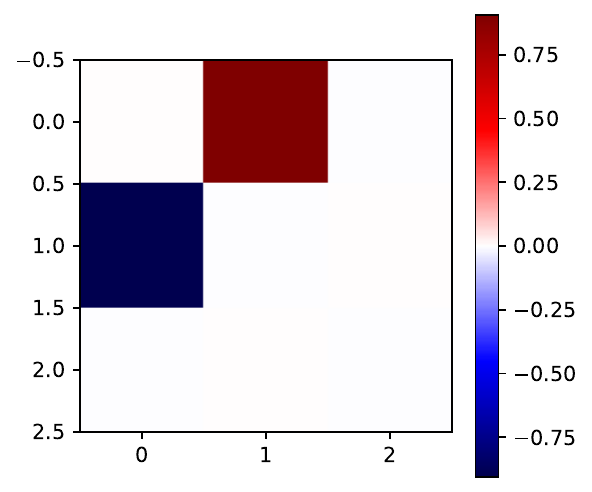} 
    }%
    
    \vspace{\baselineskip} 
    
    \subfigure[]{
        \includegraphics[width=0.2\textwidth]{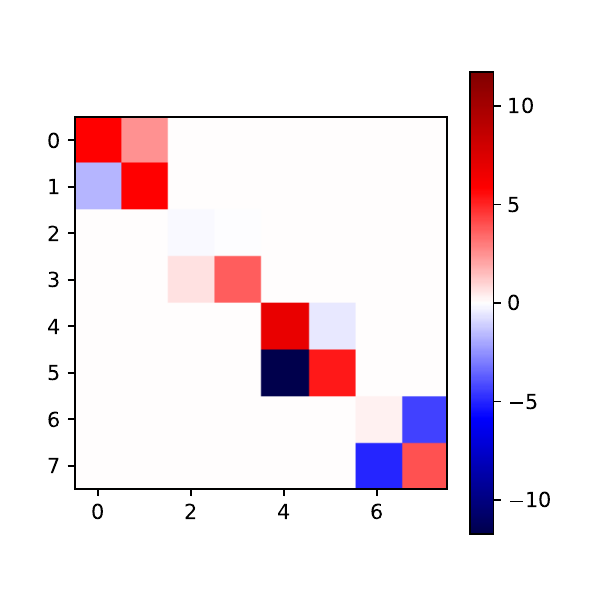} 
    }%
    \hspace{1cm}
    \subfigure[]{
        \includegraphics[width=0.2\textwidth]{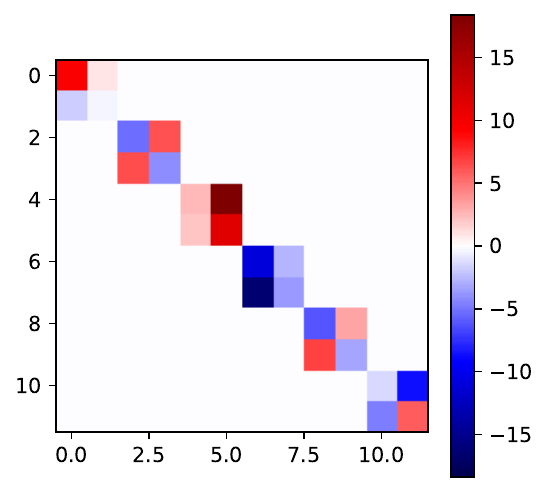} 
    }%
    \hspace{1cm}
    \subfigure[]{
        \includegraphics[width=0.2\textwidth]{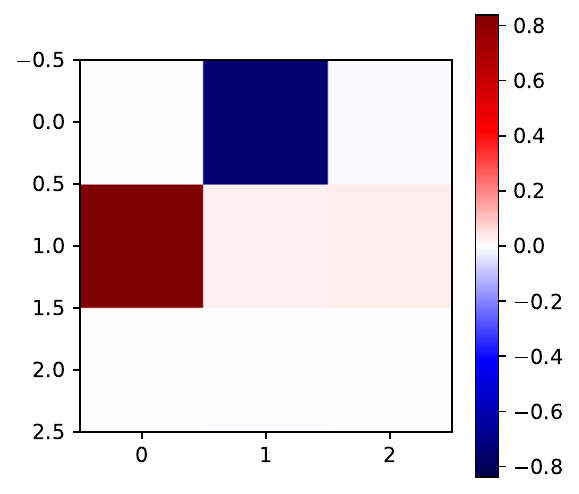} 
    }%
    \caption{Visualization of discovered symmetries. From left to right: $2$-body, $3$ -body and discrete permutation tasks; the top plots are for the LiGAN method and the bottom plots for LieGAN-\method{}. }
    \label{fig:symmetry-discovery}
\end{figure*}

\subsection{Symmetry discovery}
\label{sec:symmetry-discovery}
To test symmetry discovery, we extend the LiGAN \cite{yang2023generative} from \cite{yang2023generative} with \method{} and test on:
1) N-body trajectories; 2) Discrete Rotations.
The learned metric matrices are shown in \cref{fig:symmetry-discovery}, where HNN from 
\cite{greydanus2019hamiltonian} was used by \cite{yang2023generative}. 

\paragraph{N-Body Trajectory}
The task is to discover symmetry from 2-body and  3-body trajectory prediction

\paragraph{Discrete Rotation Invariant Regression}
The task is to discover symmetry for a discrete rotation.

\section{Experiments}\label{annex:experiments}

The LJ task aims to train the energy from atomistic configurations ($3$d conformations), where the configurations are generated using random positions, followed by energy minimization.

The MD task aims at training the atomistic energy from DFT energy computations. The MD datasets are generated using molecular dynamics, therefore, they represent more realistic configurations of the atomic systems.

\begin{table}
\caption{Huber NLL for the LJ-2 dataset}
\centering
\begin{tabular}{lllll}
\toprule
LJ (2) & $O(n)$ KAN & $O(n)$ MLP & $\pi~O(n)$ KAN & $\pi~O(n)$ MLP \\
\midrule
m4/n3 & 9.54 & 8.52 & 9.35 & 8.56 \\
& 0.82 & 0.43 & 0.62 & 0.39 \\
m10/n3 & 8.66 & 8.22 & 8.49 & 5.73 \\
 & 0.66 & 0.61 & 0.74 & 0.25 \\
m10/n5 & 7.52 & 7.02 & 7.19 & 4.84 \\
 & 0.27 & 0.10 & 0.32 & 0.19 \\
m15/n3 & 9.45 & 9.35 & 9.89 & 3.91 \\
 & 1.32 & 1.43 & 2.11 & 0.79 \\
m15/n5 & 6.66 & 6.47 & 6.74 & 2.36 \\
 & 0.23 & 0.27 & 0.25 & 1.33 \\
\bottomrule
\end{tabular}
\end{table}

\subsection{Lennard-Jonnes}\label{annex:LJ}
For the Lennard-Jonnes (LJ) experiments, we generate $m$ particles in $n$ dimensional space. The interaction between particles is described by the LJ potential, 
$$
U_\text{LJ}(r) =  f((a/r)^{12} - (a/r)^6)
$$
where $r$ is the distance between two particles and $a$ is a parameter that defines the minimum energy of the interaction, while $f(x) = x+ \sum_{l=1}^{3} a_l \sin(w_l x)$, with $a_1=1,a_2=.3,a_3=.1, w_1=11,w_2=30,w_3=50$ (or $a_1=a_2=a_3=0$), is an oscillatory term.
After generating the particles, we perform an energy minimization step to relax the system towards a lower energy state, avoiding large energy contributions caused by the random initialization of the particle positions.

\subsection{Linear polymers}\label{annex:polymers}
As an additional experiment, we consider linear polymers of size $m$. The particles are connected to the previous and the following particle by a bond. The interaction between the bond depends quadratically on the difference between the current distance and the desired distance,
$$
U_\text{bond}(r) = f(\| d - \hat{d} \|^2) + U_\text{LJ}(r)
$$
and $f(x) = x+ \sum_{l=1}^{3} a_l \sin(w_l x)$ is an oscillatory term.
For the unbonded particle, the LJ potential is used, as before. 

\begin{table}
\caption{Huber NLL for the LinPoly-2 dataset}
\centering
\begin{tabular}{lllll}
\toprule
LinPoly-2 & $O(n)$ KAN & $O(n)$ MLP & $\pi~O(n)$ KAN & $\pi~O(n)$ MLP \\
\midrule
m4/n3 & 10.51 & 8.78 & 8.41 & 7.13 \\
 & 0.17 & 0.13 & 0.27 & 0.08 \\
m10/n3 & 8.30 & 7.50 & 7.26 & 4.73 \\
 & 0.40 & 0.18 & 0.39 & 0.78 \\
m10/n5 & 8.36 & 7.77 & 7.18 & 4.11 \\
 & 0.45 & 0.16 & 0.48 & 0.64 \\
m15/n3 & 7.40 & 7.47 & 6.95 & 2.98 \\
 & 0.42 & 0.32 & 0.48 & 0.99 \\
m15/n5 & 7.45 & 7.54 & 6.94 & 2.54 \\
 & 0.45 & 0.17 & 0.56 & 1.00 \\
\bottomrule
\end{tabular}
\end{table}

\subsection{MD17}

\autoref{tab:md17_detailed} shows in detail the performance of the different models on the MD17 dataset. 

\begin{table*}
\caption{Huber NLL for the MD17 dataset}
\label{tab:md17_detailed}
\scalebox{0.875}{
\begin{minipage}{\linewidth}

\begin{tabularx}{1.1\textwidth}{
@{} Y{2.0} Y{0.9} Y{0.9} Y{0.9} Y{0.9} 
Y{0.9} Y{0.9} Y{0.9} Y{0.9} Y{0.9} Y{0.9} 
 Y{0.9}  Y{0.9}   Y{0.9}   Y{0.9}   Y{0.9}   Y{0.9} 
@{}
}
\toprule
Dataset (MD17) & aspirin &  & benzene2017 &  & ethanol &  & malonaldehyde &  & naphthalene &  & salicylic &  & toluene &  & uracil &  \\
\midrule
$O(n)$ KAN (F,F) & 6.77 & 0.16 & 8.02 & 0.09 & 7.94 & 0.04 & 7.84 & 0.03 & 7.42 & 0.04 & 7.54 & 0.14 & 7.60 & 0.21 & 8.03 & 0.13 \\
$O(n)$ KAN (F,T) & 6.08 & 0.01 & 7.29 & 0.03 & 7.14 & 0.03 & 7.12 & 0.04 & 6.29 & 0.08 & 6.41 & 0.03 & 6.50 & 0.06 & 7.08 & 0.00 \\
$O(n)$ KAN (T,F) & 6.83 & 0.20 & 8.06 & 0.15 & 8.06 & 0.04 & 7.90 & 0.07 & 7.39 & 0.14 & 7.53 & 0.13 & 7.54 & 0.18 & 8.04 & 0.11 \\
$O(n)$ KAN (T,T) & 6.09 & 0.04 & 7.27 & 0.06 & 7.13 & 0.03 & 7.16 & 0.07 & 6.30 & 0.03 & 6.36 & 0.05 & 6.54 & 0.06 & 7.01 & 0.09 \\
$O(n)$ MLP (F,F) & 5.63 & 0.00 & 5.98 & 0.02 & 5.47 & 0.01 & 5.40 & 0.01 & 5.37 & 0.00 & 5.65 & 0.00 & 5.69 & 0.02 & 5.70 & 0.01 \\
$O(n)$ MLP (F,T) & 5.63 & 0.01 & 5.91 & 0.00 & 5.46 & 0.03 & 5.42 & 0.02 & 5.34 & 0.00 & 5.62 & 0.01 & 5.71 & 0.00 & 5.61 & 0.00 \\
$O(n)$ MLP (T,F) & 5.61 & 0.01 & 5.93 & 0.02 & 5.41 & 0.01 & 5.37 & 0.01 & 5.36 & 0.01 & 5.61 & 0.00 & 5.64 & 0.04 & 5.69 & 0.01 \\
$O(n)$ MLP (T,T) & 5.61 & 0.00 & 5.90 & 0.00 & 5.43 & 0.01 & 5.38 & 0.01 & 5.33 & 0.00 & 5.61 & 0.01 & 5.68 & 0.01 & 5.61 & 0.00 \\
$\pi~O(n)$ KAN (F,F) & 5.68 & 0.02 & 6.73 & 0.18 & 5.95 & 0.18 & 5.83 & 0.04 & 5.82 & 0.10 & 5.81 & 0.14 & 6.11 & 0.11 & 6.21 & 0.11 \\
$\pi~O(n)$ KAN (F,T) & 5.69 & 0.02 & 6.27 & 0.10 & 6.24 & 0.13 & 5.91 & 0.01 & 5.65 & 0.06 & 5.82 & 0.00 & 5.96 & 0.12 & 5.94 & 0.14 \\
$\pi~O(n)$ KAN (T,F) & 5.69 & 0.02 & 6.69 & 0.19 & 6.01 & 0.07 & 5.80 & 0.03 & 5.84 & 0.10 & 5.90 & 0.16 & 6.06 & 0.11 & 6.32 & 0.05 \\
$\pi~O(n)$ KAN (T,T) & 5.69 & 0.02 & 6.34 & 0.20 & 6.15 & 0.16 & 5.87 & 0.07 & 5.59 & 0.11 & 5.80 & 0.10 & 5.98 & 0.05 & 5.93 & 0.16 \\
$\pi~O(n)$ MLP (F,F) & 4.28 & 0.39 & 5.73 & 0.10 & 5.55 & 0.08 & 5.41 & 0.05 & 5.07 & 0.16 & 5.27 & 0.03 & 5.41 & 0.06 & 5.58 & 0.07 \\
$\pi~O(n)$ MLP (F,T) & 5.45 & 0.05 & 5.77 & 0.05 & 5.49 & 0.01 & 5.40 & 0.03 & 5.29 & 0.02 & 5.53 & 0.06 & 5.64 & 0.04 & 5.58 & 0.02 \\
$\pi~O(n)$ MLP (T,F) & 3.84 & 0.59 & 5.47 & 0.26 & 5.44 & 0.01 & 5.34 & 0.04 & 3.08 & 2.83 & 4.41 & 0.82 & 5.07 & 0.27 & 5.47 & 0.03 \\
$\pi~O(n)$ MLP (T,T) & 5.34 & 0.06 & 5.58 & 0.11 & 5.46 & 0.01 & 5.37 & 0.03 & 5.17 & 0.03 & 5.48 & 0.07 & 5.49 & 0.05 & 5.45 & 0.08 \\
\bottomrule
\end{tabularx}
\end{minipage}}
\end{table*}

\subsection{MD22}
\autoref{tab:md22_detailed} shows in detail the performance of the different models on the MD22 dataset. 

\begin{table*}
\caption{Huber NLL for the MD22 dataset}
\label{tab:md22_detailed}
\begin{tabularx}{\textwidth}{
@{} Y{1.5} Y{0.9} Y{0.9} Y{0.9} Y{0.9} 
Y{0.9} Y{0.9} Y{0.9} Y{0.9} Y{0.9} Y{0.9} 
 Y{0.9}  Y{0.9} 
@{}
}
\toprule
Dataset (MD22) & AT-AT-CG-CG &  & AT-AT & & Ac-Ala3-NHMe &  & DHA & & buckyball-catcher &  & stachyose &  \\
\midrule
$O(n)$ KAN (F,F) & NaN & NaN & 7.42 & 0.30 & 5.95 & 0.07 & 5.71 & 0.10 & NaN & NaN & NaN & NaN \\
$O(n)$ KAN (F,T) & 7.94 & 0.19 & 7.27 & 0.18 & 5.65 & 0.03 & 5.59 & 0.01 & 8.92 & 0.20 & 6.24 & 0.11 \\
$O(n)$ KAN (T,F) & NaN & NaN & 7.20 & 0.33 & 5.85 & 0.13 & 5.67 & 0.14 & NaN & NaN & NaN & NaN \\
$O(n)$ KAN (T,T) & 8.10 & 0.09 & 7.38 & 0.05 & 5.64 & 0.04 & 5.56 & 0.01 & 8.77 & 0.28 & 6.36 & 0.13 \\
$O(n)$ MLP (F,F) & 7.66 & 0.07 & 6.57 & 0.01 & 5.58 & 0.00 & 5.53 & 0.00 & 7.30 & 0.00 & 5.74 & 0.00 \\
$O(n)$ MLP (F,T) & 7.64 & 0.03 & 6.57 & 0.01 & 5.58 & 0.00 & 5.51 & 0.00 & 7.27 & 0.00 & 5.70 & 0.01 \\
$O(n)$ MLP (T,F) & 7.57 & 0.04 & 6.56 & 0.01 & 5.57 & 0.01 & 5.53 & 0.00 & 7.25 & 0.02 & 5.70 & 0.02 \\
$O(n)$ MLP (T,T) & 7.59 & 0.05 & 6.55 & 0.03 & 5.56 & 0.00 & 5.51 & 0.00 & 7.27 & 0.01 & 5.67 & 0.00 \\
$\pi~O(n)$ KAN (F,F) & NaN & NaN & 6.60 & 0.02 & 5.58 & 0.01 & 5.49 & 0.00 & NaN & NaN & NaN & NaN \\
$\pi~O(n)$ KAN (F,T) & 7.71 & 0.04 & 6.61 & 0.07 & 5.56 & 0.00 & 5.51 & 0.01 & 7.43 & 0.08 & 5.70 & 0.02 \\
$\pi~O(n)$ KAN (T,F) & 7.75 & NaN & 6.61 & 0.02 & 5.57 & 0.00 & 5.50 & 0.01 & NaN & NaN & 5.77 & NaN \\
$\pi~O(n)$ KAN (T,T) & 7.74 & 0.07 & 6.64 & 0.03 & 5.56 & 0.01 & 5.52 & 0.01 & 7.39 & 0.06 & 5.73 & 0.03 \\
$\pi~O(n)$ MLP (F,F) & NaN & NaN & NaN & NaN & -0.04 & 1.53 & -0.59 & 0.11 & NaN & NaN & NaN & NaN \\
$\pi~O(n)$ MLP (F,T) & 1.25 & 0.36 & 2.10 & 0.36 & 3.76 & 0.20 & 2.09 & 0.35 & 0.61 & 1.21 & 1.27 & 1.82 \\
$\pi~O(n)$ MLP (T,F) & NaN & NaN & -1.29 & 0.22 & 0.23 & 0.26 & -2.68 & 1.40 & NaN & NaN & NaN & NaN \\
$\pi~O(n)$ MLP (T,T) & 0.39 & 0.28 & 1.64 & 0.62 & 1.99 & 2.35 & 1.35 & 1.43 & -0.18 & 0.20 & 1.46 & 1.01 \\
\bottomrule
\end{tabularx}
\end{table*}

\section{Model parameters}

\begin{table}[]
    \centering
    \begin{tabular}{c|c| p{5cm}}
        \toprule
        \textbf{Parameter} & \textbf{Value} & \textbf{Comment} \\
        \midrule
        Number of epochs & 500  & We use 500 for the MD17 and MD22, while 1000 for the LJ experiments\\
        batch size & 4092 & \\
        loss & Huber & We selected Huber, compared to MSE, since it enables better training \\
        em lr & 0.01,  & learning rate for energy minimization for LJ experiments \\
        em niters  & 500 & number of steps for energy minimization for LJ experiments \\
        learning rate & 0.001 & we experimented with multiple rate and fix this for all experiments \\
        num samples  & 10000 & We fix the number of samples, if the dataset contains more data, we first permute the data (same for all experiments) and select the first 10000 samples. \\
        trsamples & 8000 & we split 80/20 training and testing \\
        optimizer & AdamW & \\
        weight decay & $1e-9$ & Weight decay is used to stabilize the training\\
        scheduler & ReduceLROnPlateau & The scheduler helps with different system requirement\\
        KAN layers & [input dim, 16, 16, 1] & the architecture size has been selected in the hyper-parameter search\\
        KAN orders &  [8,8,8] & This is the number of basis per function\\
        KAN Basis & ReLU& While KAN networks use Spline as basis, we experimented with ReLU, GeLU, Sigmoid, and Chebichev Polynomial, ReLU provided the most reliable solution\\
        MLP layers & [input dim, 128, 128, 1]  & the architecture size has been selected in the hyper-parameter search\\
         \bottomrule
    \end{tabular}
    \caption{Hyper-parameters used during training}
    \label{tab:hyper-params}
\end{table}

\subsection{Hyper-parameters and Hyper-parameter search}
\autoref{tab:hyper-params} show the hyper-parameters used during training for the MLP and KAN-based architectures. We implemented a separate hyper-parameter search on both MLP and KAN architecture based on the synthetic dataset, we tested the different sizes of architecture: small (128/16), medium (256/32), and large (512/64); and selected the small for both systems. 

While KAN networks use Spline as the basis, we experimented with ReLU, GeLU, Sigmoid, and Chebichev Polynomial, ReLU provided the most reliable solution across test cases. 

\subsection{LJ}
\autoref{tab:num_param_lj43} shows the number of parameters per model for the LJ experiments with $m=4$ and $n=3$. The impact of the presentation is already visible. KAN is always smaller. 
\autoref{tab:num_param_lj153} and \autoref{tab:num_param_lj155} show the network size for $m=15$ and $n=3,5$. As the input increases the KAN has more parameters than the equivalent MLP.

\begin{table}
\caption{Network sizes during the $4/3$ experiments}
\centering
\label{tab:num_param_lj43}
\begin{tabular}{lllr}
\toprule
system & model & options & size \\
\midrule
m4/n3 & $O(n)$ KAN & FF & 9911 \\
m4/n3 & $O(n)$ KAN & FT & 9911 \\
m4/n3 & $O(n)$ KAN & TF & 12044 \\
m4/n3 & $O(n)$ KAN & TT & 12044 \\
m4/n3 & $O(n)$ MLP & FF & 22145 \\
m4/n3 & $O(n)$ MLP & FT & 22145 \\
m4/n3 & $O(n)$ MLP & TF & 23681 \\
m4/n3 & $O(n)$ MLP & TT & 23681 \\
m4/n3 & $\pi O(n)$ KAN & FF & 4167 \\
m4/n3 & $\pi O(n)$ KAN & FT & 4167 \\
m4/n3 & $\pi O(n)$ KAN & TF & 4475 \\
m4/n3 & $\pi O(n)$ KAN & TT & 4475 \\
m4/n3 & $\pi O(n)$ MLP & FF & 17665 \\
m4/n3 & $\pi O(n)$ MLP & FT & 17665 \\
m4/n3 & $\pi O(n)$ MLP & TF & 17921 \\
m4/n3 & $\pi O(n)$ MLP & TT & 17921 \\
\bottomrule
\end{tabular}
\end{table}

\begin{table}
\caption{Network sizes during the $15/3$ experiments}
\label{tab:num_param_lj153}
\centering
\begin{tabular}{lllr}
\toprule
system & model & options & size \\
\midrule
m15/n3 & $O(n)$ KAN & FF & 250887 \\
m15/n3 & $O(n)$ KAN & FT & 63691 \\
m15/n3 & $O(n)$ KAN & TF & 371803 \\
m15/n3 & $O(n)$ KAN & TT & 87687 \\
m15/n3 & $O(n)$ MLP & FF & 110849 \\
m15/n3 & $O(n)$ MLP & FT & 51713 \\
m15/n3 & $O(n)$ MLP & TF & 137729 \\
m15/n3 & $O(n)$ MLP & TT & 61697 \\
m15/n3 & $\pi O(n)$ KAN & FF & 4167 \\
m15/n3 & $\pi O(n)$ KAN & FT & 4167 \\
m15/n3 & $\pi O(n)$ KAN & TF & 4475 \\
m15/n3 & $\pi O(n)$ KAN & TT & 4475 \\
m15/n3 & $\pi O(n)$ MLP & FF & 17665 \\
m15/n3 & $\pi O(n)$ MLP & FT & 17665 \\
m15/n3 & $\pi O(n)$ MLP & TF & 17921 \\
m15/n3 & $\pi O(n)$ MLP & TT & 17921 \\
\bottomrule
\end{tabular}
\end{table}

\begin{table}
\caption{Network sizes during the $15/5$ experiments}
\label{tab:num_param_lj155}
\centering
\begin{tabular}{lllr}
\toprule
system & model & options & size \\
\midrule
m15/n5 & $O(n)$ KAN & FF & 250887 \\
m15/n5 & $O(n)$ KAN & FT & 111906 \\
m15/n5 & $O(n)$ KAN & TF & 371803 \\
m15/n5 & $O(n)$ KAN & TT & 159216 \\
m15/n5 & $O(n)$ MLP & FF & 110849 \\
m15/n5 & $O(n)$ MLP & FT & 70529 \\
m15/n5 & $O(n)$ MLP & TF & 137729 \\
m15/n5 & $O(n)$ MLP & TT & 85889 \\
m15/n5 & $\pi O(n)$ KAN & FF & 4167 \\
m15/n5 & $\pi O(n)$ KAN & FT & 4167 \\
m15/n5 & $\pi O(n)$ KAN & TF & 4475 \\
m15/n5 & $\pi O(n)$ KAN & TT & 4475 \\
m15/n5 & $\pi O(n)$ MLP & FF & 17665 \\
m15/n5 & $\pi O(n)$ MLP & FT & 17665 \\
m15/n5 & $\pi O(n)$ MLP & TF & 17921 \\
m15/n5 & $\pi O(n)$ MLP & TT & 17921 \\
\bottomrule
\end{tabular}
\end{table}

\subsection{MD17}

\autoref{tab:num_param_md17} shows the number of parameters for the models used in the experiments. The permutation invariant version reduces the need for parameters considerably.

\begin{table}
\caption{Network sizes during the aspirin experiments}
\label{tab:num_param_md17}
\centering
\begin{tabular}{lllr}
\toprule
dataset & model & options & size \\
\midrule
aspirin & $O(n)$ KAN & FF & 1186625 \\
aspirin & $O(n)$ KAN & FT & 147811 \\
aspirin & $O(n)$ KAN & TF & 1692200 \\
aspirin & $O(n)$ KAN & TT & 197535 \\
aspirin & $O(n)$ MLP & FF & 258689 \\
aspirin & $O(n)$ MLP & FT & 82433 \\
aspirin & $O(n)$ MLP & TF & 312449 \\
aspirin & $O(n)$ MLP & TT & 97025 \\
aspirin & $\pi O(n)$ KAN & FF & 4475 \\
aspirin & $\pi O(n)$ KAN & FT & 4475 \\
aspirin & $\pi O(n)$ KAN & TF & 4783 \\
aspirin & $\pi O(n)$ KAN & TT & 4783 \\
aspirin & $\pi O(n)$ MLP & FF & 17921 \\
aspirin & $\pi O(n)$ MLP & FT & 17921 \\
aspirin & $\pi O(n)$ MLP & TF & 18177 \\
aspirin & $\pi O(n)$ MLP & TT & 18177 \\
\bottomrule
\end{tabular}
\end{table}

\subsection{MD22}

As for the MD17 dataset, also for MD22, \autoref{tab:num_param_md22} shows the number of parameters for the models used in the experiments. The permutation invariant version reduces the need for parameters considerably.
\begin{table}
\caption{Network sizes during the AT-AT-CG-CG experiments}
\label{tab:num_param_md22}
\centering
\begin{tabular}{lllr}
\toprule
dataset & model & options & size \\
\midrule
AT-AT-CG-CG & $O(n)$ KAN & FF & 974480535 \\
AT-AT-CG-CG & $O(n)$ KAN & FT & 2938488 \\
AT-AT-CG-CG & $O(n)$ KAN & TF & 1453151821 \\
AT-AT-CG-CG & $O(n)$ KAN & TT & 4256886 \\
AT-AT-CG-CG & $O(n)$ MLP & FF & 7969025 \\
AT-AT-CG-CG & $O(n)$ MLP & FT & 417665 \\
AT-AT-CG-CG & $O(n)$ MLP & TF & 9736193 \\
AT-AT-CG-CG & $O(n)$ MLP & TT & 506753 \\
AT-AT-CG-CG & $\pi O(n)$ KAN & FT & 4475 \\
AT-AT-CG-CG & $\pi O(n)$ KAN & TF & 4783 \\
AT-AT-CG-CG & $\pi O(n)$ KAN & TT & 4783 \\
AT-AT-CG-CG & $\pi O(n)$ MLP & FF & 17921 \\
AT-AT-CG-CG & $\pi O(n)$ MLP & FT & 17921 \\
AT-AT-CG-CG & $\pi O(n)$ MLP & TF & 18177 \\
AT-AT-CG-CG & $\pi O(n)$ MLP & TT & 18177 \\
\bottomrule
\end{tabular}
\end{table}

\section{Ablation study in invariants}
\label{sec:ablation}

\begin{table}
\caption{Hubert NLL or the Buckyball-catcher system of the MD22 dataset, with the linear version of the representation and with the node id for the $\pi O(n)$ KAN model.}
\label{tab:ablation}
\centering
\begin{tabular}{lrrrr}
\toprule
Features & Train NLL &  & Test NLL &  \\
\midrule
cos & 6.80 & $\pm$0.12 & 6.49 & $\pm$0.03 \\
sin-cos & 6.67 & $\pm$0.08 & 5.97 & $\pm$0.62 \\
n1 & 6.77 & $\pm$0.07 & 6.63 & $\pm$0.05 \\
n1-n12 & 6.70 &$\pm$0.16 & 6.48 & $\pm$0.20 \\
n12 & 6.65 & $\pm$0.02 & 6.47 & $\pm$0.11 \\
inner & 6.69 & $\pm$0.00 & 4.69 & $\pm$2.50 \\
inner-n1 & 6.79 & $\pm$0.01 & 6.64 & $\pm$0.09 \\
inner-n1-n12 & 6.65 & $\pm$0.25 & 6.45 & $\pm$0.20 \\
inner-outer & 6.82 & $\pm$0.03 & {\bf 6.67} & $\pm$0.00 \\
inner-outer-n1 & 6.58 & $\pm$0.21 & 6.48 & $\pm$0.27 \\
inner-outer-n1-n12 & 6.82 & $\pm$0.01 & {\bf 6.67} & $\pm$0.02 \\
\bottomrule
\end{tabular}
\end{table}

\cref{tab:ablation} shows the effect of using different invariant features on the performance in terms of NLL for the Buckyball-catcher system of the MD22 dataset. 

We first define some quantities:
\begin{align*}
\|\bm{x}_i \otimes \bm{y}_j \| &= \sqrt{\|\bm{x}_i \|^2 \|\bm{y}_j\|^2-\langle \bm{x}_i,\bm{y}_j \rangle^2} \\
\overline{ \|\bm{x}_i \otimes \bm{y}_j \|}  &= \|\bm{x}_i \otimes \bm{y}_j \| /(\|\bm{x}_i\|\|\bm{y}_j\|), \\
\overline{ \langle \bm{x}_i,\bm{y}_j \rangle}  &= \langle \bm{x}_i,\bm{y}_j \rangle /(\|\bm{x}_i\|\|\bm{y}_j\|), \\
\|\bm{x}_i \otimes \bm{y}_j \| &= \sqrt{\|\bm{x}_i \|^2 \|\bm{y}_j\|^2-\langle \bm{x}_i,\bm{y}_j ,\rangle^2} 
\end{align*}

We can now define the features used as input to the representation, which are:
\begin{align*}
\text{n1: } &  \|\bm{x}_i \|, \|\bm{y}_j\|, \\
\text{n12: } & \|\bm{x}_i-\bm{y}_j\|, \\
\text{inner: } & \langle \bm{x}_i,\bm{y}_j \rangle, \\
\text{outer: } & \|\bm{x}_i \otimes \bm{y}_j \|\\
\text{cos: } & \overline{ \langle \bm{x}_i,\bm{y}_j \rangle} , \\
\text{sin: } & \overline{ \|\bm{x}_i \otimes \bm{y}_j \|}\\
\nonumber
\end{align*}

\begin{table}
\caption{More detailed ablation study, showing the Hubert NLL 
synthetic dataset $m=5,n=2$.
}
\label{tab:detailed-ablation}
\centering
\begin{tabular}{llllrrrr}
\toprule
 &  &  &  & train & std & test & std \\
method & feature & Node Id & Linear &  &  &  &  \\
\midrule
\multirow[t]{13}{*}{$O(n)$ KAN } & \multirow[t]{2}{*}{all} & \multirow[t]{2}{*}{False} & False & 6.20 & 0.30 & 6.15 & 0.37 \\
 &  &  & True & 6.20 & 0.29 & 6.15 & 0.37 \\
\cline{2-8} \cline{3-8}
 & \multirow[t]{2}{*}{inner-outer} & \multirow[t]{2}{*}{False} & False & 6.20 & 0.30 & 6.16 & 0.37 \\
 &  &  & True & 6.20 & 0.30 & 6.16 & 0.37 \\
\cline{2-8} \cline{3-8}
 & \multirow[t]{2}{*}{n1} & \multirow[t]{2}{*}{False} & False & 6.19 & 0.30 & 6.17 & 0.37 \\
 &  &  & True & 6.18 & 0.30 & 6.17 & 0.37 \\
\cline{2-8} \cline{3-8}
 & \multirow[t]{2}{*}{n12} & \multirow[t]{2}{*}{False} & False & 6.19 & 0.30 & 6.17 & 0.37 \\
 &  &  & True & 6.18 & 0.30 & 6.17 & 0.37 \\
\cline{2-8} \cline{3-8}
 & \multirow[t]{2}{*}{sin-cos} & \multirow[t]{2}{*}{False} & False & 5.90 & 0.31 & 5.91 & 0.41 \\
 &  &  & True & 5.91 & 0.30 & 5.91 & 0.41 \\
\cline{1-8} \cline{2-8} \cline{3-8}
\multirow[t]{13}{*}{$\pi O(n)$ MLP} & \multirow[t]{2}{*}{all} & False & False & 6.23 & 0.41 & 6.22 & 0.53 \\
\cline{3-8}
 &  & True & False & 6.23 & 0.40 & 6.22 & 0.52 \\
\cline{2-8} \cline{3-8}
 & \multirow[t]{2}{*}{inner-outer} & False & False & 6.16 & 0.34 & 6.10 & 0.48 \\
\cline{3-8}
 &  & True & False & 6.19 & 0.30 & 6.17 & 0.37 \\
\cline{2-8} \cline{3-8}
 & \multirow[t]{2}{*}{n1} & False & False & 6.19 & 0.29 & 6.17 & 0.37 \\
\cline{3-8}
 &  & True & False & 6.23 & 0.40 & 6.21 & 0.50 \\
\cline{2-8} \cline{3-8}
 & \multirow[t]{2}{*}{n12} & False & False & 6.14 & 0.36 & 6.15 & 0.40 \\
\cline{3-8}
 &  & True & False & 6.21 & 0.42 & 6.09 & 0.69 \\
\cline{2-8} \cline{3-8}
 & \multirow[t]{2}{*}{sin-cos} & False & False & 5.92 & 0.30 & 5.91 & 0.41 \\
\cline{3-8}
 &  & True & False & 5.92 & 0.30 & 5.91 & 0.41 \\
\cline{1-8} \cline{2-8} \cline{3-8}
\multirow[t]{13}{*}{$O(n)$ MLP} & \multirow[t]{2}{*}{all} & False & False & 6.20 & 0.29 & 6.17 & 0.38 \\
\cline{3-8}
 &  & True & False & 6.20 & 0.29 & 6.18 & 0.38 \\
\cline{2-8} \cline{3-8}
 & \multirow[t]{2}{*}{inner-outer} & False & False & 6.16 & 0.34 & 6.10 & 0.48 \\
\cline{3-8}
 &  & True & False & 6.19 & 0.30 & 6.17 & 0.37 \\
\cline{2-8} \cline{3-8}
 & \multirow[t]{2}{*}{n1} & False & False & 6.19 & 0.29 & 6.17 & 0.37 \\
\cline{3-8}
 &  & True & False & 6.19 & 0.29 & 6.17 & 0.36 \\
\cline{2-8} \cline{3-8}
 & \multirow[t]{2}{*}{n12} & False & False & 6.14 & 0.36 & 6.15 & 0.40 \\
\cline{3-8}
 &  & True & False & 6.18 & 0.30 & 6.09 & 0.49 \\
\cline{2-8} \cline{3-8}
 & \multirow[t]{2}{*}{sin-cos} & False & False & 5.92 & 0.30 & 5.91 & 0.41 \\
\cline{3-8}
 &  & True & False & 5.92 & 0.30 & 5.91 & 0.41 \\
\cline{1-8} \cline{2-8} \cline{3-8}
\bottomrule
\end{tabular}
\end{table}

\begin{figure}[ht]
    \centering
    \includegraphics[width=0.9\linewidth]{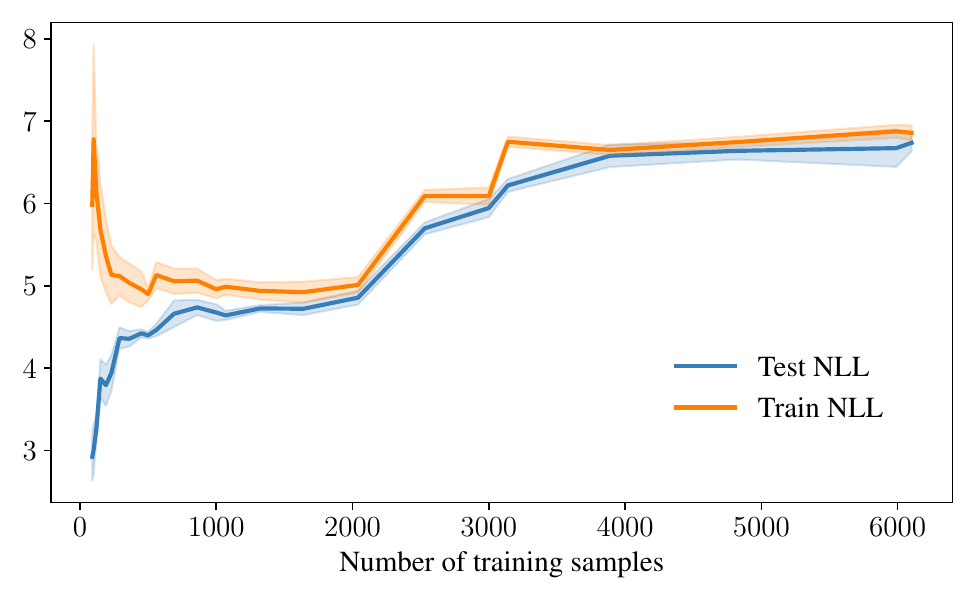}
    \caption{The train and test accuracy in terms of NLL for the Buckyball-catcher system of the MD22 dataset with invariant features inner-outer-n1-n12. }
    \label{fig:sample-size-analysis}
\end{figure}

\begin{table}
\caption{Hubert NLL for the MD22 dataset for $\pi O(n)$ KAN}
\label{tab:sample-size-analysis}
\Centering
\begin{tabular}{crrrr}
\toprule
Num. samples & Train NLL  & Test NLL  \\
\midrule
100 & 6.78 $^{1.15}$ & 3.01 $^{ 0.31}$ \\
500 & 4.90 $^{0.08}$ & 4.40 $^{ 0.04}$ \\
1000 & 4.96 $^{0.11}$ & 4.68 $^{ 0.10}$ \\
3000 & 6.09 $^{0.11}$ & 5.95 $^{ 0.11}$ \\
6102 & 6.86 $^{0.09}$ & 6.74 $^{ 0.10}$ \\
\bottomrule
\end{tabular}
\end{table}

\section{Efficiency of training: group invariance and number of samples}
\label{sec:samples}
In \cref{fig:sample-size-analysis} and \cref{tab:sample-size-analysis}, we show the effect of the sample size for training the invariant representation. In order to obtain a high test accuracy, more than $50\%$ of the data is necessary. 

\section{Additional Related work}

\paragraph{Symmetry preserving machine learning architecture} 
Machine learning interatomic potentials (MLIPs) 
have emerged as powerful tools for modeling interatomic interactions in molecular and materials systems, offering a computationally efficient alternative to traditional ab initio methods. Architectures like Schnet \cite{schutt2017schnet} use continuous-filter convolutional layers to capture local atomic environments and message passing, enabling accurate predictions of molecular properties. To further enhance physical expressivity, $E(3)$-equivariant architectures \cite{thomas2018tensor} have been developed, which respect the symmetries of Euclidean space (rotations, translations, and reflections) by design. These models, such as Tensor Field Networks \cite{thomas2018tensor} and NequIP \cite{batzner2022nequip}, ensure that predictions (i.e. energy and forces) are invariant or equivariant to transformations in 3D space, making them highly data-efficient for tasks like force field prediction in molecular dynamics. 
MACE \cite{Batatia_Kovács_Simm_Ortner_Csányi_2023} is a higher-order equivariant message-passing network that enhances force field accuracy and efficiency by leveraging multi-body interactions. 
E(n)-equivariant GNNs (EGNNs) \cite{Satorras_Hoogeboom_Welling_2022} implement a higher-order representation while maintaining equivariance to rotations, translations, and permutations. 
Irreducible Cartesian Tensor Potential (ICTP)  \cite{Zaverkin_Alesiani_Maruyama_Errica_Christiansen_Takamoto_Weber_Niepert_2024} introduces irreducible Cartesian tensors for equivariant message passing, offering computational advantages over spherical harmonics in the small tensor rank regime. Tensor field networks \cite{Thomas_Smidt_Kearnes_Yang_Li_Kohlhoff_Riley_2018} and Equiformer \cite{Liao_Smidt_2023} use spherical harmonics as bases for tensors. While SO3krates \cite{Frank_Unke_Müller_Chmiela_2024a} combines sparse equivariant representations with transformers to balance accuracy and speed.
Additionally, equivariant Clifford networks \cite{ruheCliffordGroupEquivariant2023b}
extend this framework by incorporating geometric algebra to build equivariant models. 
Equivariant representations mitigate cumulative errors in molecular dynamics  \cite{Unke_Chmiela_Sauceda_Gastegger_Poltavsky_Schütt_Tkatchenko_Müller_2021},  while 
directional message passing with spherical harmonics improves angular dependency modeling as implemented in DimeNet \cite{Gasteiger_Groß_Günnemann_2022}.  
Equivariant or invariant architectures enhance data efficiency, accuracy, and physical consistency in tasks where input symmetries (e.g., rotation, reflection, translation) dictate output invariance or equivariance.
In collider physics, jet-tagging is the problem of identifying the type of particles that have generated the particle collision jet. The collision jet exhibits space-time symmetry, the Lorentz boost. Symmetry-preserving architecture for the Lorentz group have been proposed architecture based on high-order tensor products as LoLa \cite{butter2018deep}, LBN \cite{erdmann2019lorentz} LGN \cite{bogatskiy2020lorentz}, and LorentzNet \cite{gong2022efficient}, which introduce Minkowski dot product attention. Finally, permutation preserving models have been proposed to model function over sets, as DeepSet and subsequent models \cite{zaheer2017deep,amir2023neural}.
While these advancements have significantly improved the accuracy and efficiency of MLIPs for applications in chemistry, physics, and materials science, the advantage of KAN architecture has not yet been explored, we thus take a fundamental step in this direction with our study. 

\paragraph{KAN Architectures}
Kolmogorov-Arnold Networks (KANs) are inspired by the Kolmogorov-Arnold representation theorem, which provides a theoretical foundation for approximating multivariate functions using univariate functions and addition. Early work by Hecht-Nielsen (1987) \cite{hecht1987kolmogorov} introduced one of the first neural network architectures based on this theorem, demonstrating its potential for efficient function approximation. 
\cite{lai2021kolmogorov} study the approximation capability of KST-based models in high dimensions and how they could potentially break the curse of dimension \cite{poggio2022deep}. 
\cite{ferdausKANICEKolmogorovArnoldNetworks2024} propose to combine  Convolutional Neural Networks (CNNs) with Kolmogorov Arnold Network (KAN) principles.
Additionally, 
\cite{yangKolmogorovArnoldTransformer2024}
explored the integration of KAN principles into transformer models, achieving improvements in efficiency for sequence modeling tasks. 
\cite{huEKANEquivariantKolmogorovArnold2024a} propose EKAN, an approximation method for incorporating matrix group equivariance into KANs. While these studies highlight the versatility of KAN architectures in adapting to various neural network frameworks, the extension to physical and geometrical symmetries has not been fully considered.

\paragraph{Application of KAN}
KANs have been applied to a range of machine learning tasks, particularly in scenarios requiring efficient function approximation. For instance, Kůrková (1991) \cite{kuurkova1992kolmogorov} demonstrated the effectiveness of KANs in high-dimensional regression problems, where traditional neural networks often struggle with scalability. In the natural language processing domain, \cite{Galitsky2024} utilized KAN for word-level explanations.  
Furthermore, 
\cite{decarlo2024kolmogorovarnoldgraphneuralnetworks}
applied KANs to graph-based learning tasks, showing that their hybrid models could achieve state-of-the-art results in graph classification and node prediction. 
KAN has been used as a function approximation to solve PDE \cite{wang2024kolmogorovarnoldinformedneural,shukla2024comprehensivefaircomparisonmlp} for both forward and backward problems with highly complex boundary and initial conditions.
\cite{aghaei2024rkanrationalkolmogorovarnoldnetworks} extends KAN with rational polynomials basis to regression and classifications problems. \cite{Seydi2024} explores using Wavelet as basis functions to model hyper-spectral data. KANs have been extended to model time-series \cite{xu2024kolmogorovarnoldnetworkstimeseries,inzirillo2024sigkansignatureweightedkolmogorovarnoldnetworks} to dynamically adapt to temporal data. 
While these, and other \cite{somvanshiSurveyKolmogorovArnoldNetwork2024},  applications highlight the practical utility of KANs in solving complex real-world problems, a significant class of molecular applications remains overlooked. 




\end{document}